\theoremstyle{plain}
\newtheorem{theorem}{Theorem}[section]
\newtheorem{lemma}[theorem]{Lemma}
\newtheorem{corollary}[theorem]{Corollary}
\theoremstyle{definition}
\newtheorem{definition}[theorem]{Definition}
\theoremstyle{remark}
\definecolor{custom_blue}{HTML}{1F77B4}
\definecolor{custom_orange}{HTML}{FF7F0E}
\definecolor{custom_purple}{HTML}{9467BD}
\definecolor{custom_green}{HTML}{2CA02C}
\definecolor{custom_red}{HTML}{D62728}
\newcommand{\blue}{\raisebox{2pt}{\tikz{\draw[custom_blue, solid, line width=2.3pt](0,0) -- (5mm,0);}}}
\newcommand{\orange}{\raisebox{2pt}{\tikz{\draw[custom_orange, solid, line width=2.3pt](0,0) -- (5mm,0);}}}
\newcommand{\green}{\raisebox{2pt}{\tikz{\draw[custom_green, solid, line width=2.3pt](0,0) -- (5mm,0);}}}
\newcommand{\purple}{\raisebox{2pt}{\tikz{\draw[custom_purple, solid, line width=2.3pt](0,0) -- (5mm,0);}}}
\icmltitlerunning{Safe and Robust Experience Sharing for Deterministic Policy Gradient Algorithms}
\begin{document}

\twocolumn[
\icmltitle{Safe and Robust Experience Sharing for\\Deterministic Policy Gradient Algorithms}




\begin{icmlauthorlist}
\icmlauthor{Baturay Saglam}{bilkent}
\icmlauthor{Dogan C. Cicek}{bilkent}
\icmlauthor{Furkan B. Mutlu}{bilkent}
\icmlauthor{Suleyman S. Kozat}{bilkent}
\end{icmlauthorlist}

\icmlaffiliation{bilkent}{Department of Electrical and Electronics Engineering, Bilkent University, 06800 Bilkent, Ankara, Turkey}

\icmlcorrespondingauthor{Baturay Saglam}{baturay@ee.bilkent.edu.tr}

\icmlkeywords{Deep Reinforcement Learning, Off-Policy Learning, Experience Sharing}

\vskip 0.3in
]



\printAffiliationsAndNotice{}  

\begin{abstract}
Learning in high dimensional continuous tasks is challenging, mainly when the experience replay memory is very limited. We introduce a simple yet effective experience sharing mechanism for deterministic policies in continuous action domains for the future off-policy deep reinforcement learning applications in which the allocated memory for the experience replay buffer is limited. To overcome the extrapolation error induced by learning from other agents' experiences, we facilitate our algorithm with a novel off-policy correction technique without any action probability estimates. We test the effectiveness of our method in challenging OpenAI Gym continuous control tasks and conclude that it can achieve a safe experience sharing across multiple agents and exhibits a robust performance when the replay memory is strictly limited.  
\end{abstract}


\section{Introduction}
Off-policy deep reinforcement learning requires large amounts of interactions with the environment to obtain optimal policies~\cite{trust_region_estimators}. As the observation and action spaces of an environment start to increase and more challenging tasks are introduced, the memory requirement for the experience replay~\cite{experience_replay} dramatically increases~\cite{off_policy}. Therefore, regardless of the experience replay sampling algorithms, with limited memory, off-policy deep RL algorithms should exhibit high-level performance for future real-world applications.

Sharing experience among concurrent agents remains an effective alternative when the available off-policy data is limited as it can allow faster convergence due to diverse exploration~\cite{dual_policy_dist}. However, learning from other agents' experiences may lead to the \textit{extrapolation error}, a phenomenon caused by the mismatch between the distributions corresponding to the off-policy data collected by a different agent and the latest agent's policy~\cite{off_policy}. The extrapolation error may lead unseen state-action pairs to be erroneously estimated and have unrealistic values~\cite{off_policy}. Hence, for safe and reliable experience sharing among multiple agents, off-policy correction (or importance sampling) is required to eliminate the extrapolation error induced by other agents' experiences. Although off-policy correction and experience sharing mechanisms are well-studied artifacts for discrete~\cite{impala,munos_safe,trust_region_estimators} and continuous~\cite{a3c} action domains through the action probabilities of stochastic policies, in the deterministic and continuous policy case, action probability estimation and thus, importance sampling, is not a possible option by the nature of the policies as there is not any probability distribution from which the actions are sampled.

Motivated by the possible restrictions to the allocated memory of the replay buffer and limitations of deterministic policies, we introduce an actor-critic architecture that enables a diverse and robust parallel learning. Our approach is not affected by extrapolation error by safely correcting the experiences gathered by multiple agents. An extensive set of experiments demonstrate that with only two agents that learn the environment in parallel, our architecture obtains an optimal performance when the size of the replay buffer is very limited. Moreover, we show that the introduced algorithm can significantly improve the state-of-the-art even when the replay buffer is unlimited. Ultimately, our ablation studies validate that the extrapolation occurs when the off-policy samples are not corrected, and our modifications can enable effective filtering to overcome this problem.  Our code and results are available at the GitHub repository\footnote{\url{https://github.com/baturaysaglam/DASE}\label{our_repo}}.

\section{Technical Preliminaries}
We follow the standard reinforcement learning paradigm, where at each discrete time step $t$, the agent observes a state $s$ and chooses an action $a$; then, it receives a reward $r$ and observes a new state $s'$. The policy of an agent aims to maximize the \textit{value} defined as the expected cumulative discounted returns $R_{t} = \sum_{i = 0}^{\infty}\gamma^{i}r_{t + i}$, where $\gamma \in [0, 1)$ is a discount factor to prioritize the short-term rewards. A policy $\pi_{\phi}(\cdot)$, parameterized by $\phi$, is stochastic if it maps states to action probabilities, $a \sim \pi_{\phi}(\cdot|s)$, or deterministic if it maps states to unique actions, $a = \pi_{\phi}(s)$. The action-value function (Q-function or critic) evaluates the action decisions of an agent in terms of the value $R_{t}$. The deep Q-network, $Q_{\theta}$ with parameters $\theta$, estimates action-values (or Q-values).  

In off-policy learning, an agent encounters transitions generated by a family of behavior policies. We consider a multiple agent case where $K$ agents explore the same environment asynchronously and store their experiences in a shared replay buffer. At every update step, the agent samples a batch of transitions through a sampling algorithm that may contain on- and off-policy samples:
\begin{gather}
    (\boldsymbol{S}_{I}^{|\mathcal{B}| \times m}, \boldsymbol{A}_{I}^{|\mathcal{B}| \times n}, \boldsymbol{R}_{I}^{|\mathcal{B}| \times 1}, \boldsymbol{S}_{I}^{'|\mathcal{B}| \times m}) \sim \mathcal{B}_{I},\label{eq:internal_batch} \\ (\boldsymbol{S}_{E}^{|\mathcal{B}| \times m}, \boldsymbol{A}_{E}^{|\mathcal{B}| \times n}, \boldsymbol{R}_{E}^{|\mathcal{B}| \times 1}, \boldsymbol{S}_{E}^{'|\mathcal{B}| \times m}) \sim \mathcal{B}_{E},\label{eq:external_batch}
\end{gather}
where $|\mathcal{B}|$ is the number of transitions in the sampled batch, $m$ and $n$ are the state and action dimensions, respectively, and bold letters represent vectors or matrices of row vectors. We categorize transitions into the ones executed by the agent in interest and the ones executed by other agents which we call \textit{internal} \textit{(own)} and \textit{external} experiences, respectively. We also refer to entities that corresponds to these two types of experiences as, again, \textit{external} and \textit{internal} entities, e.g., \textit{external policies}, \textit{internal actions}, \textit{external states}. Therefore, $\mathcal{B}_{I}$ and $\mathcal{B}_{E}$ are the internal and external parts of the sampled mixed batch, respectively, yielding $\mathcal{B}_{I} \cup \mathcal{B}_{E} = \mathcal{B}$ and $\mathcal{B}_{I} \cap \mathcal{B}_{E} = \emptyset$. 

\section{Method}
\subsection{Deterministic Policy Similarity}
To enable a safe experience sharing across multiple and independent agents that learn in parallel, we first aim to mitigate the extrapolation error~\cite{off_policy}. We obtain this by constructing a novel policy similarity metric for deterministic policies. Before presenting the primary component of our architecture, we start with a basic assumption on the actions chosen by the deterministic policy. We assume without loss of generality that each continuous action selected by a behavioral policy is a sample of a multivariate Gaussian distribution which is not known during the training. Intuitively, each dimension in an action vector is correlated to the rest of the dimensions. This is realistic since each dimension in an action vector often has effects on the other dimensions~\cite{mujoco}. Mainly, the mean vector represents the deterministic action chosen by the policy, and the covariance matrix represents the noise introduced by the exploration, deep function approximation, and bootstrapping in Q-learning~\cite{q_learning}.

Having our assumption made, we now show how to derive the similarity weights for the off-policy experiences. The agent samples a batch of off-policy transitions corresponding to different behavioral policies in each gradient step. We know that given the states $\boldsymbol{S}^{|\mathcal{B}| \times m}$, each action in the experience replay buffer~\cite{experience_replay} corresponds to a multivariate Gaussian distribution $\mathcal{N}(\boldsymbol{\mu}^{n \times 1}, \boldsymbol{\Sigma}^{n \times n})$ with mean vector $\boldsymbol{\mu}^{n \times 1}$ and covariance matrix $\boldsymbol{\Sigma}^{n \times n}$. To measure the similarity between the current policy and the policies that executed the off-policy transitions in the external batch $\mathcal{B}_{E}$, we first forward pass the states from $\mathcal{B}_{E}$ through the behavioral actor network corresponding to the current policy:
\begin{equation}
\label{eq:det_current_actions}
    \boldsymbol{\hat{A}}_{E}^{|\mathcal{B}_{E}| \times n} = \pi_{\phi}(\boldsymbol{S}_{E}^{|\mathcal{B}_{E}| \times m}).
\end{equation}
We now have the batch of current policy's decisions on the states from the off-policy transitions $\boldsymbol{\hat{A}}_{E}^{|\mathcal{B}_{E}| \times n}$, and the batch of past policies' decisions $\boldsymbol{A}_{E}^{|\mathcal{B}_{E}| \times n}$ from $\mathcal{B}_{E}$. Let $\boldsymbol{\dot{A}}_{E}^{|\mathcal{B}_{E}| \times n}$ be the batch of numerical differences in the action decisions:
\begin{equation}
\label{eq:action_diff_batch_def}
    \boldsymbol{\dot{A}}^{|\mathcal{B}_{E}| \times n} \coloneqq \boldsymbol{A}_{E}^{|\mathcal{B}_{E}| \times n} - \boldsymbol{\hat{A}}_{E}^{|\mathcal{B}_{E}| \times n}.
\end{equation}
Observe that $\boldsymbol{\dot{A}}_{E}^{|\mathcal{B}_{E}| \times n}$ indicates the deviation between the current policy and previous behavioral policies of the agent that generated the off-policy transitions. To construct a multivariate Gaussian distribution from the action difference batch, let:
\begin{gather}
\label{eq:multi_var_gaussian_mean}
\boldsymbol{\dot{\mu}}^{n \times 1} = \frac{1}{|\mathcal{B}_{E}|}\smashoperator{\sum_{i = 1}^{|\mathcal{B}_{E}|}}(\boldsymbol{\dot{A}}^{|\mathcal{B}_{E}| \times n}_{i})^\top, \\
\boldsymbol{\dot{\Sigma}}^{n \times n} = \frac{1}{|\mathcal{B}_{E}| - 1}\smashoperator{\sum_{i = 1}^{|\mathcal{B}_{E}|}}\boldsymbol{a}^{n \times 1}_{i}(\boldsymbol{a}^{n \times 1}_{i})^\top, \label{eq:multi_var_gaussian_cov_matrix}
\end{gather}
where $\boldsymbol{\dot{A}}^{|\mathcal{B}_{E}| \times n}_{i}$ represents the action as a row vector corresponding to the $i^{\text{th}}$ transition and $\boldsymbol{a}^{n \times 1}_{i} = (\boldsymbol{\dot{A}}^{|\mathcal{B}_{E}| \times n}_{i})^\top - \boldsymbol{\dot{\mu}}^{n \times 1}$. Then, define the dissimilarity measure as:
\begin{equation}
\label{eq:complete_dissimilarity_measurement}
    \rho = \mathrm{JSD}(\mathcal{N}(\boldsymbol{\dot{\mu}}^{n \times 1}, \boldsymbol{\dot{\Sigma}}^{n \times n}) \parallel \mathcal{N}(\boldsymbol{0}^{n \times 1}, \sigma\boldsymbol{I}^{n \times n})),
\end{equation}
where $\mathrm{JSD}$ is the Jensen-Shannon divergence, $\sigma$ is the standard deviation of the exploration noise, and $\boldsymbol{I}$ is the identity matrix. We do not directly compare $\mathcal{N}(\boldsymbol{\dot{\mu}}^{n \times 1}, \boldsymbol{\dot{\Sigma}}^{n \times n})$ with a zero multivariate Gaussian since the policies closer to the current policy may be rejected as the actions may deviate from the policy's actual action decisions due to the additive exploration noise. Furthermore, we choose JSD for asymmetric similarity measurement as the similarity of two policy distributions should not be assumed to be directed. Although KL-divergence is well-known for penalizing a distribution that is completely different from the distribution in interest, two policies in the same environment cannot be completely distinct~\cite{sutton_book}. Naturally, if all the internal and external experiences correspond to the same policy, then $\rho = 0$ and $\rho \in (0, \infty)$ otherwise. To project the similarity measure into the interval $[0, 1]$, a non-linear transformation can be applied:
\begin{equation}
\label{eq:complete_similarity_measurement}
    \boldsymbol{\lambda}^{|\mathcal{B|}_{E} \times 1} = [e^{-\rho},\ e^{-\rho},\ \dots,\ e^{-\rho}]^\top.
\end{equation}
We choose the exponential function for non-linear transformation to slowly smooth the dissimiliarity. A sharp smoothing would be very greedy and may penalize the external transitions too much. Observe that two identical policies have $\boldsymbol{\lambda}^{|\mathcal{B}_{E}| \times 1} = \boldsymbol{1}$, and distinct policies have $\boldsymbol{\lambda}^{|\mathcal{B}_{E} \times 1} = \boldsymbol{0}$, making Equation (\ref{eq:complete_similarity_measurement}) a similarity measure between two policies. This forms the backbone of our architecture, which we refer Deterministic Policy Similarity (DPS), summarized in Algorithm \ref{alg:js_importance_weighting}.

Intuitively, DPS first computes the numerical difference between the actions chosen by the current and external policies, then compares the difference with zero. This is equivalent to comparing the distributions of the current policy and the policies that executed the external transitions under the multivariate Gaussian distribution assumption. One concern with DPS may be that the minority of the transitions within the batch $\mathcal{B}_{E}$ may be executed by the policies very similar to the current agent's policy. Since we take the average of external action batch in computing the similarity weight, i.e., Equation (\ref{eq:multi_var_gaussian_mean}), those transitions may be weighted by a fixed weight close to 0, which results in loss of information. Nevertheless, since the function approximators in off-policy actor-critic methods are often optimized through mini-batch learning, it should be expected that policies correspond to the majority of the transitions in $\mathcal{B}_{E}$ must be close to the current policy~\cite{off_policy}. 

\subsection{Deterministic Actor-Critic with Shared Experience}
Now, we are ready to introduce our architecture, Deterministic Actor-Critic with Shared Experience (DASE). DASE considers multiple agents, each of which explores different copies of the same environment, i.e., they learn in parallel and does not interact with each other except for a shared experience replay buffer. At every update step, each agent samples a batch of transitions and updates its actor and critic networks by combining internal gradients and DPS weighted external gradients. Through DPS, our architecture enables agents to safely use other agents' experiences by resolving the issues with stability due to the potential exploding gradients, i.e., similarity weights restricted are within the interval $[0, 1]$, and extrapolation error~\cite{off_policy}, i.e., by allowing only the transitions correlated to the distribution under the current policy. 

\begin{figure*}[h]
    \centering
    \vspace{-0.4cm}
    \begin{align*}
        &\text{{\blue} TD3 (single agent)} \quad &&\text{{\orange} TD3 + DASE ($1^{\text{st}}$ agent)} \\
        &\text{\text{{\purple} TD3 + DPD (average of two agents)}} \quad &&\text{{\green} TD3 + DASE ($2^{\text{nd}}$ agent)}
    \end{align*}
	\subfloat[\textbf{Replay Size:} 100,000]{
		\includegraphics[width=1.29in, keepaspectratio]{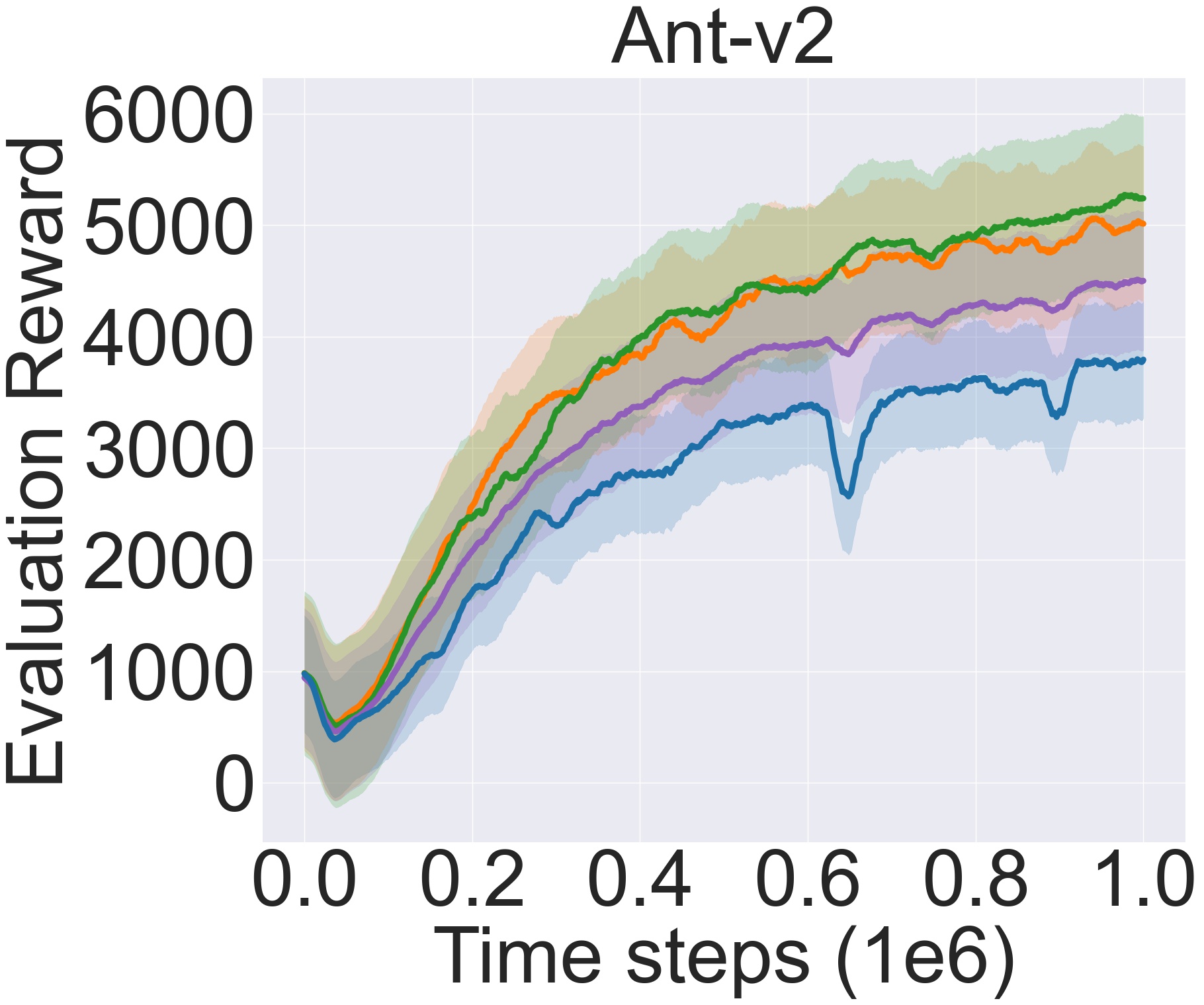}
		\includegraphics[width=1.29in, keepaspectratio]{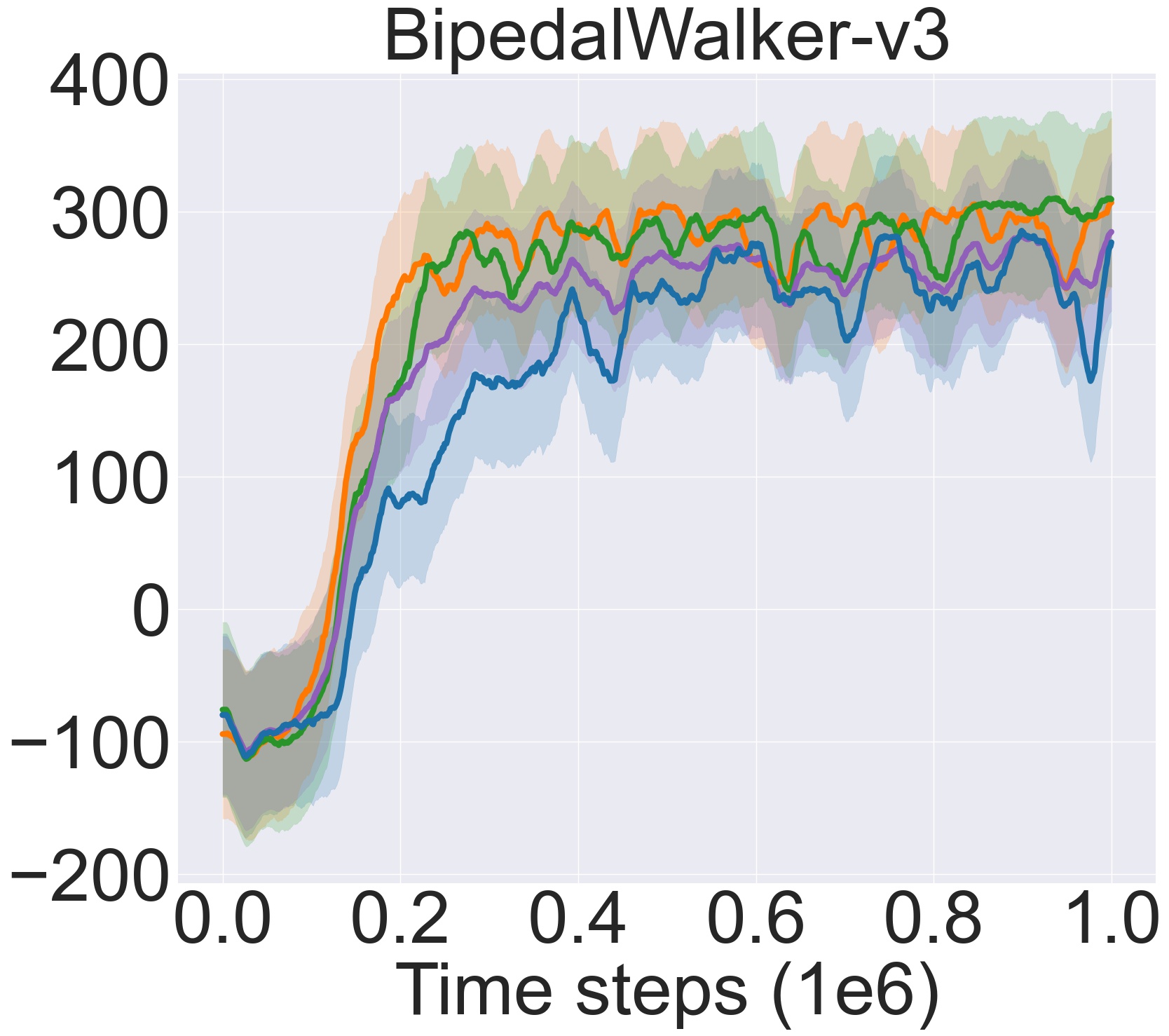}
		\includegraphics[width=1.29in, keepaspectratio]{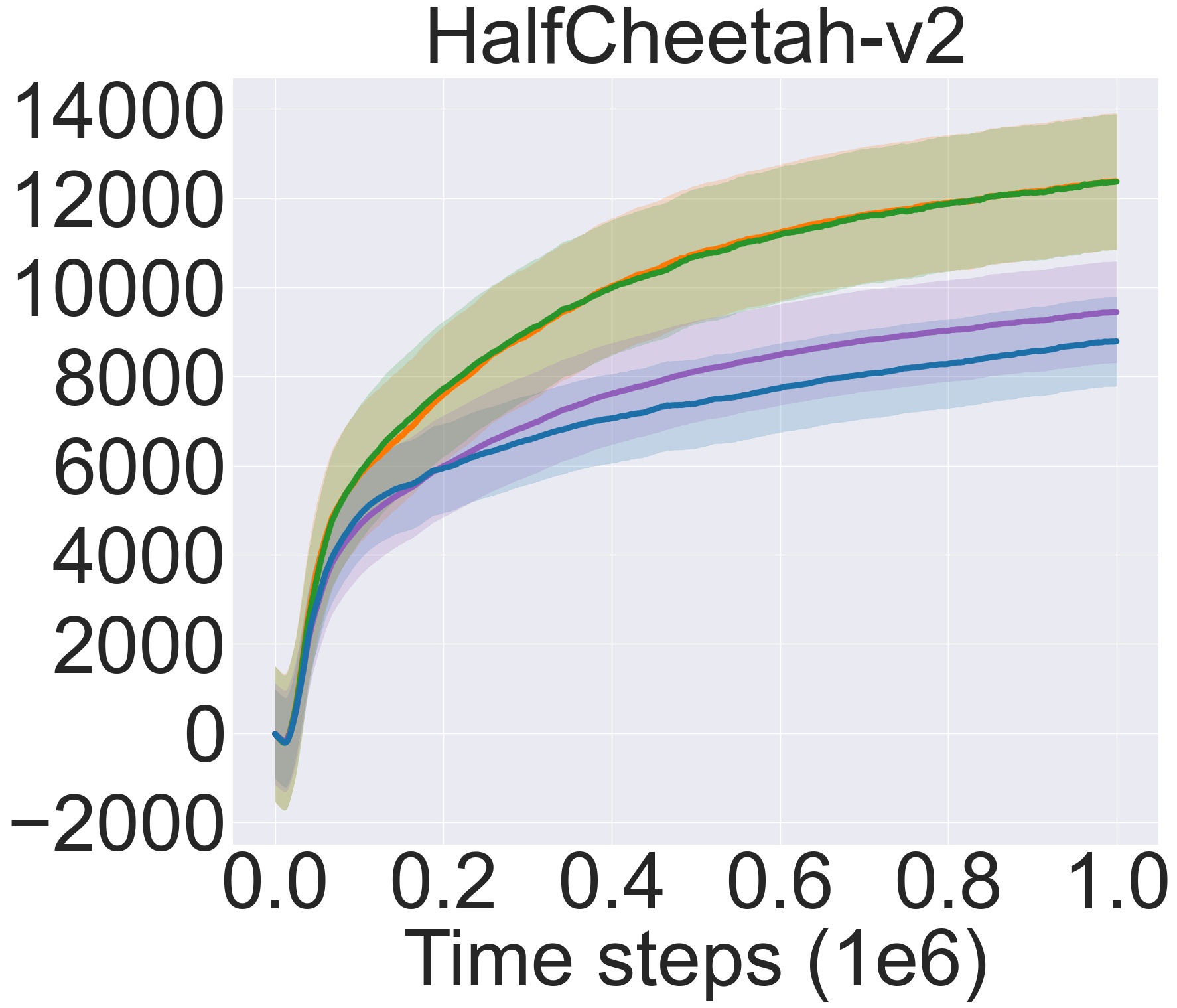}
		\includegraphics[width=1.29in, keepaspectratio]{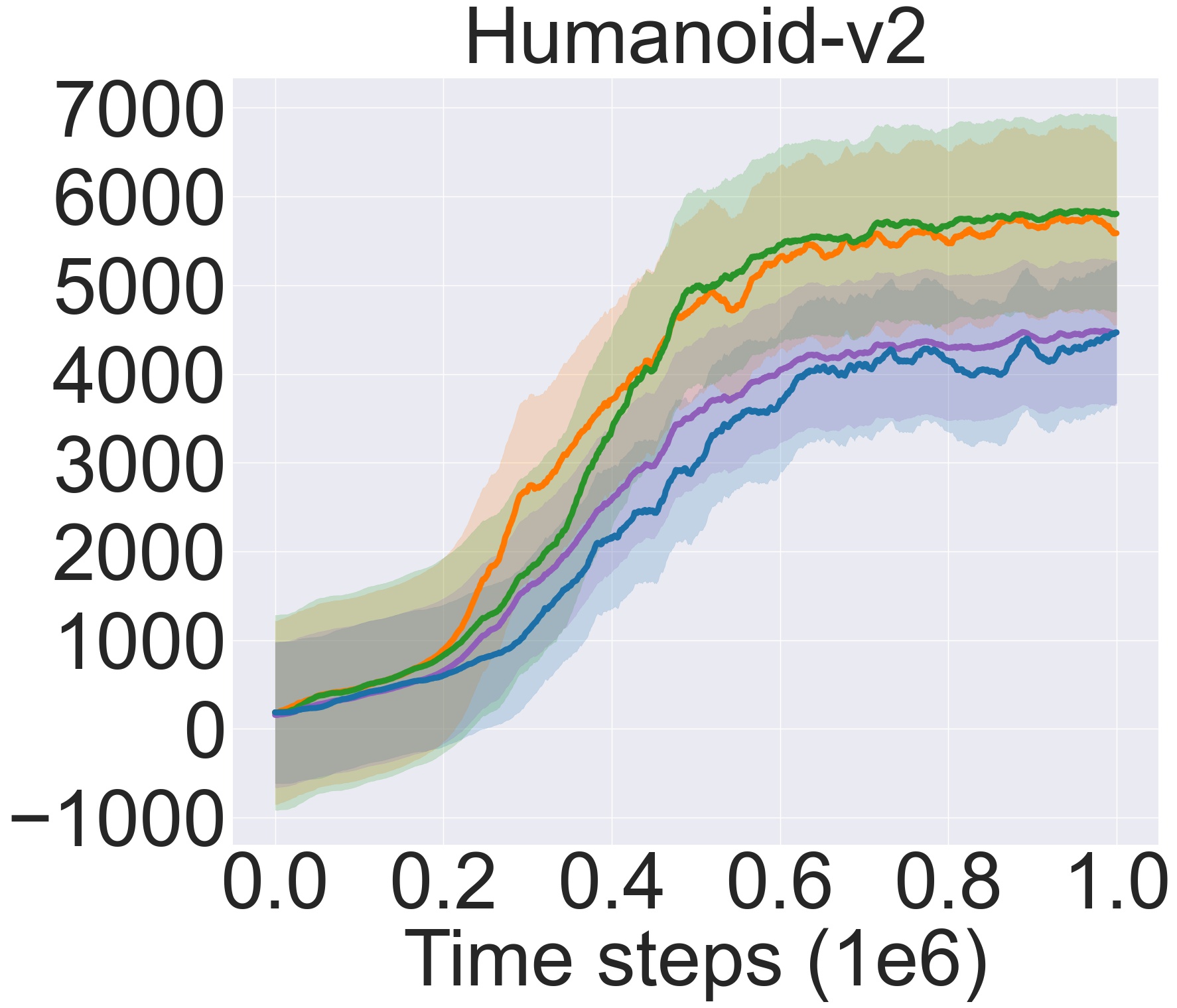}
		\includegraphics[width=1.29in, keepaspectratio]{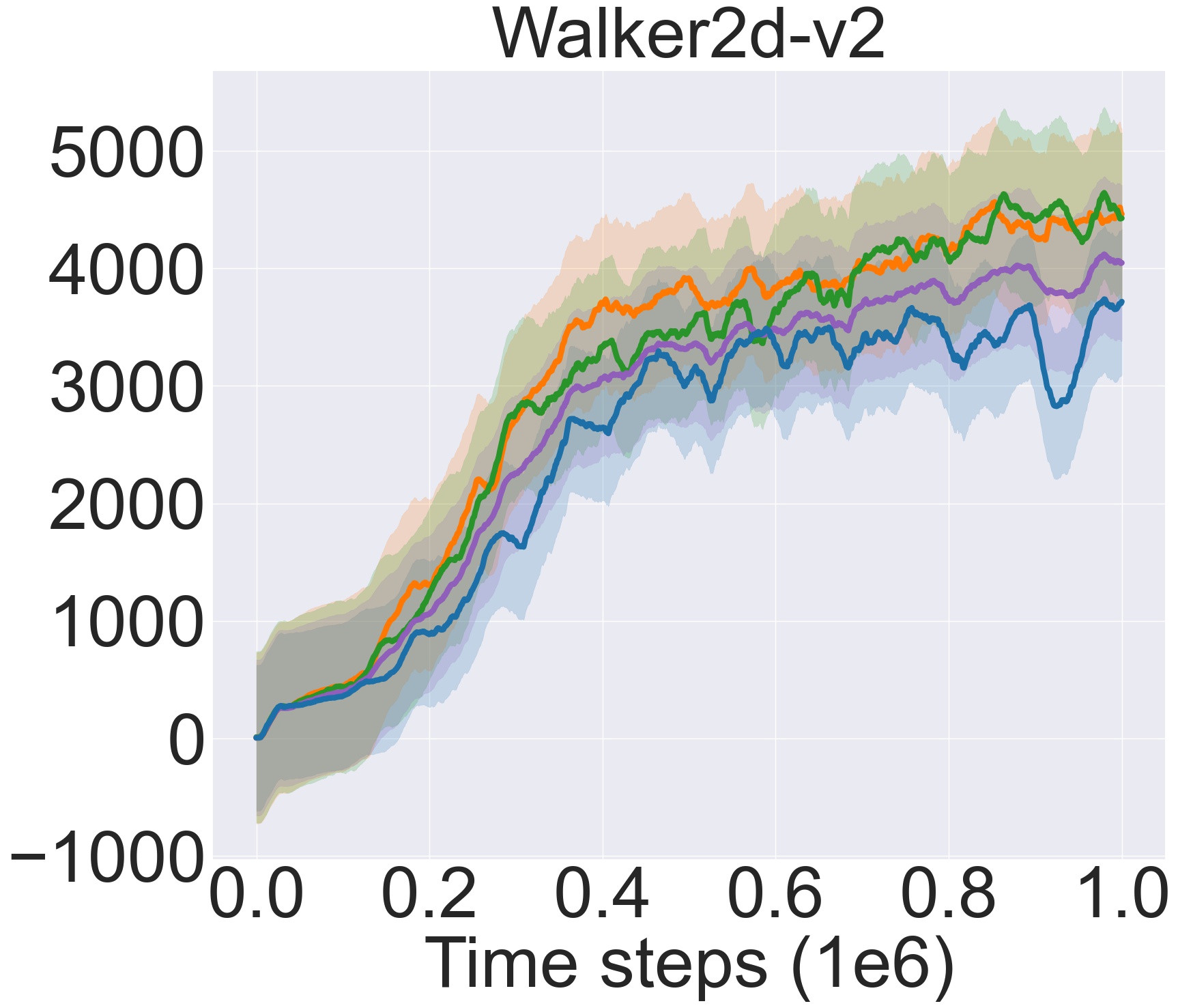}
	} \\ 
	\subfloat[\textbf{Replay Size:} 1,000,000]{
	\includegraphics[width=1.29in, keepaspectratio]{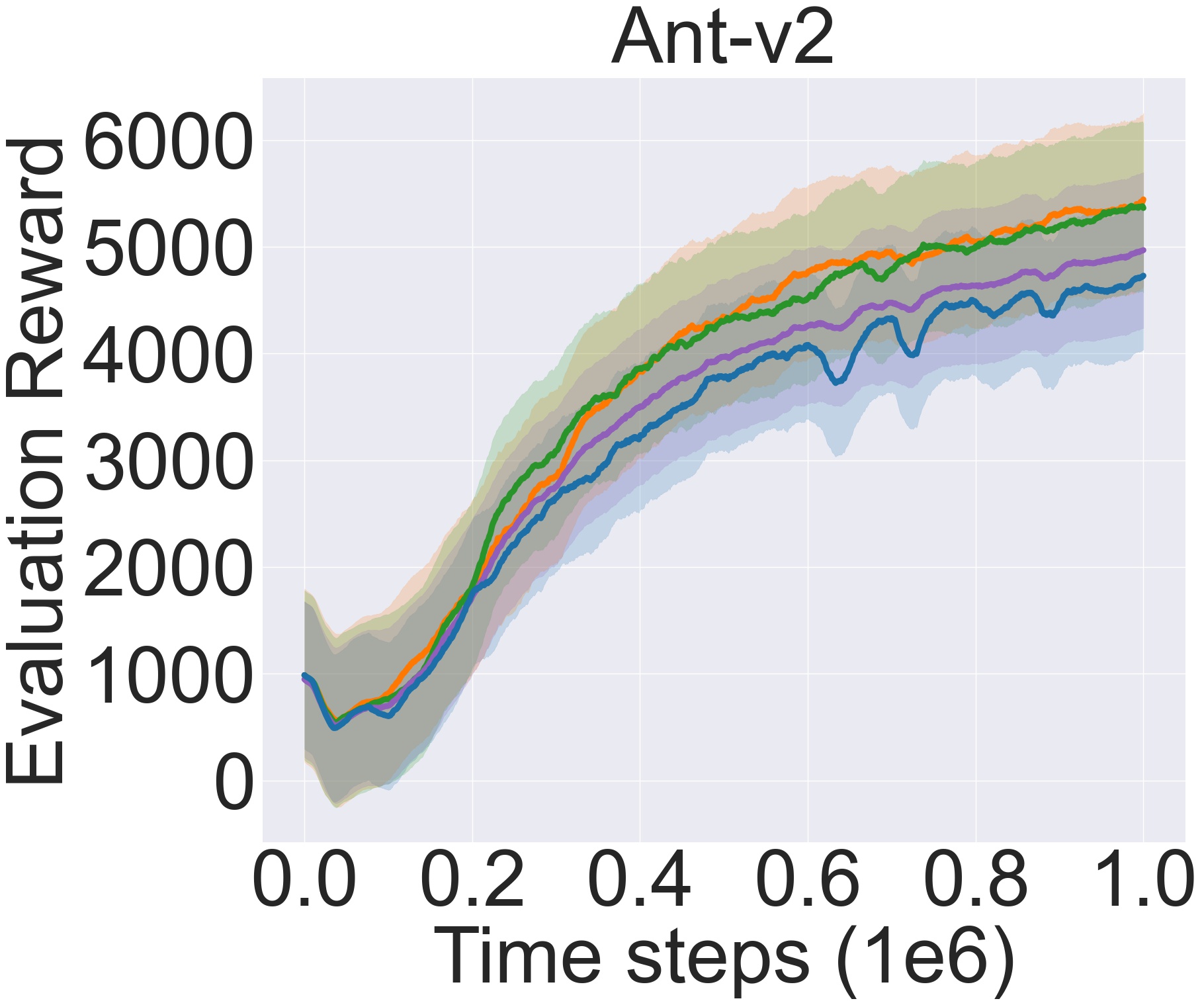}
	\includegraphics[width=1.29in, keepaspectratio]{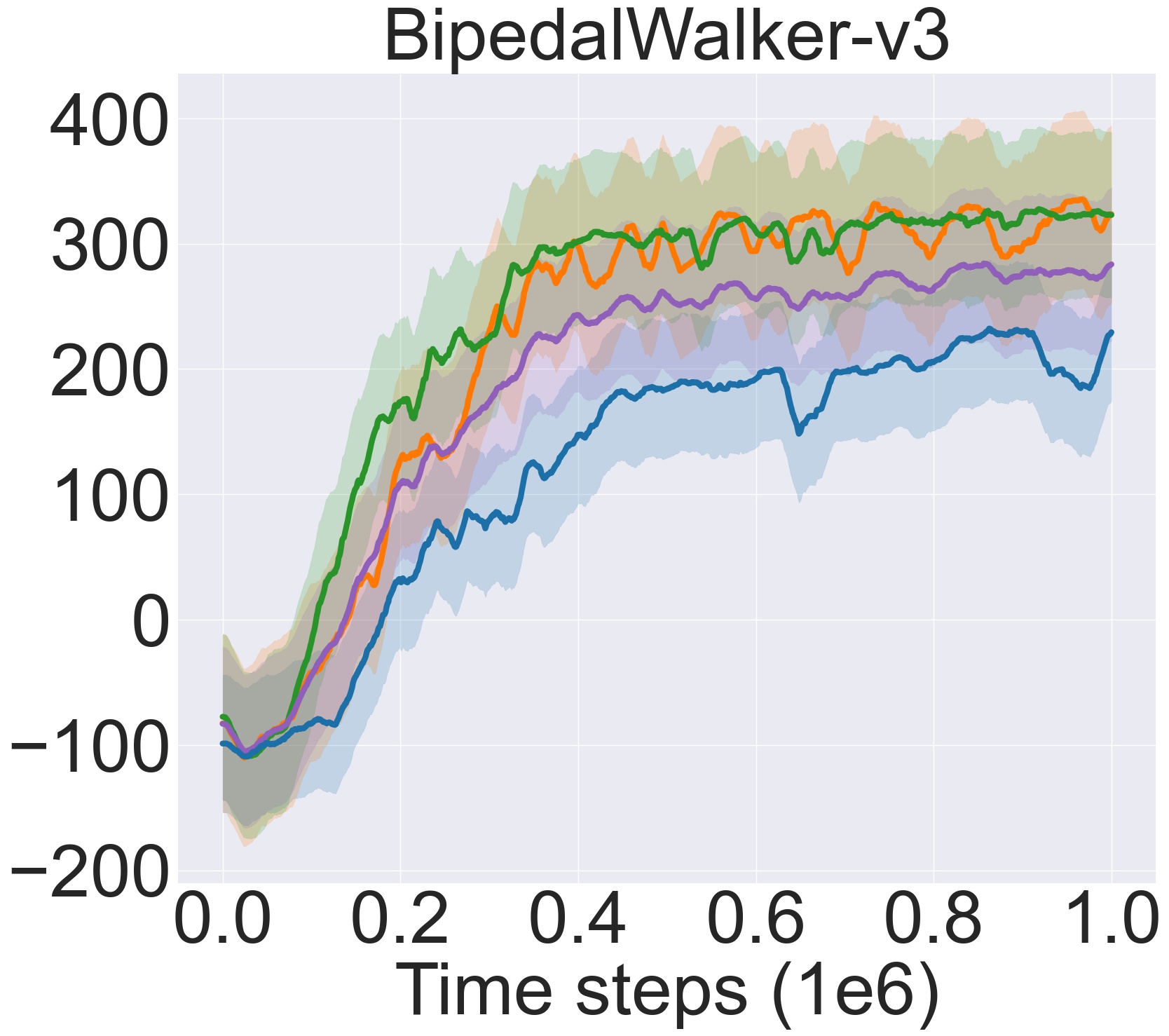}
	\includegraphics[width=1.29in, keepaspectratio]{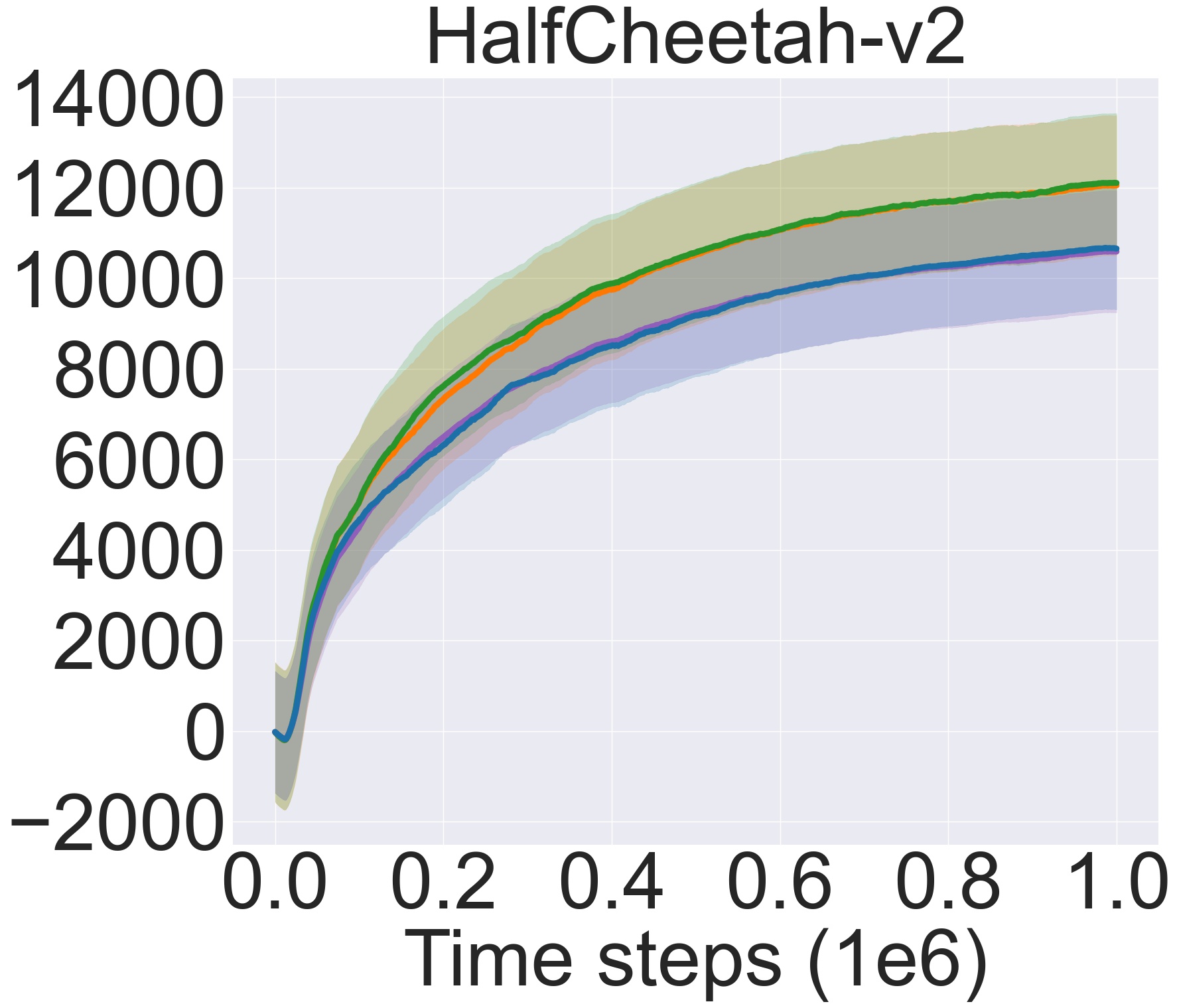}
	\includegraphics[width=1.29in, keepaspectratio]{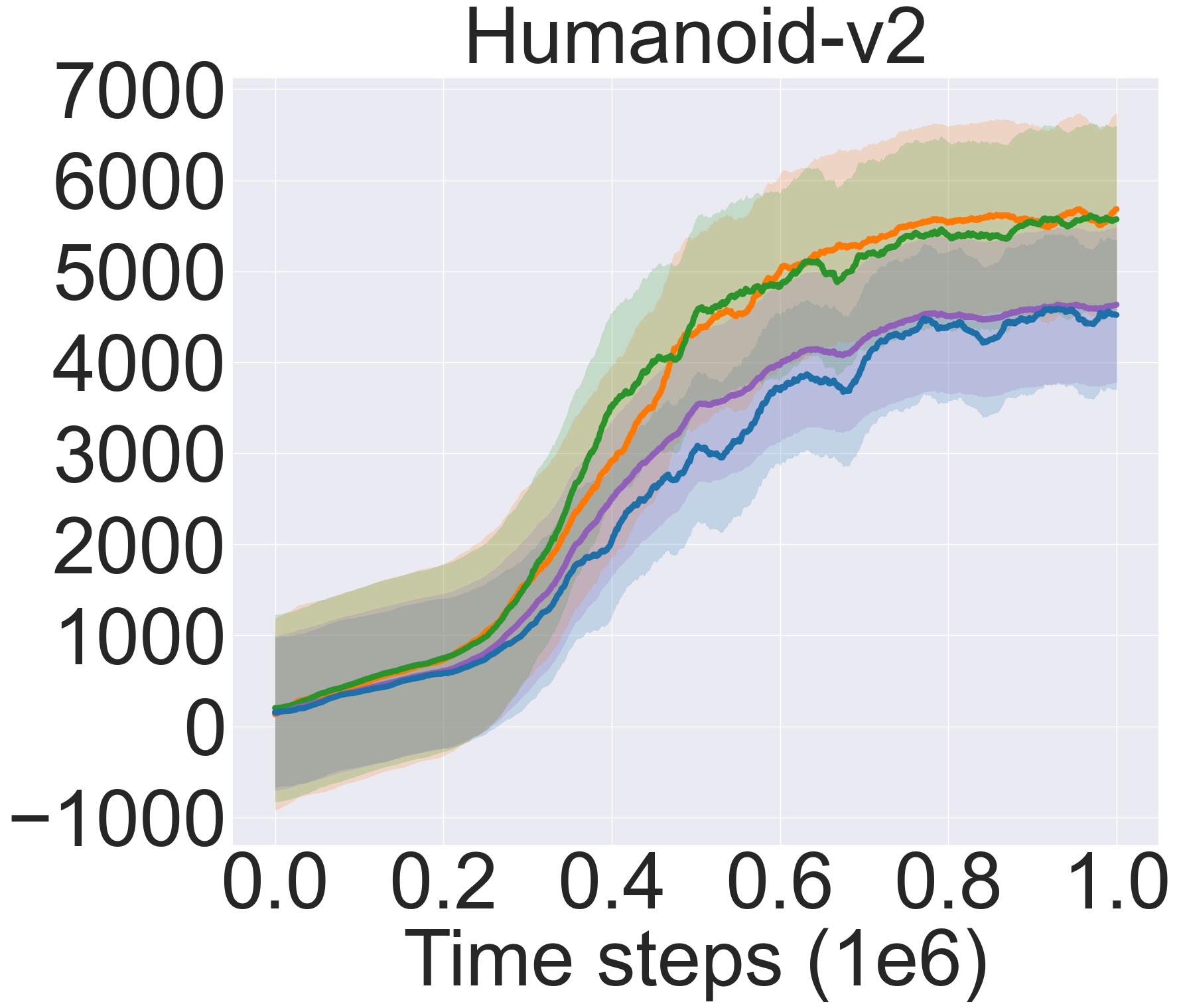}
	\includegraphics[width=1.29in, keepaspectratio]{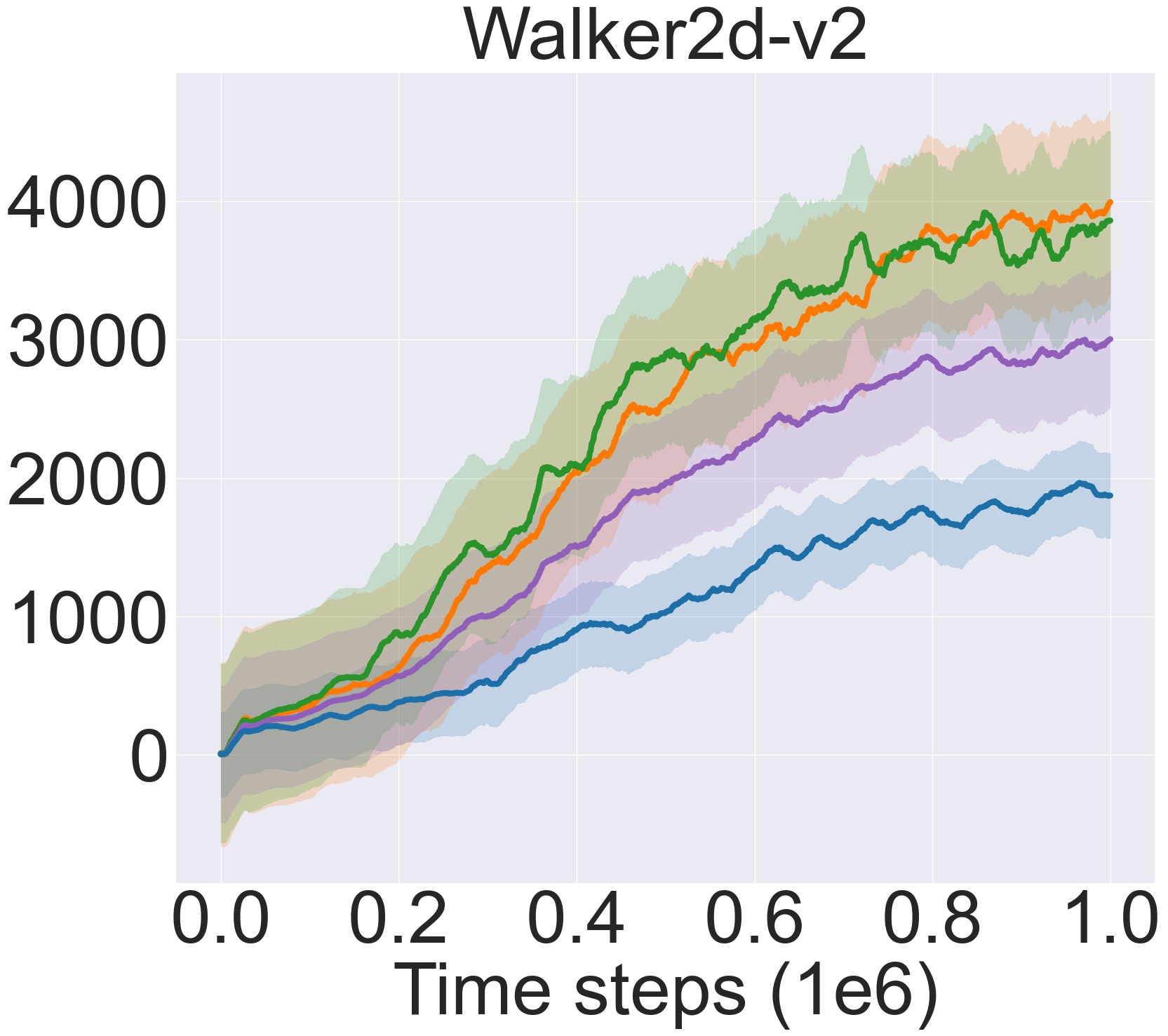}
	}
	\caption{Learning curves for the set of OpenAI Gym continuous control tasks when replay size is 1 million and 100,000. The shaded region represents half a standard deviation of the average evaluation return over 10 random seeds. A sliding window smoothes curves for visual clarity.}
	\label{fig:eval_results}
\end{figure*}
This simple architecture can accelerate learners using different GPUs and agents to be distributed across many machines for the cases in which a single learner fails to reach optimal returns, e.g., limited memory for the replay buffer. DASE can also form ensembles of deterministic actor-critic methods and sampling algorithms to solve challenging continuous control tasks by safely sharing information. However, due to the asynchronous nature of the architecture, some agents may be several updates ahead of the rest, also known as \textit{policy lag}~\cite{impala}. Nonetheless, such a policy lag is corrected by DPS by maintaining only the external policies closer to the distribution under the current policy. 

We perform extensive theoretical analysis on our approach. Notably, Theorem \ref{thm:case_conv} provides a convergence guarantee for Q-learning \cite{q_learning} under DASE and Corollary \ref{cor:safe_experience_sharing} proves that DPS produces accurate importance weights such that a safe experience sharing can be achieved. All proofs are in Appendix \ref{sec:theorems} and the pseudocode for our hyper-parameter-free algorithm is given in Algorithm \ref{alg:DASE} through learner threads.

\begin{theorem}
\label{thm:case_conv}
Under the Robbins-Monro stochastic convergence conditions on the learning rate $\eta$ and standard sampling requirements from the environment, Q-learning with the DASE architecture converges to the optimal value function $Q^{*}$.
\end{theorem}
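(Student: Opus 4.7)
The plan is to reduce the claim to the classical stochastic approximation convergence theorem for tabular Q-learning (in the spirit of Jaakkola--Jordan--Singh or Tsitsiklis), by viewing the DASE update as a noisy contraction iteration. First I would rewrite the per-step update induced by DASE in the canonical form
\begin{equation*}
Q_{t+1}(s,a) = Q_t(s,a) + \tilde{\eta}_t(s,a)\bigl[(\mathcal{T}Q_t)(s,a) - Q_t(s,a) + w_t(s,a)\bigr],
\end{equation*}
where $\mathcal{T}$ is the Bellman optimality operator, $w_t(s,a)$ is the usual zero-mean Q-learning sampling noise, and the effective per-pair learning rate $\tilde{\eta}_t(s,a)$ equals $\eta_t$ on the internal stream and $\eta_t\lambda_t$ on the external stream, with $\lambda_t\in[0,1]$ supplied by Equation~(\ref{eq:complete_similarity_measurement}).

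Next I would verify the three hypotheses of the stochastic approximation theorem. The Bellman optimality operator is a $\gamma$-contraction in sup norm, so the contraction assumption holds verbatim. For the noise, the Q-learning target $r+\gamma\max_{a'}Q_t(s',a')$ is policy-independent, so it remains an unbiased estimator of $(\mathcal{T}Q_t)(s,a)$ irrespective of which agent generated the transition; combined with the standard bounded-reward assumption, this yields $\mathbb{E}[w_t(s,a)\mid\mathcal{F}_t]=0$ and conditional variance bounded by an affine function of $\|Q_t\|_\infty^2$. These checks are identical for internal and external samples, which is the reason extrapolation error does not appear in the tabular analysis.

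The crux is the learning-rate condition. The standard sampling requirements imply that each $(s,a)$ is visited infinitely often along the agent's own trajectory, and on that subsequence $\tilde{\eta}_t(s,a)=\eta_t$; the Robbins--Monro conditions $\sum_t\tilde{\eta}_t(s,a)=\infty$ and $\sum_t\tilde{\eta}_t(s,a)^2<\infty$ therefore follow directly from the assumption on $\eta_t$. External contributions only add further updates with multipliers in $[0,1]$, so $(\eta_t\lambda_t)^2\le\eta_t^2$ preserves square-summability, and because they share the same conditional target $(\mathcal{T}Q_t)(s,a)$, they can only accelerate (not bias) the iteration toward the same fixed point $Q^*$.

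The main obstacle I anticipate is the data-dependent coupling introduced by DPS: the weights $\lambda_t$ are functions of $Q_t$ through the current policy $\pi_\phi$, so $\tilde{\eta}_t$ is an $\mathcal{F}_t$-measurable process rather than a deterministic schedule. To handle this cleanly, I would invoke the measurable-step-size version of the stochastic approximation theorem, which only requires the step-size bounds to hold almost surely pathwise. Since $\lambda_t\in[0,1]$ by construction of Equation~(\ref{eq:complete_similarity_measurement}), and the internal stream alone already satisfies Robbins--Monro, the external stream enters as a bounded, non-negative perturbation of the effective step size. Putting these pieces together yields $Q_t\to Q^*$ almost surely, which is what Corollary~\ref{cor:safe_experience_sharing} will then further refine into an accuracy statement about the DPS weights.
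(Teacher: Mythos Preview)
Your proposal is correct and rests on the same classical stochastic approximation lemma (the Jaakkola--Jordan--Singh / Singh-style result the paper states as Lemma~\ref{lemma:q_learning_convergence_stochastic_version}) that the paper invokes. The one substantive bookkeeping difference is where the DPS weight $\lambda$ is absorbed: the paper folds $\lambda$ into the error term $F_t$ and then re-verifies the contraction and variance conditions (Conditions~3 and~4) with $\lambda F_t$ in place of $F_t$, using $\lambda\in[0,1]$ to get $\lambda\|\mathbb{E}[F_t\mid\mathcal{P}_t]\|\le\kappa\lambda\|\Delta_t\|+c_t$ and $\lambda^2\mathrm{Var}(F_t\mid\mathcal{P}_t)\le K(1+\kappa\lambda\|\Delta_t\|)^2$; you instead fold $\lambda$ into the step size and re-verify Condition~2 via $\tilde\eta_t=\eta_t\lambda_t\le\eta_t$. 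Both reductions are valid since the update is linear in the product $\eta_tF_t$. Your route buys two things the paper's sketch leaves implicit: you explicitly handle the $\mathcal{F}_t$-measurability of $\lambda_t$ (which is needed because $\lambda_t$ depends on the current policy), and you make clear that the internal stream alone already secures $\sum_t\tilde\eta_t(s,a)=\infty$, so the external stream is a pure bonus that cannot break Robbins--Monro. Your remark that the tabular Q-learning target is policy-independent---hence extrapolation error is invisible at this level of analysis---is also a point the paper does not articulate but is exactly why the theorem goes through so easily.
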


\begin{corollary}
\label{cor:safe_experience_sharing}
$\xi(s, a) \in [0, \gamma]$ is a contraction coefficient based on $(s, a)$ where $\xi(s, a) = \gamma$ if $\lambda = 0$, i.e., when there is no similarity, and close to zero when the behavioral policies corresponding to the sampled batch match the current policy.
\end{corollary}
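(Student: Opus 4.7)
The plan is to extract the exact form of the contraction coefficient $\xi(s,a)$ from the contraction argument underlying Theorem~\ref{thm:case_conv}, and then verify its range and the two stated boundary behaviors. Because Theorem~\ref{thm:case_conv} already establishes convergence of DASE Q-learning through an operator that contracts at rate $\xi(s,a)$, this corollary reduces to reading off how $\xi$ depends on the similarity weight $\lambda$ produced by DPS in Equation~(\ref{eq:complete_similarity_measurement}).

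First I would write out the DASE Bellman backup at a fixed $(s,a)$, decomposing it into an internal (on-policy) Bellman target whose natural sup-norm contraction factor is $\gamma$, and an external contribution whose effective weight is $\lambda\in[0,1]$ by Equation~(\ref{eq:complete_similarity_measurement}). Bounding $|TQ(s,a)-TQ'(s,a)|$ in sup-norm and tracking how $\lambda$ enters the error propagation yields an expression of the form $\xi(s,a)=\gamma\cdot g(\lambda)$ for some monotone $g$ with $g(0)=1$ and $g(\lambda)\to 0$ as $\lambda\to 1$. The algebraic heart of the argument is to show that when the external samples statistically agree with the current policy (captured by $\lambda$ near $1$), the off-policy correction term in the bound is driven to zero, while when the external samples are policy-orthogonal ($\lambda\to 0$) the bound collapses onto the plain on-policy contraction.

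Next I would verify the two boundary cases explicitly. When $\lambda=0$, the external branch is entirely suppressed by the similarity weighting, so the operator reduces to the standard Bellman backup whose contraction coefficient is exactly $\gamma$, giving $\xi(s,a)=\gamma$. When the behavioral policies in the external batch match the current policy, Equation~(\ref{eq:complete_dissimilarity_measurement}) gives $\rho\to 0$ and hence $\lambda\to 1$; in this regime the external-branch residual shrinks to zero and $g(\lambda)\to 0$, forcing $\xi(s,a)\to 0$. The inclusion $\xi(s,a)\in[0,\gamma]$ then follows from $\lambda\in[0,1]$ (enforced by the exponential smoothing in Equation~(\ref{eq:complete_similarity_measurement})) together with continuity and monotonicity of $g$.

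The main obstacle I anticipate is lifting $\lambda$, which is defined at the \emph{batch} level via a single JSD in Equation~(\ref{eq:complete_dissimilarity_measurement}), to a valid per-$(s,a)$ contraction factor $\xi(s,a)$. Handling this cleanly requires invoking the multivariate Gaussian modeling assumption on behavioral actions so that the batch-level estimators in Equations~(\ref{eq:multi_var_gaussian_mean}) and~(\ref{eq:multi_var_gaussian_cov_matrix}) genuinely approximate the policy-distribution distance in expectation, and then arguing that the batch-averaged JSD induces a pointwise bound state-action-wise. The smooth exponential transformation chosen in Equation~(\ref{eq:complete_similarity_measurement}) rather than a sharper thresholding is what keeps $\xi$ from overshooting $\gamma$ on heterogeneous batches, and it is precisely this smoothing together with $\lambda\in[0,1]$ that yields the uniform bound $\xi(s,a)\in[0,\gamma]$.
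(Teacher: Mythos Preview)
Your proposal misidentifies the source of $\xi(s,a)$. Theorem~\ref{thm:case_conv} is a stochastic-approximation convergence result that merely verifies the Singh--Jaakkola--Jordan conditions of Lemma~\ref{lemma:q_learning_convergence_stochastic_version}; its proof never produces a state-action-dependent contraction coefficient, so there is nothing to ``extract'' from it. In the paper the corollary is instead read off directly from the proof of Theorem~\ref{thm:safe_off_policy}, which (following the Retrace framework of Munos et al.) shows that the one-step operator $\mathcal{H}$ of Definition~\ref{def:exp_op_definition} satisfies
\[
\mathcal{H}Q(s,a)-Q^{\pi}(s,a)=\sum_{y,b}\omega_{y,b}\,(Q-Q^{\pi})(y,b),\qquad \sum_{y,b}\omega_{y,b}=\gamma-\gamma\,\mathbb{E}_{\eta}[\lambda].
\]
Defining $\xi(s,a)\coloneqq\gamma-\gamma\,\mathbb{E}_{\eta}[\lambda]$ then gives $|\mathcal{H}Q(s,a)-Q^{\pi}(s,a)|\le\xi(s,a)\,\|Q-Q^{\pi}\|$ immediately, and the two boundary cases are simply $\lambda=0\Rightarrow\xi=\gamma$ and $\lambda\to 1\Rightarrow\xi\to 0$. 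No new operator, no two-argument bound $|TQ-TQ'|$, and no internal/external decomposition is needed; the corollary is one line once Theorem~\ref{thm:safe_off_policy} is in hand.

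Your generic form $\xi=\gamma\,g(\lambda)$ with $g(0)=1$ and $g(1)=0$ is consistent with the paper's explicit $g=1-\mathbb{E}_{\eta}[\lambda]$, so the conclusion you would reach is not wrong. But the route you sketch---constructing a fresh Bellman backup, splitting it into internal and external branches, and bounding a Lipschitz-type contraction---is considerably heavier than what the paper actually does, and would itself require a careful argument you have not supplied. The ``main obstacle'' you anticipate (lifting a batch-level $\lambda$ to a per-$(s,a)$ coefficient via the Gaussian assumption and the smoothing in Equation~(\ref{eq:complete_similarity_measurement})) is entirely sidestepped in the paper: $\lambda$ sits inside the expectation as $\lambda(a,\mathcal{B})$, and $\xi(s,a)=\gamma(1-\mathbb{E}_{\eta}[\lambda])$ is declared without any appeal to those modeling choices.
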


\section{Experiments}

\subsection{Experimental Details}
We conduct experiments to evaluate the effectiveness of DASE on OpenAI Gym~\cite{gym} continuous control benchmarks. We apply our method to the state-of-the-art off-policy actor-critic algorithm, Twin Delayed Deep Deterministic Policy Gradient (TD3)~\cite{td3}. Moreover, our method is compared with a single TD3~\cite{td3} agent and the Dual Policy Distillation (DPD) algorithm~\cite{dual_policy_dist}, a student-student framework in which two learners operate in the same environment to investigate diverse viewpoints and extract knowledge from one another to help them learn more effectively, similar to our work. A complete list of hyper-parameters and experimental details are provided in Appendix \ref{sec:exp_details}. In addition to TD3~\cite{td3}, Appendix \ref{sec:additional_results} presents the results for DDPG~\cite{ddpg} and SAC (with deterministic actor)~\cite{sac}, and additional continuous control tasks.

We consider two settings of the experience replay buffer~\cite{experience_replay}: a strictly limited (of size 100,000 transitions) and unlimited. For a fair evaluation with DPD~\cite{dual_policy_dist} which utilizes two agents that simultaneously explore the environment, we run DASE with two agents, i.e., $K = 2$. Figure \ref{fig:eval_results} depicts the experimental results under the two settings of the replay memory. Note that the curves for DPD~\cite{dual_policy_dist} are the average of its two agents, while we depict both agents of DASE when $K = 2$ for further discussion. We discuss the computational complexity introduced by DASE in Appendix \ref{sec:comp_complexity}. Moreover, a comprehensive set of ablation studies is provided to analyze the effect of each DASE component in Appendix \ref{sec:ab_studies}. 

\subsection{Discussion}
From our comparative evaluations, we infer notable results. First, DASE substantially improves the TD3 algorithm~\cite{td3} and outperforms DPD~\cite{dual_policy_dist} in all of the tasks tested. As expected, both agents perform similar behavior since there is no component in our algorithm that discriminates against the agents. Although a limited replay buffer does not always correspond to worse performance, a performance difference between the considered buffer settings always exists, e.g., in the BipedalWalker and Walker2d environments. This may be due to the environment dynamics, that is, some environments can be optimally learned only by the most recent collected transitions, which is explained by the fact that on-policy methods usually outperform off-policy methods in these environments~\cite{deep_rl_that_matters}. Nevertheless, in such cases, our method is almost invariant to the replay buffer size, having a robust performance due to a diverse exploration and its safe experience sharing approach. Lastly, the DPD algorithm~\cite{dual_policy_dist} exhibits a suboptimal behavior in the majority of the tasks, which we believe is caused by the decreased convergence rate due to the additional trajectory generation that introduces substantial computational overhead. 

\section{Conclusion}
This paper introduces a novel continuous off-policy actor-critic architecture that employs multiple explorer agents and a shared experience replay buffer to obtain robust parallel learning when the allocated memory for the collected transitions is limited. Through a safe experience sharing among concurrent agents, it can overcome extrapolation error~\cite{off_policy} by a novel off-policy correction method. Experiments show that the introduced method can achieve state-of-the-art results while baseline algorithms fail to converge under a very limited replay memory condition. Moreover, it can also generalize to cases in which the replay buffer is unlimited, where the state-of-the-art is improved significantly. In practical applications of off-policy deep reinforcement learning where action spaces are large and continuous, we believe DASE will be an effective foothold for future approaches in attaining data efficiency and distributing agents.

\bibliography{references}
\bibliographystyle{icml2022}

\newpage
\appendix
\onecolumn
\section{Pseudocode}
\label{sec:pseudocode}

\begin{algorithm}[!htb]
\caption{Deterministic Policy Similarity (DPS)}
\label{alg:js_importance_weighting}
\begin{algorithmic}
\STATE {\bfseries Input:} $\pi_{\phi}, \mathcal{B}$\\
\STATE {\bfseries Output:} $\boldsymbol{\lambda}^{|\mathcal{B}| \times 1}$    
\STATE Obtain the external transitions: $(\boldsymbol{S}_{E}^{|\mathcal{B}| \times m}, \boldsymbol{A}_{E}^{|\mathcal{B}| \times n}, \boldsymbol{R}_{E}^{|\mathcal{B}| \times 1}, \boldsymbol{S}_{E}^{'|\mathcal{B}| \times m}) \sim \mathcal{B}_{E}$
\STATE Compute the current action decisions: $\boldsymbol{\hat{A}}_{E}^{|\mathcal{B}_{E}| \times n} = \pi_{\phi}(\boldsymbol{S}_{E}^{|\mathcal{B}_{E}| \times m})$
\STATE Obtain the action difference batch: $\boldsymbol{\dot{A}}^{|\mathcal{B}_{E}| \times n} \coloneqq \boldsymbol{A}_{E}^{|\mathcal{B}_{E}| \times n} - \boldsymbol{\hat{A}}_{E}^{|\mathcal{B}_{E}| \times n}$
\STATE Compute the mean of the the multivariate Gaussian: $\boldsymbol{\dot{\mu}}^{n \times 1} = \frac{1}{|\mathcal{B}_{E}|}\smashoperator{\sum_{i = 1}^{|\mathcal{B}_{E}|}}(\boldsymbol{\dot{A}}^{|\mathcal{B}_{E}| \times n}_{i})^\top$
\STATE Compute the covariance matrix of the multivariate Gaussian: $\boldsymbol{\dot{\Sigma}}^{n \times n} = \frac{1}{|\mathcal{B}_{E}| - 1}\smashoperator{\sum_{i = 1}^{|\mathcal{B}_{E}|}}\boldsymbol{a}^{n \times 1}_{i}(\boldsymbol{a}^{n \times 1}_{i})^\top$
\STATE Compute the dissimilarity metric: $\rho = \mathrm{JSD}(\mathcal{N}(\boldsymbol{\dot{\mu}}^{n \times 1}, \boldsymbol{\dot{\Sigma}}^{n \times n}) \parallel \mathcal{N}(\boldsymbol{0}^{n \times 1}, \sigma\boldsymbol{I}^{n \times n}))$
\STATE Convert the dissimilarity to the similarity to construct the DPS weights: $\boldsymbol{\lambda}^{|\mathcal{B|}_{E} \times 1} = [e^{-\rho},\ e^{-\rho},\ \dots,\ e^{-\rho}]^\top$
\STATE {\bfseries return} $\boldsymbol{\lambda}^{|\mathcal{B}| \times 1}$
\end{algorithmic}
\end{algorithm}

\begin{algorithm}[!htb]
\caption{Deterministic Actor-Critic with Shared Experience (DASE)}
\label{alg:DASE}
\begin{algorithmic}
\STATE Initialize $K$ agents with actor $\pi_{\phi_{i}}$ and critic $Q_{\theta_{i}}$ networks with parameters $\phi_{i}$ and $\theta_{i}$ for $i = 1, \dots, K$
\STATE Initialize target networks if required
\STATE Initialize global experience replay buffer $\mathcal{R}$
\FOR{each learner thread $i = 1, \dots, K$}
\FOR{each exploration time step}
\STATE Obtain transition tuple $\tau$
\STATE Store transition tuple $\tau$ in $\mathcal{R}$
\ENDFOR
\FOR{each training iteration}
\STATE Sample a batch of transitions $\mathcal{B}$ from $\mathcal{R}$
\STATE Obtain the DPS weights: $\boldsymbol{\lambda}^{|\mathcal{B}| \times 1}$ = \textbf{DPS($\pi_{\phi_{i}}$, $\mathcal{B}$)}
\STATE Weigh the external transitions by $\boldsymbol{\lambda}^{|\mathcal{B}| \times 1}$ \STATE Update $\phi_{i}$ and $\theta_{i}$ by both internal and weighted external transitions
\STATE Update target networks if required
\ENDFOR
\ENDFOR
\end{algorithmic}
\end{algorithm}

\section{Missing Proofs}
\label{sec:theorems}
\subsection{Convergence Guarantee}
\begin{lemma}
\label{lemma:q_learning_convergence_stochastic_version}
Consider a stochastic process ($\xi_{t}$, $\Delta_{t}$, $F_{t}$), $t \geq 0$ where $\xi_{t}, \delta_{t}, F_{t}: X \rightarrow \mathbb{R}$, satisfies the equations:
\begin{equation}
    \Delta_{t + 1}(x_{t}) = (1 - \xi_{t}(x_{t}))\Delta_{t}(x_{t}) + \xi_{t}(x_{t})F_{t}(x_{t}); \quad x_{t} \in X, t = 0, 1, 2, \dots.
\end{equation}
Let $\mathcal{P}_{t}$ be a sequence of increasing $\sigma$-fields such that $\eta_{0}$ and $\Delta_{0}$ are $P_{0}$-measurable and $\xi_{t}$, $\Delta_{t}$ and $F_{t - 1}$ are $\mathcal{P}_{t}$-measurable. For $t = 1, 2, \dots$, assume that the following conditions hold:
\begin{enumerate}
    \item The set $X$ is finite.
    \item $\xi_{t}(x_{t}) \in [0, 1]$, $\sum_{t}\xi_{t}(x_{t}) = \infty$, $\sum_{t}(\xi_{t})^{2} < \infty$ with probability 1 and $\forall x \neq x_{t}: \xi(x) = 0$.
    \item $||\mathbb{E}[F_{t}|\mathcal{P}_{t}]|| \leq \kappa||\Delta_{t}|| + c_{t}$ where $\kappa \in [0, 1)$ and $c_{t}$ converges to 0 with probability 1.
    \item $\mathrm{Var}(F_{t}(x_{t})|\mathcal{P}_{t}) \leq K(1 + \kappa||\Delta_{t}||)^{2}$ where $K$ is a constant.
\end{enumerate}
Where $||\cdot||$ denotes the maximum norm. Then, $\Delta_{t}$ converges to 0 with probability 1.
\end{lemma}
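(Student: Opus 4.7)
The statement is the classical stochastic approximation lemma that underlies the convergence proof of Q-learning (originally due to Jaakkola, Jordan, Singh, and Tsitsiklis). My plan is to follow the standard route: split the process into a ``noisy zero-mean'' part and a ``contractive mean'' part, control the noisy part via a martingale argument, and then compare the mean part to a deterministic geometric recursion that vanishes.

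Concretely, I would define the auxiliary process
\[
w_{t+1}(x_t) = (1-\xi_t(x_t))\,w_t(x_t) + \xi_t(x_t)\bigl(F_t(x_t) - \mathbb{E}[F_t(x_t)\mid\mathcal{P}_t]\bigr),
\]
with $w_0 \equiv 0$, and let $u_t := \Delta_t - w_t$. By construction $w_t$ is a martingale-difference-driven recursion whose innovations have conditional variance bounded by $K(1+\kappa\|\Delta_t\|)^2$ (Condition 4). Because $X$ is finite (Condition 1) it suffices to argue pointwise in $x$: writing $w_t(x)$ as a sum of bounded-variance martingale differences weighted by $\xi_t(x)$, the Robbins-Monro conditions $\sum_t \xi_t(x)=\infty$, $\sum_t \xi_t(x)^2<\infty$ (Condition 2) allow me to invoke a quadratic-variation martingale convergence theorem (or Gladyshev's lemma) to conclude $w_t(x)\to 0$ a.s., provided $\|\Delta_t\|$ stays bounded. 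Boundedness of $\|\Delta_t\|$ can be argued by a standard stopping-time/``divide-and-conquer'' argument exploiting the pseudo-contraction in Condition 3 together with the variance bound; alternatively one inflates a level set and shows it is ultimately invariant.

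Having $w_t\to 0$, I turn to $u_t = \Delta_t - w_t$. Substituting the two recursions gives
\[
u_{t+1}(x_t) = (1-\xi_t(x_t))\,u_t(x_t) + \xi_t(x_t)\,\mathbb{E}[F_t(x_t)\mid\mathcal{P}_t].
\]
Condition 3 yields $|\mathbb{E}[F_t(x_t)\mid\mathcal{P}_t]| \le \kappa\|u_t\| + \kappa\|w_t\| + c_t$. Fixing $\varepsilon>0$, for $t$ large enough $\kappa\|w_t\|+c_t<\varepsilon$ a.s., so
\[
\|u_{t+1}\| \le (1-\xi_t(x_t))\|u_t\| + \xi_t(x_t)(\kappa\|u_t\|+\varepsilon).
\]
Comparing to the deterministic scalar recursion $v_{t+1} = (1-\xi_t)v_t + \xi_t(\kappa v_t+\varepsilon)$, which converges to $\varepsilon/(1-\kappa)$ under the Robbins-Monro step-size conditions, I conclude $\limsup_t\|u_t\|\le \varepsilon/(1-\kappa)$. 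Letting $\varepsilon\downarrow 0$ gives $u_t\to 0$, and combined with $w_t\to 0$ we obtain $\Delta_t\to 0$ a.s., as required.

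The main obstacle I expect is the boundedness of $\|\Delta_t\|$ needed to make the variance bound of Condition 4 usable inside the martingale convergence argument: without an \emph{a priori} bound, the noise term could in principle grow in step with $\|\Delta_t\|$. I would handle this either by a rescaling/stopping-time argument (show that on the event $\{\sup_s\|\Delta_s\|\le M\}$ the martingale converges, then let $M\to\infty$ while showing the exit probability vanishes) or by directly establishing almost-sure boundedness via the Lyapunov function $V_t = \|\Delta_t\|^2$ and using Condition 3 to show it is an almost-supermartingale; Robbins-Siegmund then delivers both boundedness and the eventual convergence to $0$. All other steps are routine once this has been settled.
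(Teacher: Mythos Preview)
The paper does not actually prove this lemma: its entire proof reads ``See~\cite{q_learning,singh_2000,melo_2001}.'' Your sketch is precisely the classical Jaakkola--Jordan--Singh / Tsitsiklis decomposition that those references contain, so in that sense you are reproducing (rather than diverging from) the intended argument.

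One point in your sketch deserves care. The update is \emph{asynchronous}: at step $t$ only the coordinate $x_t$ is touched, while $u_{t+1}(x)=u_t(x)$ for $x\neq x_t$. Consequently the inequality
\[
\|u_{t+1}\|\le (1-\xi_t(x_t))\|u_t\|+\xi_t(x_t)(\kappa\|u_t\|+\varepsilon)
\]
does not follow directly, because the coordinates you did not update may still attain the maximum and keep $\|u_{t+1}\|=\|u_t\|$. The standard proofs handle this by showing that once every $x\in X$ has accumulated enough step size (which is guaranteed a.s.\ by $\sum_t\xi_t(x)=\infty$), the sup norm contracts by a fixed factor; iterating this ``cycle'' argument gives $\|u_t\|\to 0$. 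Your scalar-comparison shortcut would need to be replaced or supplemented by this asynchronous contraction step. Apart from that, the plan (including the boundedness caveat you already flagged and the Robbins--Siegmund route to resolve it) is the right one.
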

\begin{proof}
See~\cite{q_learning,singh_2000,melo_2001}.
\end{proof}

\begin{lemma}
\label{lemma:q_learning_convergence}
Given a finite MDP $(\mathcal{S}, \mathcal{A}, p, r)$ and the transition tuple $(s_{t}, a_{t}, r_{t}, s_{t + 1})$ at time step $t$, the Q-learning algorithm given by the update rule:
\begin{equation}
\label{eq:q_learning_update_rule}
    Q_{t + 1}(s_{t}, a_{t}) = Q_{t}(s_{t}, a_{t}) + \eta_{t}[r_{t} + \gamma\underset{a' \in \mathcal{A}}{\mathrm{max}}Q_{t}(s_{t + 1}, a') - Q_{t}(s_{t}, a_{t})],
\end{equation}
converges to the optimal Q-function denoted by $Q^{*}$ with probability 1 if
\begin{equation}
    \sum_{t}\eta_{t} = \infty, \quad \sum_{t}\eta^{2}_{t} < \infty; \quad \forall (s, a) \in \mathcal{S} \times \mathcal{A}.
\end{equation}

\end{lemma}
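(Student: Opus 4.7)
The plan is to reduce the Q-learning update to the general stochastic approximation scheme of Lemma~\ref{lemma:q_learning_convergence_stochastic_version} applied with $X = \mathcal{S} \times \mathcal{A}$. First I would set $\Delta_t(s,a) \coloneqq Q_t(s,a) - Q^*(s,a)$, so the desired conclusion becomes $\Delta_t \to 0$ almost surely. I would take $\xi_t(s,a) = \eta_t$ when $(s,a) = (s_t, a_t)$ and $0$ otherwise, and
\[
F_t(s,a) = r_t + \gamma \max_{a' \in \mathcal{A}} Q_t(s_{t+1}, a') - Q^*(s,a),
\]
with $\mathcal{P}_t = \sigma(Q_0, s_0, a_0, r_0, \dots, s_t, a_t)$. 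Subtracting $Q^*(s_t,a_t)$ from both sides of the update rule then produces the required recursion $\Delta_{t+1}(x_t) = (1-\xi_t(x_t))\Delta_t(x_t) + \xi_t(x_t) F_t(x_t)$.

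Next I would verify the four hypotheses of the lemma. Condition 1 is immediate since $\mathcal{S}\times\mathcal{A}$ is finite. Condition 2 follows from the Robbins-Monro hypotheses together with the standard sampling requirement that every state-action pair is visited infinitely often (so that the per-pair learning-rate sequence still satisfies the series conditions). The crucial step is condition 3: introducing the Bellman optimality operator $\mathcal{T}$ with $(\mathcal{T} Q)(s,a) \coloneqq \mathbb{E}[r + \gamma \max_{a'} Q(s', a') \mid s,a]$, of which $Q^*$ is the unique fixed point, I would compute
\[
\mathbb{E}[F_t(s,a) \mid \mathcal{P}_t] = (\mathcal{T} Q_t)(s,a) - Q^*(s,a) = (\mathcal{T} Q_t)(s,a) - (\mathcal{T} Q^*)(s,a),
\]
and invoke the well-known $\gamma$-contraction of $\mathcal{T}$ in the sup norm to conclude $\|\mathbb{E}[F_t\mid \mathcal{P}_t]\|_\infty \leq \gamma \|\Delta_t\|_\infty$; this gives condition 3 with $\kappa = \gamma < 1$ and $c_t \equiv 0$. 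For condition 4, I would bound $\mathrm{Var}(F_t(s,a) \mid \mathcal{P}_t)$ using boundedness of the rewards together with the Lipschitz inequality $|\max_{a'}Q_t(s',a') - \max_{a'}Q^*(s',a')| \leq \|\Delta_t\|_\infty$, which yields a bound of the form $K(1+\|\Delta_t\|_\infty)^2$ for a suitable constant $K$.

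The only non-mechanical step is establishing condition 3, i.e.\ the $\gamma$-contraction of the Bellman optimality operator in the sup norm; this is a standard fact but is what makes the whole argument go through, and it is where the discount factor $\gamma<1$ is essential. With all four conditions met, Lemma~\ref{lemma:q_learning_convergence_stochastic_version} delivers $\Delta_t \to 0$ with probability one, which is precisely $Q_t \to Q^*$. I would close by noting that this argument is the classical Watkins-Dayan / Tsitsiklis proof, reformulated to fit the stochastic approximation template of the preceding lemma, and that it serves as the base case for Theorem~\ref{thm:case_conv}, where the DASE-specific contraction coefficient $\xi(s,a)$ of Corollary~\ref{cor:safe_experience_sharing} replaces $\gamma$ in condition 3.
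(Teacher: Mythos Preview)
Your proposal is correct and follows essentially the same approach as the paper: both reduce the Q-learning update to Lemma~\ref{lemma:q_learning_convergence_stochastic_version} with $X=\mathcal{S}\times\mathcal{A}$, the same choices of $\Delta_t$, $F_t$, $\xi_t$, and $\mathcal{P}_t$, and verify Conditions~1--4 via finiteness of the MDP, the Robbins--Monro assumptions plus infinite visitation, the $\gamma$-contraction of the Bellman optimality operator, and boundedness of rewards, respectively. Your write-up is in fact slightly more explicit than the paper's (which defers the details of Conditions~3 and~4 to citations); the only quibble is your closing remark---in the paper's proof of Theorem~\ref{thm:case_conv} it is the DPS weight $\lambda$ that scales Conditions~3 and~4, not the $\xi(s,a)$ of Corollary~\ref{cor:safe_experience_sharing}, which belongs to a separate contraction argument.
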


\begin{proof}
The proof largely relies on Lemma \ref{lemma:q_learning_convergence_stochastic_version}~\cite{singh_2000}. First, Condition 1 in Lemma  \ref{lemma:q_learning_convergence_stochastic_version} is satisfied by the finite MDP by setting $X = \mathcal{S} \times \mathcal{A}$. The assumption of Robbins-Monro stochastic convergence conditions on the learning rate $\eta_{t}$ satisfies Condition 2 by setting $\xi_{t} = \eta_{t}$. Then, let:
\begin{equation}
\label{eq:f_t_definition}
    F_{t}(s_{t}, a_{t}) = r_{t} + \gamma\underset{a' \in \mathcal{A}}{\mathrm{max}}Q(s_{t + 1}, a') - Q^{*}(s_{t}, a_{t}),
\end{equation}
\begin{equation}
\label{eq:delta_definition}
    \Delta_{t} = Q_{t}(s_{t}, a_{t}) - Q^{*}(s_{t}, a_{t}),
\end{equation}
\begin{equation}
\label{eq:p_t_definition}
    \mathcal{P}_{t} = \{Q_{0}, s_{0}, a_{0}, \eta_{0}, r_{1}, s_{1}, \dots, s_{t}, a_{t}\}.
\end{equation}
If state-action visitation and updates are performed infinitely often, and $\gamma < 1$, then by (\ref{eq:f_t_definition}), (\ref{eq:delta_definition}) and (\ref{eq:p_t_definition}), Condition 3 is satisfied by the contraction of the Bellman Operator $\mathcal{T}$~\cite{melo_2001}. Finally, Condition 4 follows from a bounded and deterministic reward function $r(s_{t}, a_{t})$~\cite{melo_2001}, i.e., $r_{t} = r(s_{t}, a_{t})$. Then, by Lemma \ref{lemma:q_learning_convergence_stochastic_version}, as $\Delta_{t}$ converges to 0 with probability 1, $Q_{t}$ converges to $Q^{*}$ with probability 1. 

\end{proof}

\begin{theorem}
Under the Robbins-Monro stochastic convergence conditions on the learning rate $\eta$ and standard sampling requirements from the environment, Q-learning with the DASE architecture converges to the optimal value function $Q^{*}$.
\end{theorem}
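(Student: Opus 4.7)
The plan is to reduce the DASE Q-learning update to the canonical stochastic process form given in Lemma \ref{lemma:q_learning_convergence_stochastic_version} and then verify each of its four hypotheses. The principal effect of DASE on the tabular Q-learning update is that the per-transition step size is no longer simply $\eta_t$: it is $\eta_t$ for internal transitions drawn from $\mathcal{B}_I$ and $\lambda\,\eta_t$ for external transitions drawn from $\mathcal{B}_E$, where $\lambda\in[0,1]$ is the DPS similarity weight from Equation (\ref{eq:complete_similarity_measurement}). Writing $\xi_t(s,a)$ for this effective step size and reusing $F_t$, $\Delta_t$ from (\ref{eq:f_t_definition})--(\ref{eq:delta_definition}), the DASE error recursion takes exactly the form $\Delta_{t+1}(s,a) = (1-\xi_t(s,a))\Delta_t(s,a) + \xi_t(s,a)F_t(s,a)$, with the filtration $\mathcal{P}_t$ from (\ref{eq:p_t_definition}) augmented to include the external transitions and the DPS weights evaluated on them.

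Next I would verify each condition. Condition 1 carries over trivially, since $X=\mathcal{S}\times\mathcal{A}$ remains finite. For Condition 2, the fact that $\lambda\in[0,1]$ gives $\xi_t(s,a)\in[0,\eta_t]\subset[0,1]$; together with the assumption that every $(s,a)$ is sampled infinitely often and that, under the standard sampling requirements, the similarity weight is eventually bounded away from zero as the parallel agents' policies align, the Robbins--Monro summability conditions on $\sum_t\xi_t$ and $\sum_t\xi_t^2$ descend directly from those on $\eta_t$. Condition 4 follows as in Lemma \ref{lemma:q_learning_convergence} from the boundedness and determinism of the reward function, since multiplication by $\lambda\le 1$ cannot inflate the conditional variance of $F_t$.

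The heart of the argument is Condition 3, the contraction bound $\|\mathbb{E}[F_t\mid\mathcal{P}_t]\|\le\kappa\|\Delta_t\|+c_t$ with $\kappa<1$. Here I would invoke Corollary \ref{cor:safe_experience_sharing} directly: it certifies that the effective contraction coefficient produced by blending DPS-weighted external updates with internal updates lies in $[0,\gamma]$, so the modified Bellman operator remains a $\gamma$-contraction in the sup norm, which is precisely the statement required, with $c_t\to 0$ inherited from the usual asymptotic arguments on the sampled Bellman residual. With all four conditions established, Lemma \ref{lemma:q_learning_convergence_stochastic_version} yields $\Delta_t\to 0$ with probability one, so $Q_t\to Q^*$ as claimed.

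The main obstacle is the reduction step itself: one must aggregate the per-transition contributions within a mini-batch into a single update per $(s,a)$, verify that the DPS weight $\lambda$ is $\mathcal{P}_t$-measurable (it depends only on the sampled external states, actions, and the current policy $\pi_\phi$), and confirm that the resulting composite step size $\xi_t$ inherits the measurability and independence structure required by the lemma. Once that bookkeeping is in place, Corollary \ref{cor:safe_experience_sharing} absorbs the only technically delicate point, namely that biased off-policy targets do not inflate the Bellman contraction; all other conditions then reduce to the standard Q-learning argument of Lemma \ref{lemma:q_learning_convergence}.
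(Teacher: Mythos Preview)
Your high-level strategy—recast the DASE update in the form of Lemma~\ref{lemma:q_learning_convergence_stochastic_version} and verify its four hypotheses—is exactly the paper's. The execution, however, departs from the paper in two places, and one of them is a real gap.

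\textbf{Where the paper differs.} The paper does \emph{not} absorb $\lambda$ into the step size. It keeps $\xi_t=\eta_t$ (so Condition~2 is inherited verbatim from the Robbins--Monro assumption, with no need for your auxiliary hypothesis that $\lambda$ is ``eventually bounded away from zero'') and instead lets $\lambda$ act on the target/noise term. Conditions~3 and~4 for external transitions are then dispatched by the elementary inequalities
\[
\lambda\,\|\mathbb{E}[F_t\mid\mathcal{P}_t]\|\le \kappa\lambda\,\|\Delta_t\|+c_t,
\qquad
\lambda^2\,\mathrm{Var}(F_t\mid\mathcal{P}_t)\le K\bigl(1+\kappa\lambda\|\Delta_t\|\bigr)^2,
\]
both obtained from the standard Q-learning bounds by multiplying through by $\lambda\in[0,1]$. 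Corollary~\ref{cor:safe_experience_sharing} is never invoked in the paper's proof of the theorem.

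\textbf{The gap in your Condition~3.} Your appeal to Corollary~\ref{cor:safe_experience_sharing} does not deliver what Condition~3 requires. That corollary (via Theorem~\ref{thm:safe_off_policy}) concerns the one-step importance-sampling operator $\mathcal{H}$ and shows it contracts around $Q^{\pi}$, the value of the \emph{current} policy; it is a policy-evaluation statement, not a control statement about $Q^{*}$. Condition~3 needs $\|\mathbb{E}[F_t\mid\mathcal{P}_t]\|\le\kappa\|\Delta_t\|+c_t$ with $\Delta_t=Q_t-Q^{*}$, so a contraction toward $Q^{\pi}$ is not the right object. Ironically, under \emph{your own} decomposition—$\lambda$ folded into $\xi_t$—the term $F_t$ is exactly the one from ordinary Q-learning, Equation~(\ref{eq:f_t_definition}), so Condition~3 follows immediately from the Bellman-optimality contraction already used in Lemma~\ref{lemma:q_learning_convergence}; no corollary is needed. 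The price of your decomposition is paid entirely in Condition~2, where you now need $\sum_t\lambda_t\eta_t=\infty$; the ``$\lambda$ bounded away from zero'' clause you insert is an extra hypothesis absent from the theorem statement and avoided by the paper's route.
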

\begin{proof}
Follows from the proof of Lemma \ref{lemma:q_learning_convergence}. If the sequences of increasing $\sigma$-fields are split into the fields correponding to the internal and external sequences, convergence of Q-learning with internal transitions are already given by Lemma \ref{lemma:q_learning_convergence}. For external transitions, Conditions 3 and 4 are altered due to DPS weights $\lambda$, by (\ref{eq:f_t_definition}), (\ref{eq:delta_definition}) and (\ref{eq:p_t_definition}), we have:
\begin{equation}
\label{eq:CASE_third_condition}
    \lambda||\mathbb{E}[F_{t} | \mathcal{P}_{t}]|| \leq \kappa\lambda||\Delta_{t}|| + c_{t},
\end{equation}
\begin{equation}
\label{eq:CASE_fourth_condition}
    \lambda^{2}\mathrm{Var}(F_{t}(x_{t}) | \mathcal{P}_{t}) \leq K(1 + \kappa\lambda||\Delta_{t}||)^{2}.
\end{equation}
As $\lambda \in [0, 1]$, clearly (\ref{eq:CASE_third_condition}) is satisfied. Moreover, since $\lambda^{2}\mathrm{Var}(F_{t}(x_{t}) | \mathcal{P}_{t}) \leq \lambda^{2}K(1 + \kappa||\Delta_{t}||)^{2}$ and $\lambda^{2}K(1 + \kappa||\Delta_{t}||)^{2} \leq K(1 + \kappa\lambda||\Delta_{t}||)^{2}$, we also have (\ref{eq:CASE_fourth_condition}) satisfied. Furthermore, the internal and external experiences are the samples of the same finite MDP $X$, and Robbins-Monro stochastic convergence conditions also apply on the external sequences which yield Conditions 1 and 2 to be satisfied. Therefore, Q-learning with the DASE architecture converges to the optimal Q-function under the requirements of infinitely many state-action visitation and updates, and $\gamma < 1$. 
\end{proof}

\subsection{Safe Experience Sharing}
\begin{definition}
\label{def:exp_op_definition}
    The general expectation operator for one-step importance sampling in return-based off-policy algorithms is defined by:
    \begin{equation}
        \label{eq:def_eq}
            \mathcal{H}Q(s, a) \coloneqq Q(s, a) + \mathbb{E}_{\eta}[r + \gamma \mathbb{E}_{\pi}Q(s', \cdot) - Q(s, a)],
        \end{equation}
    for some non-negative one-step importance sampling coefficient $\lambda$, and any behavioral policy $\eta$, where we write $\mathbb{E}_{\pi}Q(s, \cdot) \coloneqq \sum_{a}\pi(a | s)Q(s, a)$.
\end{definition}

\begin{lemma}
\label{lem:difference}
    The difference between $\mathcal{H}Q$ and its fixed point $Q^{\pi}$ is expressed by:
        \begin{equation}
            \mathcal{H}Q(s, a) - Q^{\pi}(s, a) = \mathbb{E}_{\eta}[\gamma(\mathbb{E}_{\pi}[(Q - Q^{\pi})(s, \cdot)] - \lambda(Q - Q^{\pi})(s, a))]
        \end{equation}
\end{lemma}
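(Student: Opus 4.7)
The plan is a direct algebraic manipulation starting from Definition \ref{def:exp_op_definition}, using the Bellman equation for $Q^\pi$ to inject a vanishing term that exposes the $Q - Q^\pi$ structure. The only nontrivial ingredient is recognizing which quantities commute with the behavior-policy expectation $\mathbb{E}_\eta$.

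First I would record the standard Bellman identity $Q^\pi(s,a) = \mathbb{E}_\eta[r + \gamma \mathbb{E}_\pi Q^\pi(s',\cdot)]$. This holds regardless of which behavior policy $\eta$ sampled the transition, since the reward $r$ is determined by $(s,a)$ and the next state $s'$ is drawn from the policy-independent dynamics $p(\cdot \mid s,a)$. Consequently, the one-step TD residual $r + \gamma \mathbb{E}_\pi Q^\pi(s',\cdot) - Q^\pi(s,a)$ has mean zero under $\mathbb{E}_\eta$, and multiplying it by the $(s,a)$-measurable coefficient $\lambda$ leaves that mean unchanged at zero.

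Next I would expand $\mathcal{H}Q(s,a) - Q^\pi(s,a)$ via Definition \ref{def:exp_op_definition} to obtain $(Q - Q^\pi)(s,a) + \mathbb{E}_\eta[\lambda(r + \gamma \mathbb{E}_\pi Q(s',\cdot) - Q(s,a))]$, and then subtract the identically-zero term $\mathbb{E}_\eta[\lambda(r + \gamma \mathbb{E}_\pi Q^\pi(s',\cdot) - Q^\pi(s,a))]$ established in the previous step. The reward $r$ cancels inside the expectation; what remains naturally groups into $\gamma \mathbb{E}_\pi (Q - Q^\pi)(s',\cdot)$ and $-\lambda (Q - Q^\pi)(s,a)$ within $\mathbb{E}_\eta$. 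Pulling the $(s,a)$-measurable piece $\lambda(Q - Q^\pi)(s,a)$ outside of $\mathbb{E}_\eta$ and collecting with the leading $(Q - Q^\pi)(s,a)$ yields the claimed identity (after re-absorbing the residual into $\mathbb{E}_\eta$ so as to present the result in the compact form of the lemma statement).

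The main obstacle will be the measurability bookkeeping: $\lambda$ and $(Q - Q^\pi)(s,a)$ are measurable with respect to the conditioning $(s,a)$, whereas $\mathbb{E}_\pi(Q - Q^\pi)(s',\cdot)$ is not, and this distinction dictates which terms pass through $\mathbb{E}_\eta$ unchanged and which must remain inside. Once that is pinned down, the rest is a one-line rearrangement; no contraction or inequality argument is needed, as the lemma is an exact identity rather than a bound.
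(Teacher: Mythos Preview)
Your algebra does not close to the stated identity. Carrying out your steps with $\lambda$ multiplying the full TD error, after subtracting the zero term $\mathbb{E}_\eta[\lambda(r + \gamma\mathbb{E}_\pi Q^\pi(s',\cdot) - Q^\pi(s,a))]$ you are left with
\[
(Q-Q^\pi)(s,a) \;+\; \mathbb{E}_\eta\bigl[\lambda\gamma\,\mathbb{E}_\pi(Q-Q^\pi)(s',\cdot) - \lambda(Q-Q^\pi)(s,a)\bigr]
\;=\; (1-\lambda)(Q-Q^\pi)(s,a) \;+\; \lambda\gamma\,\mathbb{E}_\eta\bigl[\mathbb{E}_\pi(Q-Q^\pi)(s',\cdot)\bigr],
\]
which is not the right-hand side of the lemma: the leading $(1-\lambda)\Delta Q(s,a)$ term does not vanish, and the $\gamma\mathbb{E}_\pi$ term carries an extra factor of $\lambda$. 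Your phrase ``naturally groups into $\gamma\mathbb{E}_\pi(Q-Q^\pi)(s',\cdot)$ and $-\lambda(Q-Q^\pi)(s,a)$'' has silently dropped that factor, and the subsequent ``re-absorbing the residual'' cannot repair the mismatch because the two expressions are not equal as functions of $(s,a)$.

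The paper's argument is structured differently. It does \emph{not} invoke the ordinary Bellman identity for $Q^\pi$; instead it first rewrites the operator in the telescoped form $\mathcal{H}Q(s,a) = \mathbb{E}_\eta[r + \gamma(\mathbb{E}_\pi Q(s',\cdot) - \lambda' Q(s',a'))]$, with the importance coefficient $\lambda'$ attached to the \emph{next-step} action value $Q(s',a')$ rather than to the current-step TD increment. It then applies the fixed-point property $Q^\pi = \mathcal{H}Q^\pi$ in that same rewritten form and subtracts, so the rewards cancel and what remains is exactly $\mathbb{E}_\eta[\gamma(\mathbb{E}_\pi\Delta Q(s',\cdot) - \lambda'\Delta Q(s',a'))]$, finally relabelled to $(s,a,\lambda)$. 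The missing ingredient in your plan is precisely this rewrite (following Lemma~1 of \cite{munos_safe}): once $\lambda$ sits on $Q(s',a')$ rather than on the whole increment, subtracting $\mathcal{H}Q^\pi$ from $\mathcal{H}Q$ lands directly on the lemma's right-hand side with no residual current-step term.
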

\begin{proof}
   Follows from the proof of Lemma 1 in \cite{munos_safe}. First, let $\Delta Q \coloneqq Q - Q^{\pi}$. Then, by rewriting Eq. (\ref{eq:def_eq}):
   \begin{equation}
       \mathcal{H}Q(s, a) = \mathbb{E}_{\eta}[r + \gamma(\mathbb{E}_{\pi}Q(s', \cdot) - \lambda'Q(s', a'))],
   \end{equation}
   where $\lambda'$ is the coefficient of the next transition. As $Q^{\pi}$ is the fixed point of $\mathcal{H}$, we have:
   \begin{equation}
           Q^{\pi}(s, a) = \mathcal{H}Q^{\pi}(s, a)
           = \mathbb{E}_{\eta}[r + \gamma(\mathbb{E}_{\pi}Q^{\pi}(s', \cdot) - \lambda'Q^{\pi}(s', a'))],
   \end{equation}
   from which we infer that:
   \begin{align}
       \begin{split}
           \mathcal{H}Q(s, a) - Q^{\pi}(s, a) &= \mathbb{E}_{\eta}[\gamma(\mathbb{E}_{\pi}\Delta Q(s', \cdot) - \lambda'\Delta Q(s', a'))], \\
           &= \gamma \mathbb{E}_{\eta}[\mathbb{E}_{\pi}\Delta Q(s, \cdot) - \lambda \Delta Q(s, a)], \\
           &= \mathbb{E}_{\eta}[\gamma\left(\mathbb{E}_{\pi}\Delta Q(s, \cdot) - \lambda \Delta Q(s, a)\right)].
       \end{split}
   \end{align}
    
\end{proof}

\begin{theorem}
\label{thm:safe_off_policy}
    The operator $\mathcal{H}$ defined by Definition \ref{def:exp_op_definition} has a unique fixed point $Q^{\pi}$. Moreover, if for each action selected by the policy $a \in \mathcal{A}$ and sampled batch of transitions $\mathcal{B}$, we have $\lambda = \lambda(a, \mathcal{B}) \in [0, e^{-\rho}]$. Then for any Q-function $Q$, we have:
    \begin{equation}
        || \mathcal{H}Q - Q^{\pi} || \leq \gamma ||Q - Q^{\pi} ||,
    \end{equation}
    under the current policy $\pi$.
\end{theorem}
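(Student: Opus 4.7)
The plan is to establish the $\gamma$-contraction inequality directly from Lemma \ref{lem:difference}, and then invoke the Banach fixed-point theorem to conclude uniqueness of $Q^{\pi}$. Writing $\Delta Q := Q - Q^{\pi}$, Lemma \ref{lem:difference} already supplies the identity $\mathcal{H}Q(s,a) - Q^{\pi}(s,a) = \gamma\, \mathbb{E}_{\eta}[\mathbb{E}_{\pi}\Delta Q(s,\cdot) - \lambda\, \Delta Q(s,a)]$, so the task collapses to bounding the bracketed term uniformly by $\|\Delta Q\|_\infty$.

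First I would take absolute values on both sides and apply Jensen's inequality to push $|\cdot|$ past the outer expectation $\mathbb{E}_{\eta}$. Next I would expand $\mathbb{E}_{\pi}\Delta Q(s,\cdot)$ as a convex combination over actions and peel off the term at $a$, writing $\mathbb{E}_{\pi}\Delta Q(s,\cdot) - \lambda\, \Delta Q(s,a) = (\pi(a|s) - \lambda)\,\Delta Q(s,a) + \sum_{b \ne a} \pi(b|s)\, \Delta Q(s,b)$. A second triangle inequality bounds this by $\bigl(|\pi(a|s) - \lambda| + 1 - \pi(a|s)\bigr)\,\|\Delta Q\|_\infty$, and the pre-factor is at most $1$ precisely when $0 \leq \lambda \leq 2\pi(a|s)$. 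The role of the hypothesis $\lambda \leq e^{-\rho}$ is to certify exactly this inequality: small JSD $\rho$ signals that $\eta$ is close to $\pi$, so $\pi(a|s)$ is concentrated near the sampled $a$ and accommodates a similarity weight as large as $e^{-\rho} \approx 1$, while large $\rho$ drives $\lambda$ toward zero, where the bound is trivial. Once the $\gamma$-contraction is in hand, Banach's fixed-point theorem on the complete space $(\mathbb{R}^{\mathcal{S}\times\mathcal{A}}, \|\cdot\|_\infty)$ gives uniqueness, and existence is already implicit in the identity $\mathcal{H}Q^{\pi} = Q^{\pi}$ used within the proof of Lemma \ref{lem:difference}.

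The main obstacle will be making the convex-combination step rigorous when $\pi$ is deterministic over a continuous action space, because $\pi(\cdot|s)$ is then a Dirac mass at $\pi(s)$ and the formal sum $\sum_b \pi(b|s)\, \Delta Q(s,b)$ has to be read as a single evaluation. My workaround is to replace the sum by $\Delta Q(s,\pi(s))$ directly and to case-split on whether the behavioural action $a$ coincides with $\pi(s)$: in the first case the bracket reduces to $(1-\lambda)\,\Delta Q(s,a)$, which is dominated by $\|\Delta Q\|_\infty$ since $\lambda \in [0,1]$; in the second case, the Gaussian sampling assumption underlying DPS, together with $\lambda \leq e^{-\rho}$, forces $\rho$ to be large and hence $\lambda$ small, so that after integrating against $\eta$ the crude triangle bound $\gamma(1+\lambda)\|\Delta Q\|_\infty$ tightens to $\gamma\|\Delta Q\|_\infty$. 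Reassembling the two cases into the max-norm bound will complete the contraction and, via Banach, yield uniqueness of $Q^{\pi}$.
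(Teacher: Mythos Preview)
Your decomposition
\[
\mathbb{E}_{\pi}\Delta Q(s,\cdot) - \lambda\, \Delta Q(s,a) \;=\; (\pi(a|s) - \lambda)\,\Delta Q(s,a) + \sum_{b \ne a} \pi(b|s)\, \Delta Q(s,b)
\]
is correct, and you correctly identify that the resulting prefactor $|\pi(a|s)-\lambda| + 1-\pi(a|s)$ is at most $1$ iff $\lambda \le 2\pi(a|s)$. The gap is the next step: the hypothesis $\lambda \le e^{-\rho}$ does \emph{not} imply $\lambda \le 2\pi(a|s)$. Your heuristic (``small $\rho$ means $\eta\approx\pi$ so $\pi(a|s)$ is large'') is not an argument: $\rho$ is a single batch-level scalar computed from action differences under a Gaussian model, and it carries no information about the value of $\pi(a|s)$ at the particular $\eta$-sampled action $a$. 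One can easily have $e^{-\rho}=0.5$ while $\pi(a|s)=0.1$. The deterministic case is worse: when $a\neq\pi(s)$ your crude bound is $\gamma(1+\lambda)\|\Delta Q\|$, and no averaging over $\eta$ will tighten $1+\lambda$ down to $1$ once $\lambda>0$; the claim that it ``tightens to $\gamma\|\Delta Q\|$'' is simply false as stated.

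The move you are missing, and that the paper uses (adapting Munos et al.), is to push the term $\lambda\,\Delta Q(s,a)$ back through the $\mathbb{E}_{\eta}$ expectation \emph{before} taking absolute values. Since $a\sim\eta(\cdot|s)$ inside that expectation, $\mathbb{E}_{\eta}[\lambda(a)\,\Delta Q(s,a)] = \sum_b \eta(b|s)\lambda(b)\,\Delta Q(s,b)$, so the whole bracket becomes $\sum_b\bigl(\pi(b|s) - \eta(b|s)\lambda(b)\bigr)\Delta Q(s,b)$. These coefficients sum over $b$ to $1-\mathbb{E}_{\eta}[\lambda]\in[0,1]$, and the paper reads the result as a sub-convex combination of the $\Delta Q(y,b)$ with total mass at most $\gamma$, yielding the contraction directly. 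This completely bypasses your $\lambda\le 2\pi(a|s)$ requirement; the only condition implicitly needed is the non-negativity $\lambda(b)\le \pi(b|s)/\eta(b|s)$, i.e.\ the standard truncated importance-sampling bound, not the ad hoc inequality your route demands.
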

\begin{proof}
\label{thm:case_safe}
    Follows from the adaptation of proof of Theorem 1 in \cite{munos_safe} to one-step importance sampling. It is trivial to observe from Definition \ref{def:exp_op_definition} that $Q^{\pi}$ is the fixed point of the operator $\mathcal{H}$ since:
    \begin{equation}
        \mathbb{E}_{s' \sim P(\cdot|s, a)}[r + \gamma\mathbb{E}_{\pi}Q^{\pi}(s', \cdot) - Q^{\pi}(s, a)] = (\mathcal{T}^{\pi}Q^{\pi} - Q^{\pi})(s, a) = 0,
    \end{equation}
    as $Q^{\pi}$ is the fixed point of $\mathcal{T}^{\pi}$. Let $\Delta Q \coloneqq Q - Q^{\pi}$, and from Lemma \ref{lem:difference}, we have:
    \begin{align}
        \mathcal{H}Q(s, a) - Q^{\pi}(s, a) &= \mathbb{E}_{\eta}[\gamma\left(\mathbb{E}_{\pi}\Delta Q(s, \cdot) - \lambda \Delta Q(s, a)\right)], \\
       &= \gamma \mathbb{E}_{\eta}[\mathbb{E}_{\pi}\Delta Q(s, \cdot) - \lambda\Delta Q(s, a)], \\
       &=\gamma\mathbb{E}_{\eta}[\mathbb{E}_{\pi}\Delta Q(s, \cdot) - \mathbb{E}_{a}[\lambda(a, \mathcal{B})\Delta Q(s, a)|\mathcal{B}]], \\
       &=\gamma\mathbb{E}_{\eta}[\sum_{b}(\pi(b|s) - \eta(b|s)\lambda(b|\mathcal{B}))\Delta Q(s, b)]. 
    \end{align}
    Now, since $\lambda \in [0, 1]$, we have:
    \begin{equation}
        \mathcal{H}Q(s, a) - Q^{\pi}(s, a) = \sum_{y, b}w_{y, b}\Delta Q(y, b),
    \end{equation}
    which is a linear combination of $\Delta Q(y, b)$ weighted by:
    \begin{equation}
        w_{y, b} \coloneqq \gamma \mathbb{E}_{\eta}[(\pi(b | s) - \eta(b|s)\lambda(b|\mathcal{B}))\mathbb{I}\{s = y\}],
    \end{equation}
    where $\mathbb{I}(\cdot)$ is the indicator function. The sum of those coefficients over $y$ and $b$ is:
    \begin{align}
    \sum_{y, b}\omega_{y, b} &= \gamma \mathbb{E}_{\eta}[\sum_{b}(\pi(b|s) - \eta(b|s)\lambda(b, \mathcal{B}))], \\
    &= \gamma \mathbb{E}_{\eta}[\mathbb{E}_{a}[1 - \lambda(a, \mathcal{B})|\mathcal{B}]] \\
    &= \gamma \mathbb{E}_{\eta}[1 - \lambda] \\
    &= \gamma - \gamma \Lambda,
    \end{align}
    where $\Lambda = \mathbb{E}_{\eta}[\lambda]$. As $0 \leq \Lambda \leq 1$, we have $\sum_{y, b}\omega_{y, b} \leq \gamma$. Therefore, $\mathcal{H}Q(s, a) - Q^{\pi}(s, a)$ is a sub-convex combination of $\Delta Q(y, b)$ weighted by non-negative coefficients $\omega_{y, b}$ which sum to at most $\gamma$. Hence, $\mathcal{H}$ is a $\gamma$-contraction mapping around $Q^{\pi}$.
\end{proof}

\begin{corollary}
\label{cor:safe_off_policy}
    In the proof of Theorem \ref{thm:safe_off_policy}, notice that the term $\gamma\mathbb{E}_{\eta}[\lambda]$ depends on $(s, a)$. Let:
    \begin{equation}
        \xi(s, a) \coloneqq \gamma - \gamma\mathbb{E}_{\eta}[\lambda].
    \end{equation}
    Then, we have:
    \begin{equation}
        |\mathcal{H}Q(s, a) - Q^{\pi}(s, a)| \leq \xi(s, a)||Q - Q^{\pi}||. 
    \end{equation}
    Thus, $\xi(s, a) \in [0, \gamma]$ is a contraction coefficient based on $(s, a)$ where $\xi(s, a) = \gamma$ if $\lambda = 0$, i.e., when there is no similarity, and close to zero when the behavioral policies corresponding to the sampled batch match the current policy.
\end{corollary}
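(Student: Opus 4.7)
The plan is to reuse the calculation already embedded in the proof of Theorem \ref{thm:safe_off_policy} and simply read off the per-state-action contraction coefficient, rather than produce any new contraction argument. The key identity to isolate from that proof is $\sum_{y,b}\omega_{y,b} = \gamma - \gamma\Lambda$ with $\Lambda = \mathbb{E}_{\eta}[\lambda]$. Since the summand coefficients $\omega_{y,b}$ depend on the conditioning state-action pair $(s,a)$ (the indicator and the expectation under $\eta$ are conditioned on $(s,a)$), so does $\Lambda$, and therefore so does $\xi(s,a) := \gamma - \gamma\mathbb{E}_{\eta}[\lambda]$. I would make this $(s,a)$-dependence explicit at the outset.

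First I would write $\mathcal{H}Q(s,a) - Q^{\pi}(s,a) = \sum_{y,b}\omega_{y,b}\Delta Q(y,b)$ exactly as in the proof of Theorem \ref{thm:safe_off_policy}, keeping the $(s,a)$ conditioning in view. Then I would bound the right-hand side by the maximum norm,
\begin{equation*}
|\mathcal{H}Q(s,a) - Q^{\pi}(s,a)| \leq \Big(\sum_{y,b}\omega_{y,b}\Big)\,\|Q - Q^{\pi}\|,
\end{equation*}
which is legitimate because Theorem \ref{thm:safe_off_policy} already established that the $\omega_{y,b}$ are non-negative and sub-convex. Substituting $\sum_{y,b}\omega_{y,b} = \gamma - \gamma\mathbb{E}_{\eta}[\lambda] = \xi(s,a)$ gives the claimed per-pair bound.

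Finally I would verify the range and the two limiting behaviours of $\xi(s,a)$. Since $\lambda\in[0,1]$ by construction of the DPS weights in Equation (\ref{eq:complete_similarity_measurement}), $\mathbb{E}_{\eta}[\lambda]\in[0,1]$, so $\xi(s,a)\in[0,\gamma]$. If $\lambda=0$ everywhere in the expectation (no similarity), then $\mathbb{E}_{\eta}[\lambda]=0$ and $\xi(s,a)=\gamma$, recovering the worst-case contraction; if the behavioural policies corresponding to the sampled batch match the current policy then $\lambda\to 1$, $\mathbb{E}_{\eta}[\lambda]\to 1$, and $\xi(s,a)\to 0$, giving near-zero residual.

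I do not expect any real obstacle here: the corollary is essentially a book-keeping refinement of Theorem \ref{thm:safe_off_policy}. The only subtle point is to keep straight that the expectation $\mathbb{E}_{\eta}[\lambda]$ inside $\xi(s,a)$ is the conditional expectation given $(s,a)$ over the behavioural sampling distribution, and that this conditional expectation is what makes $\xi$ a function of $(s,a)$ rather than a global constant; once that is made explicit, the bound and the two limiting cases are immediate.
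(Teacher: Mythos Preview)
Your proposal is correct and mirrors the paper's treatment: the paper gives no separate proof for this corollary, since it is stated as an immediate reading-off from the proof of Theorem \ref{thm:safe_off_policy}, using exactly the identity $\sum_{y,b}\omega_{y,b}=\gamma-\gamma\mathbb{E}_{\eta}[\lambda]$ and the non-negativity of the $\omega_{y,b}$ that you invoke. Your explicit verification of the range $[0,\gamma]$ and the two limiting cases is likewise just a restatement of what the corollary asserts.
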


\section{Experimental Details}
\label{sec:exp_details}

All networks are trained with PyTorch (version 1.8.1)~\cite{pytorch}, using default values for all unmentioned hyper-parameters. 

\subsection{Environment}

Performances of all methods are evaluated in MuJoCo (mujoco-py version 1.50)~\cite{mujoco}, and Box2D (version 2.3.10)~\cite{box2d} physics engines interfaced by OpenAI Gym (version 0.17.3)~\cite{gym}, using v3 environment for BipedalWalker and v2 for rest of the environments. The environment dynamics, state and action spaces, and reward functions are not pre-processed and modified for easy reproducibility and fair evaluation procedure with the baseline algorithms. Each environment episode runs for a maximum of 1000 steps until a terminal condition is encountered. The multi-dimensional action space for all environments is within the range (-1, 1) except for Humanoid, which uses the range of (-0.4, 0.4).

\subsection{Experimental Setup}

All experiments are run for 1 million time steps with evaluations every 1000 time steps, where an evaluation of an agent records the average reward over 10 episodes without exploration noise and updates. We report the average evaluation return of 10 random seeds for each environment, including the initialization of behavioral policies, simulators, and network parameters. All agents are initialized with different seeds in the DASE architecture to obtain randomness in the explored state-action spaces. Unless stated otherwise, each agent is trained by one training iteration after each time step. Agents are trained by transition tuples $(s, a, r, s')$ uniformly sampled from the shared experience replay~\cite{experience_replay}. 

\subsection{Implementation}

Our implementation of the off-policy actor-critic algorithms, DDPG~\cite{ddpg}, SAC~\cite{sac} and TD3~\cite{td3}, and the baseline algorithm, DPD~\cite{dual_policy_dist}, closely follows the set of hyper-parameters given in the respective papers. For the implementation of TD3~\cite{td3}, we use the author's GitHub repository \footnote{\url{https://github.com/sfujim/TD3}} for the fine-tuned version of the algorithm and the DDPG~\cite{ddpg} implementation. For the implementation of the DPD algorithm~\cite{dual_policy_dist}, we use the author's GitHub repository \footnote{\url{https://github.com/kiminh/dual-policy-distillation}}. We also give the hyper-parameter setting given in~\cite{td3} for the sake of comparison in Table \ref{table:algo_specific_parameters}. The SAC algorithm~\cite{sac} follows the same setup and hyper-parameter settings for the deterministic policy, as given in the paper. We implement the DASE architecture on top of the baseline algorithms separately. The implementation distributes the agents to different threads while each agent can access the shared experience replay~\cite{experience_replay} contained in the RAM.

\subsection{Architecture and Hyper-parameter Setting}

\label{sec:implementation_details}
Different from the paper, we increase the batch size in DDPG algorithm~\cite{ddpg} to 256 in order for agents to sufficiently see other agents' experiences and replace Ornstein–Uhlenbeck exploration noise with a zero-mean Gaussian with a standard deviation of 0.1. SAC~\cite{sac} follows the same hyper-parameter setting given in the paper except for the deterministic policy. As no exploration noise is applied to the stochastic actor of SAC~\cite{sac}, we add Gaussian noise with a standard deviation of 0.1 to the actions selected by the deterministic policy of SAC~\cite{sac}. For DPD~\cite{dual_policy_dist}, we apply the same set of hyper-parameters in DDPG + DPD~\cite{dual_policy_dist} to TD3~\cite{td3} and SAC~\cite{sac}. Shared and algorithm-specific hyper-parameters are given in Table \ref{table:shared_parameters} and \ref{table:algo_specific_parameters}, respectively.

\subsection{Hyper-parameter Optimization}

No hyper-parameter optimization was performed on DDPG~\cite{ddpg} and TD3~\cite{td3}. For SAC~\cite{sac}, reward scale for the LunarLanderContinuous and BipedalWalker environments as they are not presented in the original paper. We tried $5$, $10$, and $20$ for the reward scale. The reward scale value of $5$ is the one that gave the highest average return over the last 10 evaluations over 10 trials of 1 million time steps, for both environments.

\subsection{Evaluation}

Evaluations occur every 1000 steps, where an evaluation is an average reward over 10 episodes without exploration noise and network updates. We utilize a new environment with a fixed seed (the training seed + a constant) for each evaluation to decrease the variation caused by different seeds. Hence, each evaluation uses the same set of initial start states.

\subsection{Visualization}

Learning curves are used to show performance, and they are given as an average of 10 trials with a shaded zone added to reflect a half standard deviation across the trials. The curves are smoothed uniformly over a sliding window of 25 evaluations for visual clarity.

\begin{table}[htb]
\caption{Shared hyper-parameters.\label{table:shared_parameters}}
\vskip 0.15in
\begin{center}
\begin{small}
\begin{sc}
\begin{tabular}{lc}
\toprule
\textbf{Hyper-parameter} & \textbf{Value} \\
\midrule
    Actor Regularization & None \\
    Optimizer & Adam~\cite{adam} \\
    Nonlinearity & ReLU \\
    Discount Factor ($\gamma$) & 0.99 \\
    Gradient Clipping & False \\
    Number of Hidden Layers (All Networks) & 2 \\
\bottomrule
\end{tabular}
\end{sc}
\end{small}
\end{center}
\vskip -0.1in
\end{table}

\begin{table}[htb]
\caption{Algorithm specific hyper-parameters used for the implementation of the baseline algorithms.
\label{table:algo_specific_parameters}}
\vskip 0.15in
\begin{center}
\begin{small}
\begin{sc}
\begin{tabular}{lccc}
\toprule
\textbf{Hyper-parameter} & \textbf{DDPG} & \textbf{SAC} & \textbf{TD3} \\
\midrule
        Critic Learning Rate & $10^{-3}$ & $3 \times 10^{-4}$ & $3 \times 10^{-4}$ \\
        Critic Regularization & $10^{-2} \times ||\theta||^{2}$ & None & None \\
        Actor Learning Rate & $10^{-4}$ & $3 \times 10^{-4}$ & $3 \times 10^{-4}$ \\
        Target Update Rate ($\tau$) & $10^{-3}$ & $5 \times 10^{-3}$ & $5 \times 10^{-3}$ \\
        Batch Size & 256 & 256 & 256 \\
        Updates per optimization step & 1 & 1 & 1 \\
        Critic update interval & 1 & 1 & 1 \\
        Actor update interval & 1 & 1 & 1 \\
        Reward Scaling & 1 & 5 (20 for Humanoid) & 1 \\
        Normalized Observations & True & False & False \\
        Exploration Policy & $\mathcal{N}(0, 0.1)$ & $\mathcal{N}(0, 0.1)$ & $\mathcal{N}(0, 0.1)$ \\
        Start (Exploration) time steps & 25000 & 25000 & 25000 \\
        Number of Hidden Units in the First Layer & 400 & 256 & 256 \\
        Number of Hidden Units in the Second Layer & 300 & 256 & 256 \\
\bottomrule
\end{tabular}
\end{sc}
\end{small}
\end{center}
\vskip -0.1in
\end{table}

\clearpage

\section{Complete Evaluation Results}
\label{sec:additional_results}

\begin{figure*}[!hbt]
    \centering
    \begin{align*}
        &\text{{\blue} DDPG (single agent)} \quad &&\text{{\orange} DDPG + DASE ($1^{\text{st}}$ agent)} \\
        &\text{\text{{\purple} DDPG + DPD (average of two agents)}} \quad &&\text{{\green} DDPG + DASE ($2^{\text{nd}}$ agent)}
    \end{align*}
	\subfloat[\textbf{Replay Size:} 100,000]{
		\includegraphics[width=1.40in, keepaspectratio]{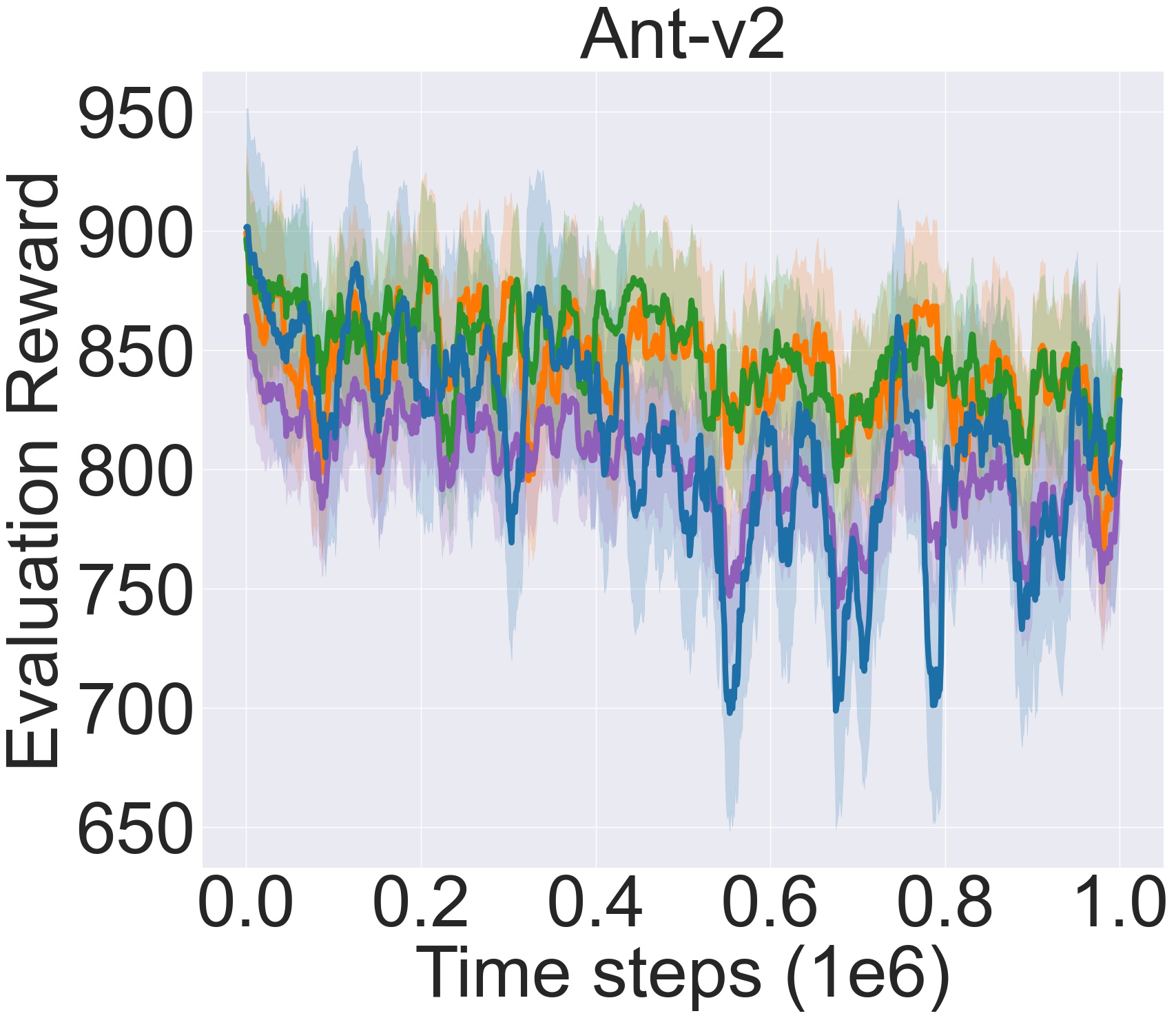}
		\includegraphics[width=1.40in, keepaspectratio]{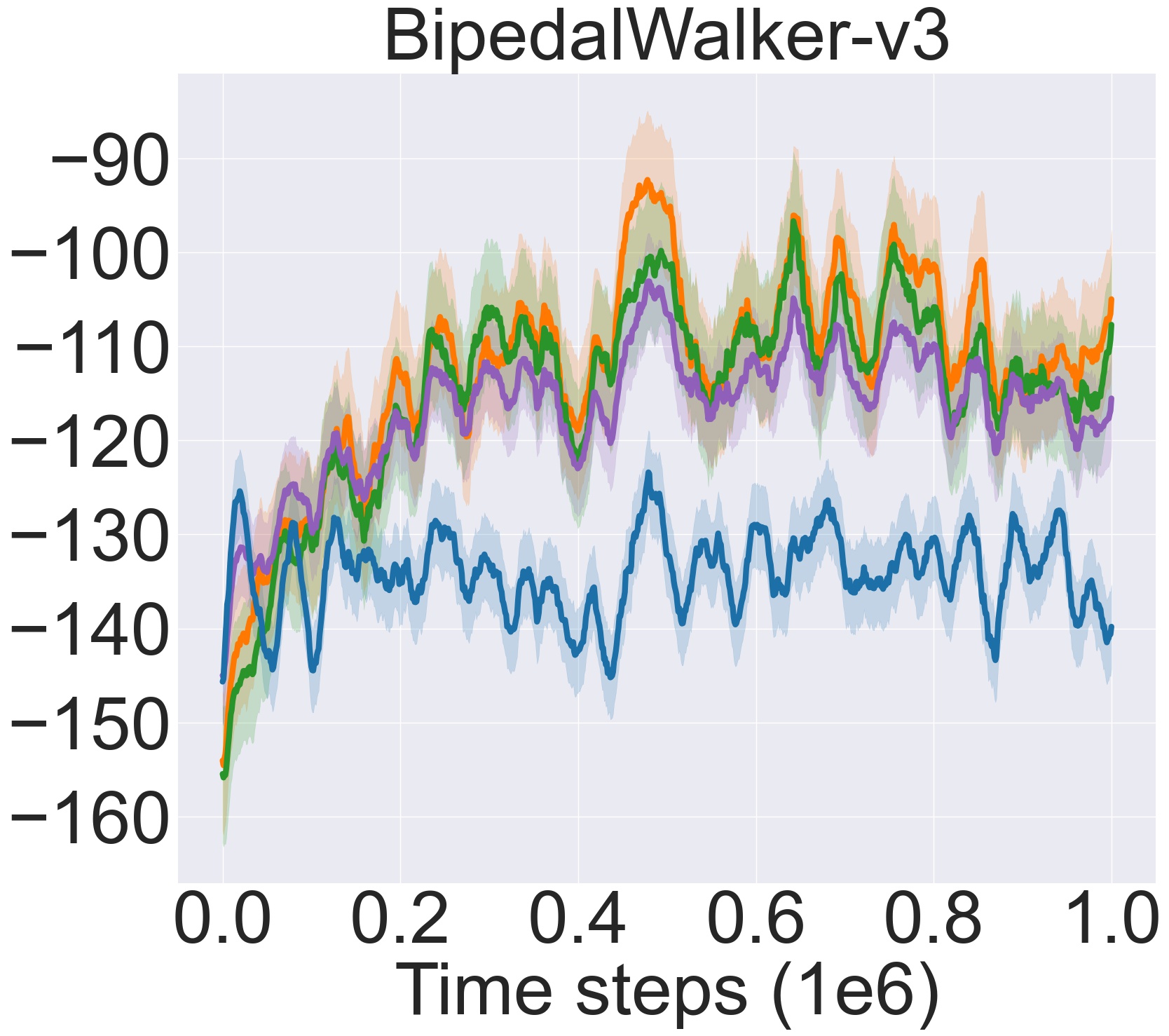}
		\includegraphics[width=1.40in, keepaspectratio]{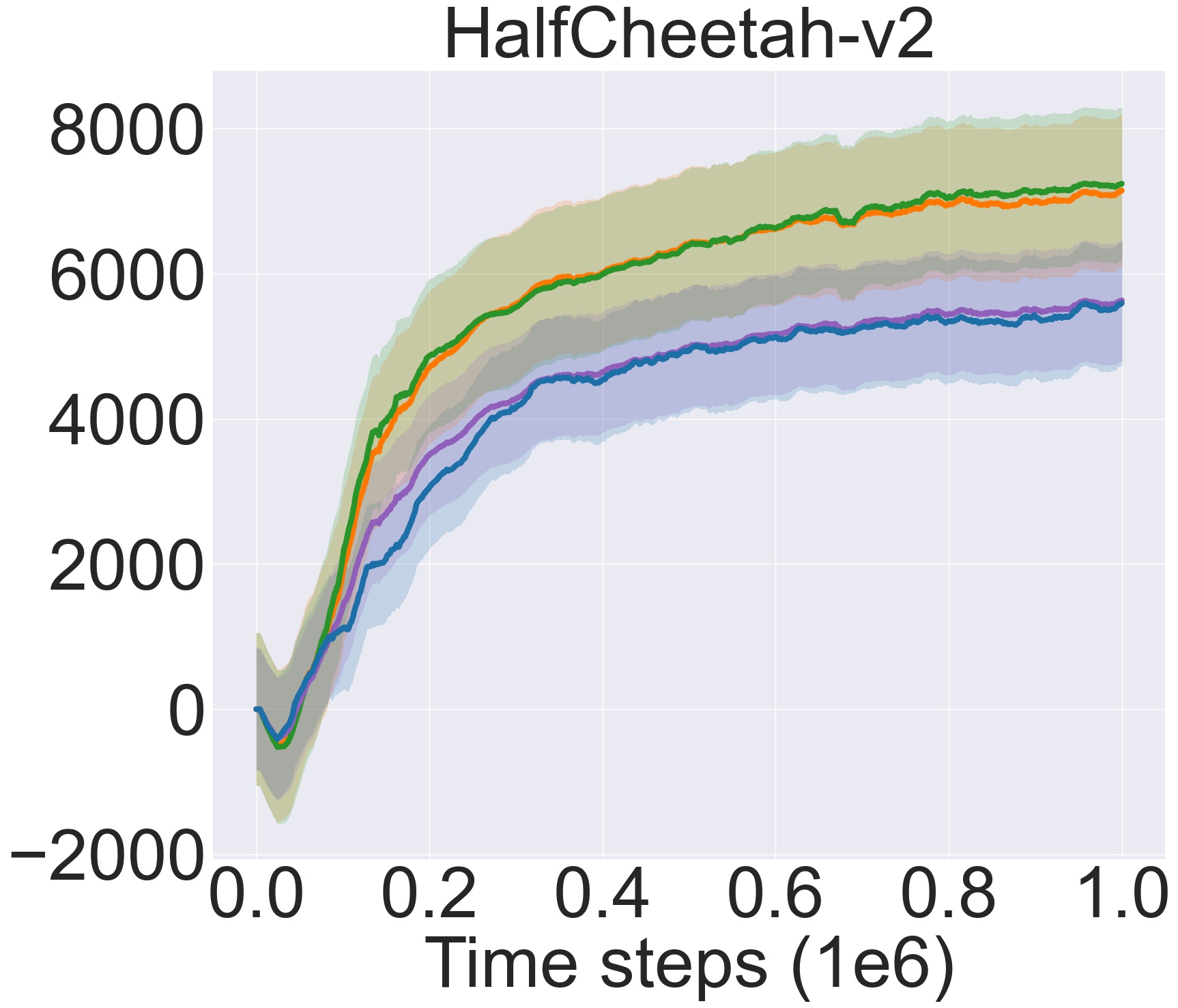}
		\includegraphics[width=1.40in, keepaspectratio]{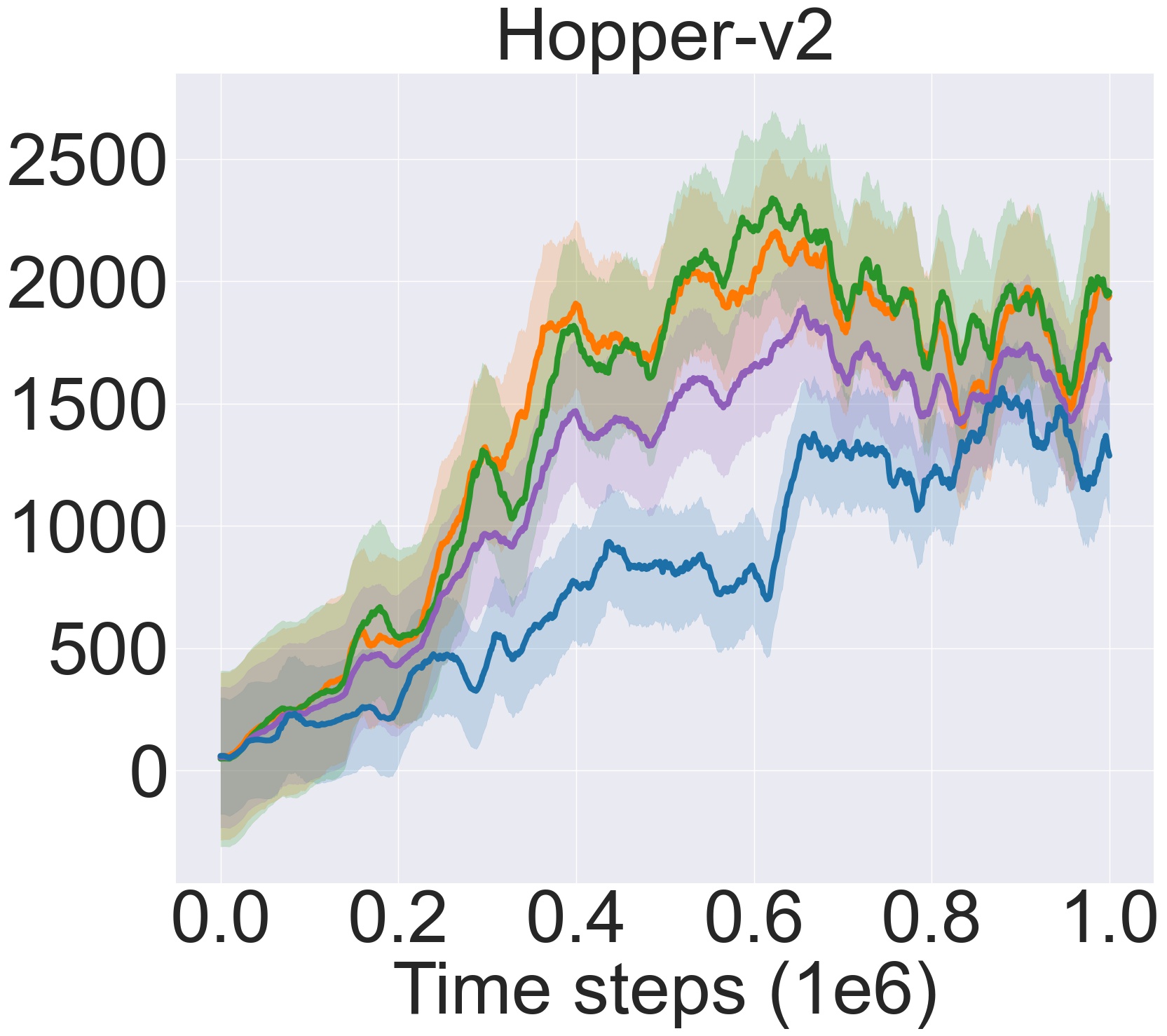}
	} \\
	\subfloat[\textbf{Replay Size:} 1,000,000]{
	    \includegraphics[width=1.40in, keepaspectratio]{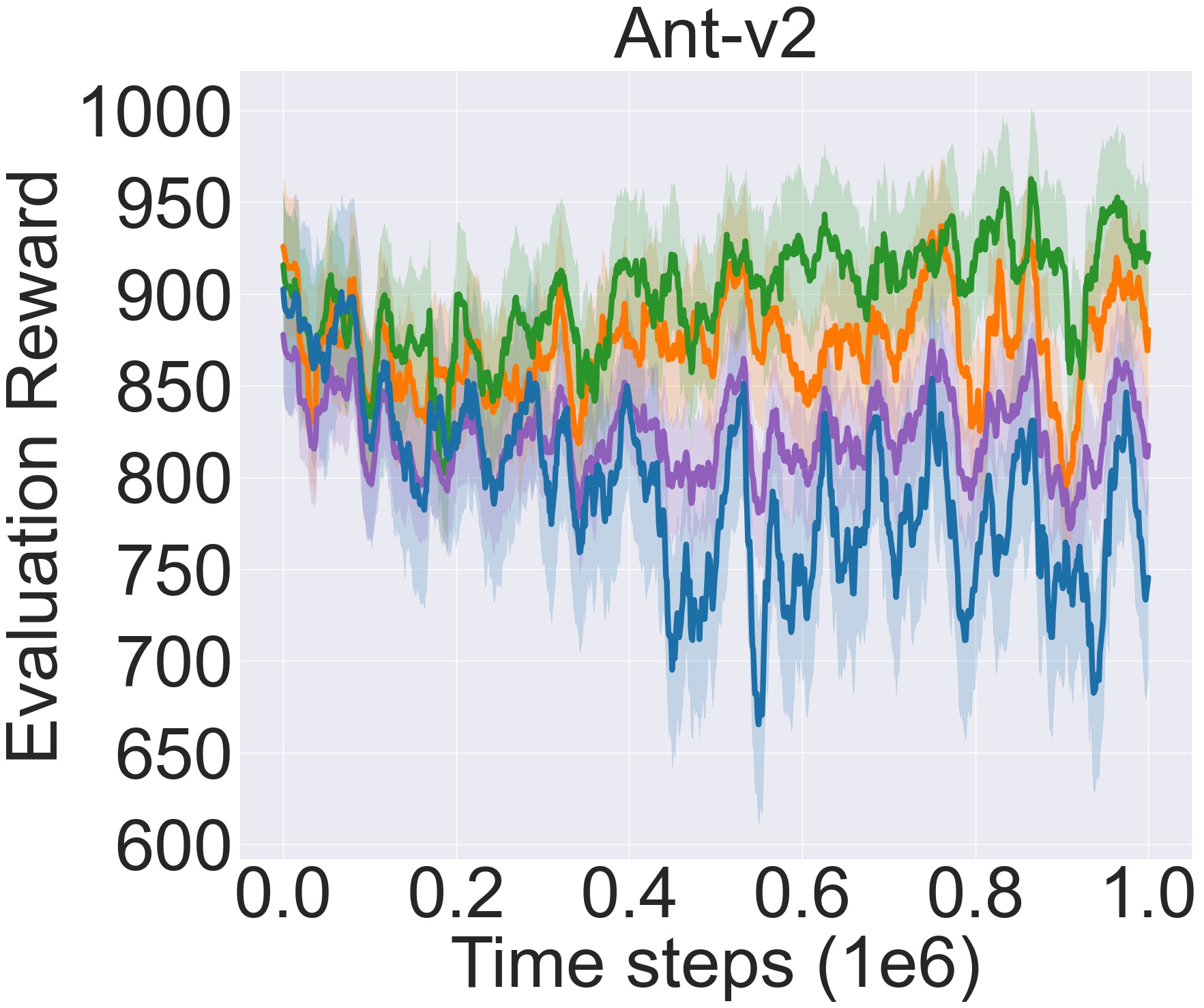}
		\includegraphics[width=1.40in, keepaspectratio]{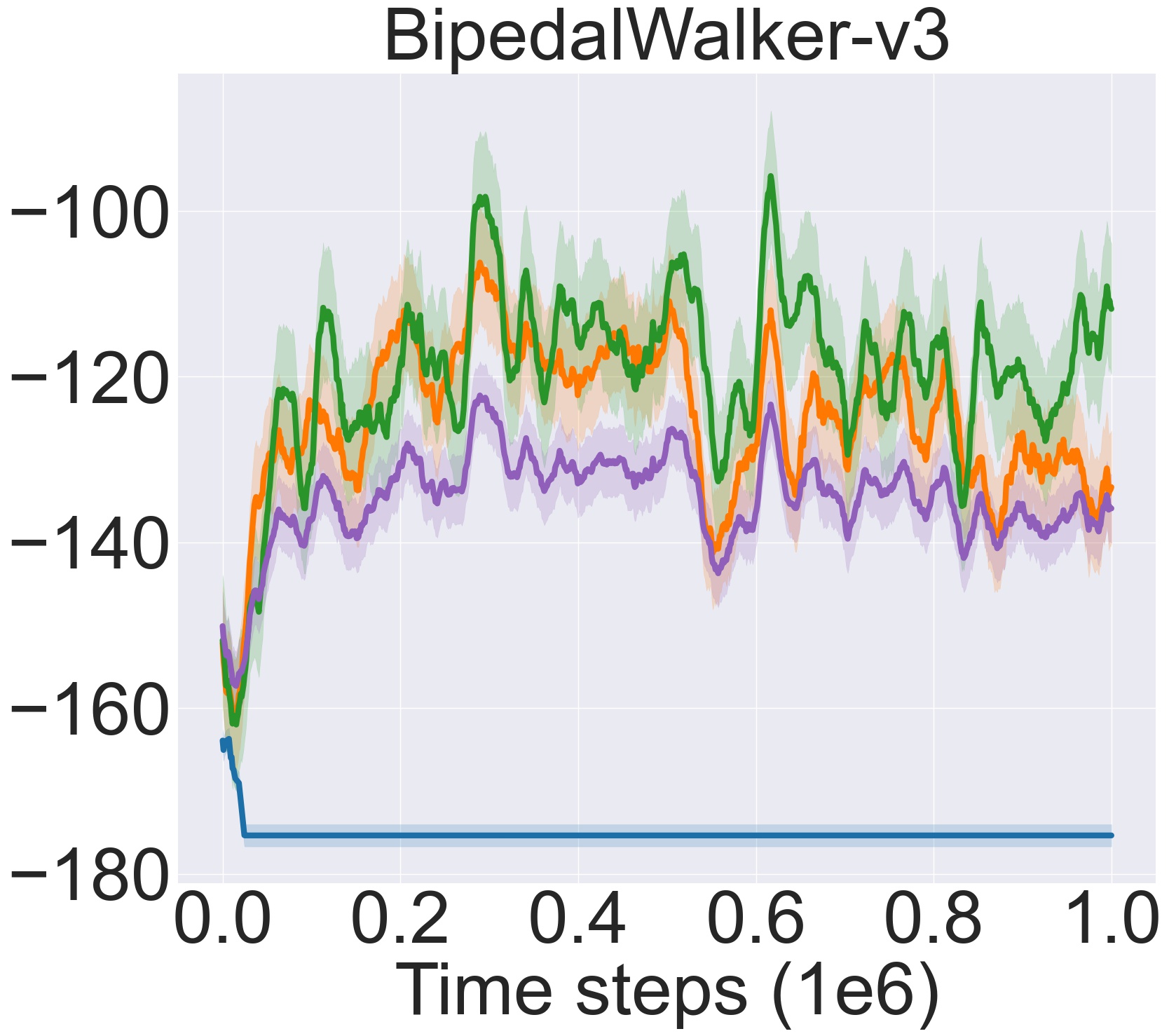} 
		\includegraphics[width=1.40in, keepaspectratio]{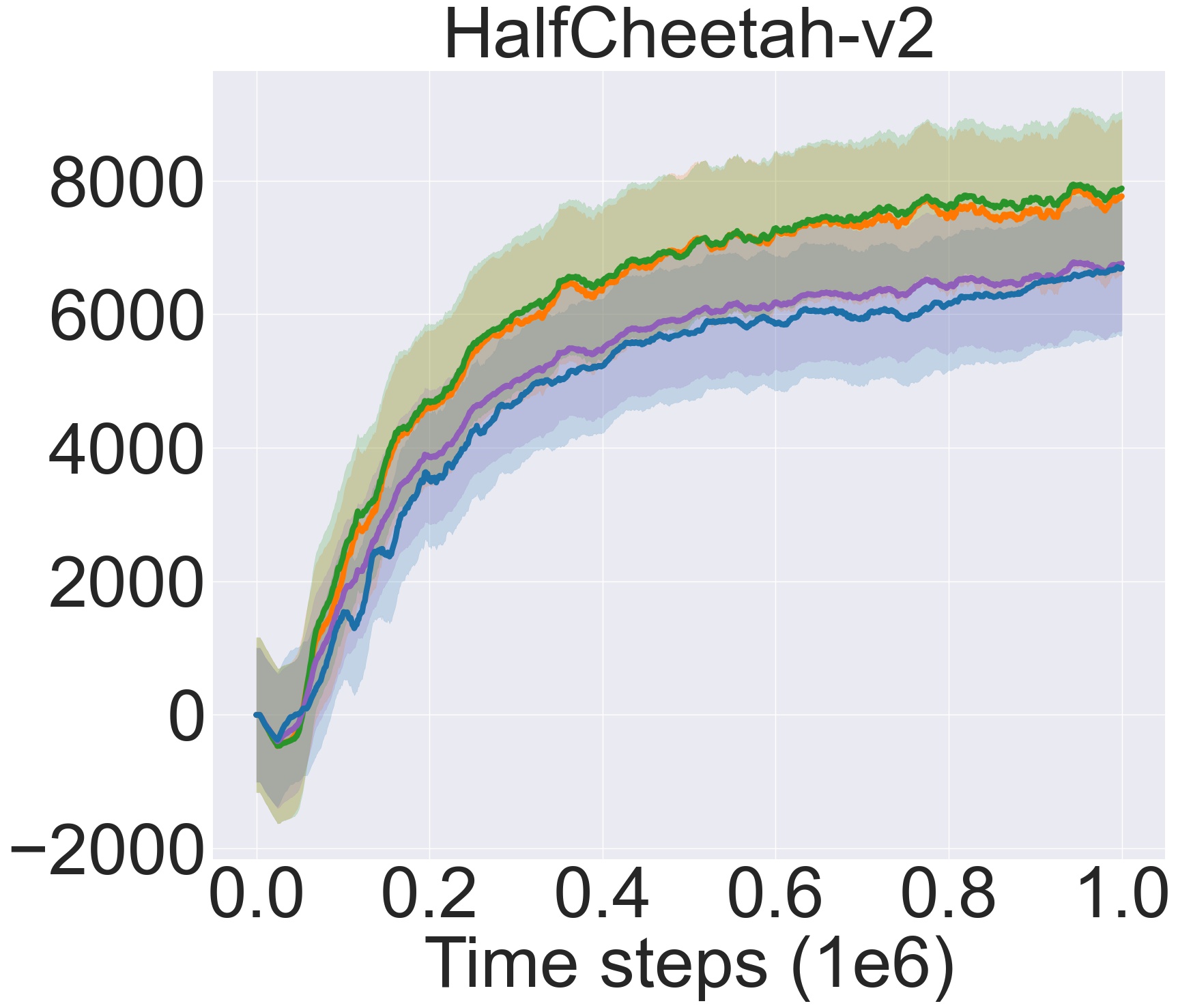}
		\includegraphics[width=1.40in, keepaspectratio]{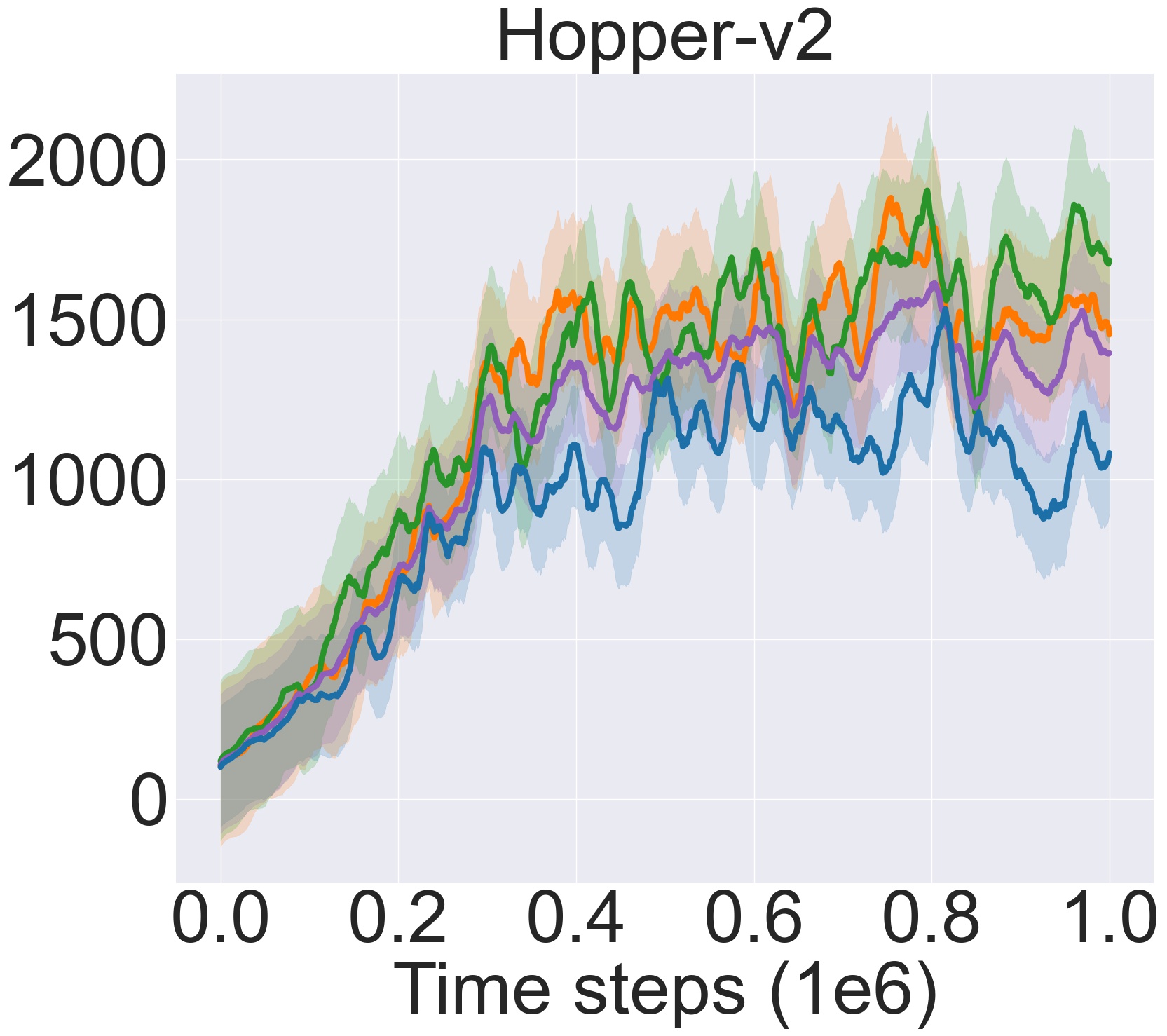}
	} \\
	\subfloat[\textbf{Replay Size:} 100,000]{
	    \includegraphics[width=1.40in, keepaspectratio]{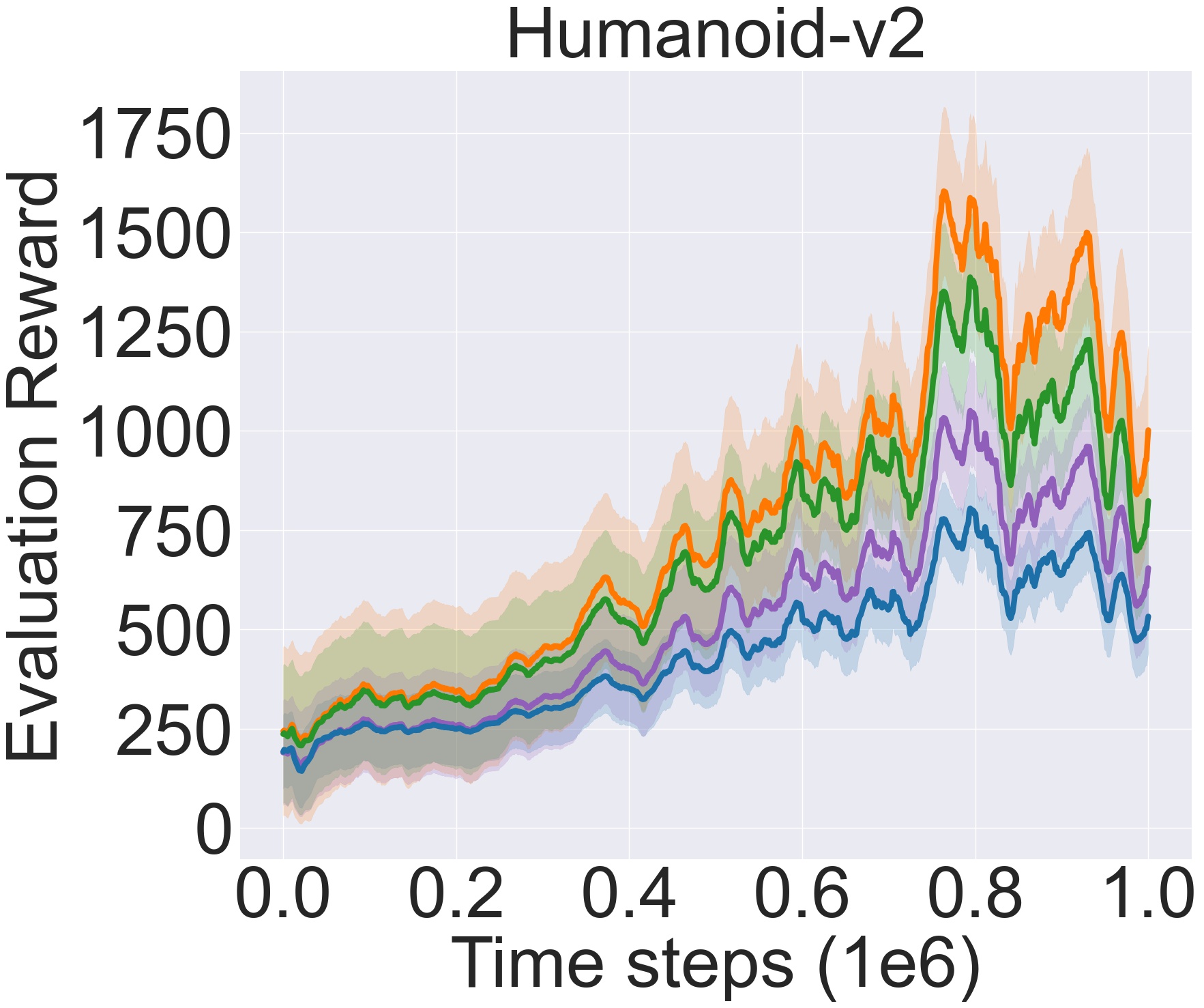}
		\includegraphics[width=1.40in, keepaspectratio]{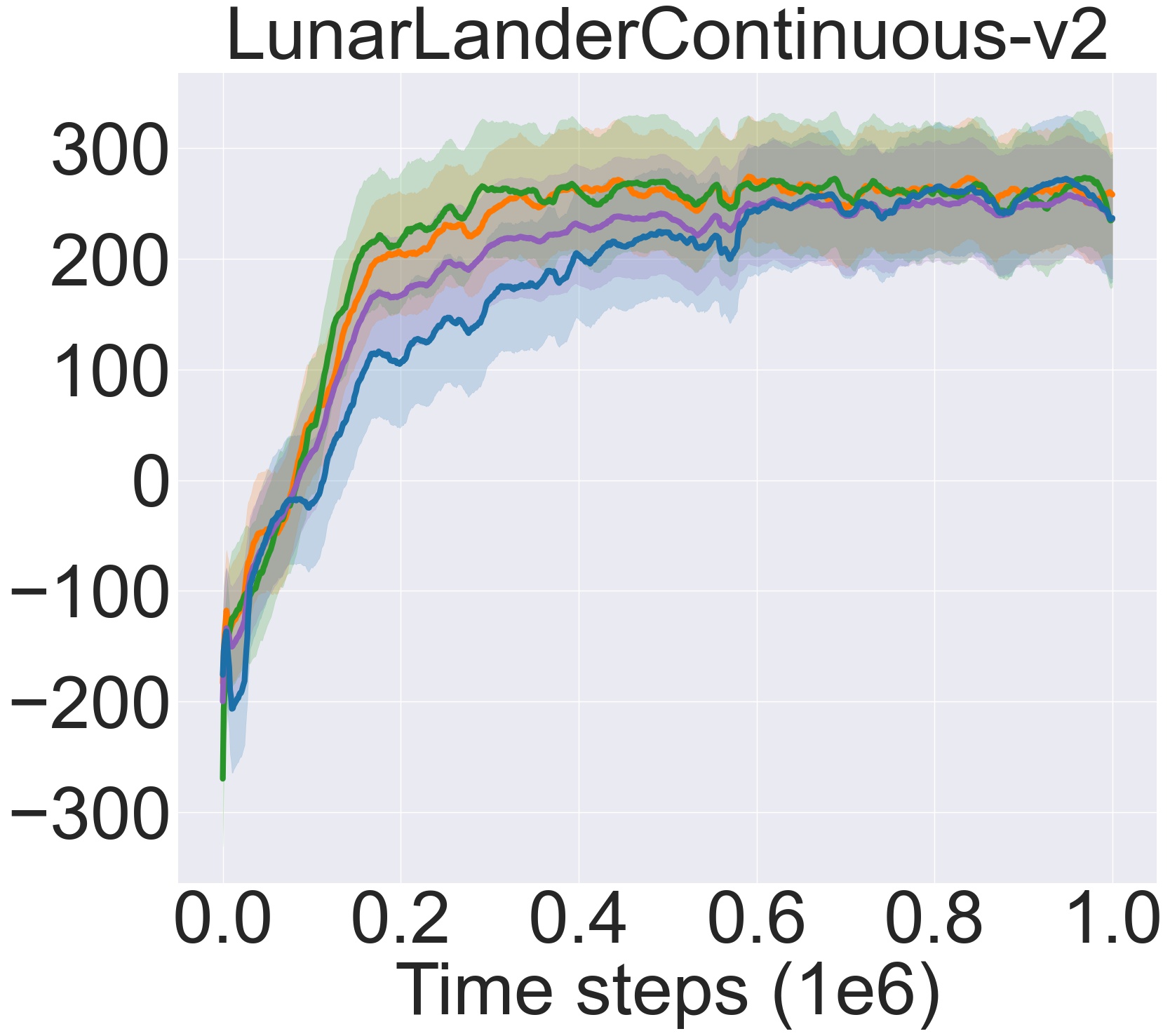}
		\includegraphics[width=1.40in, keepaspectratio]{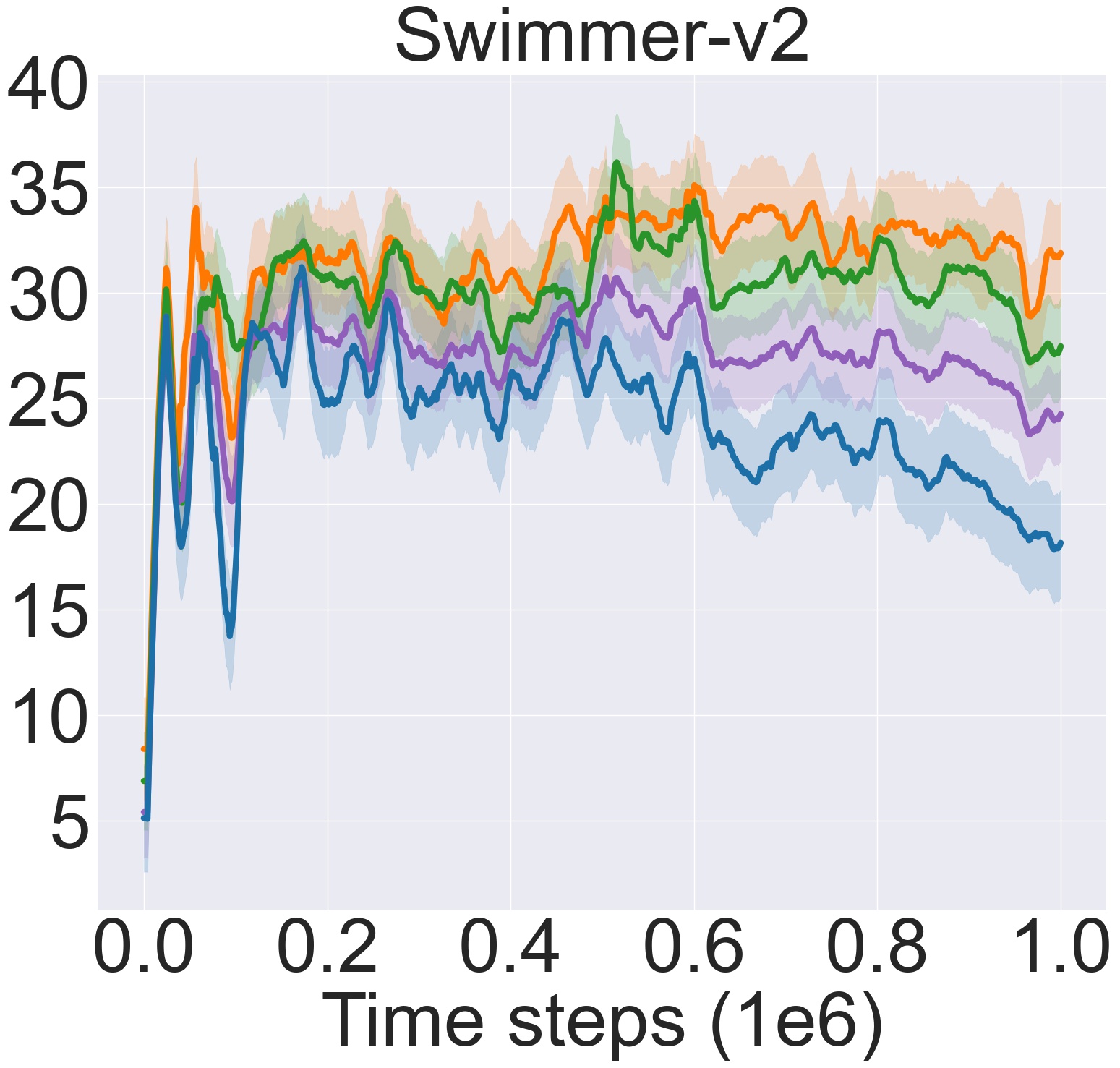}
		\includegraphics[width=1.40in, keepaspectratio]{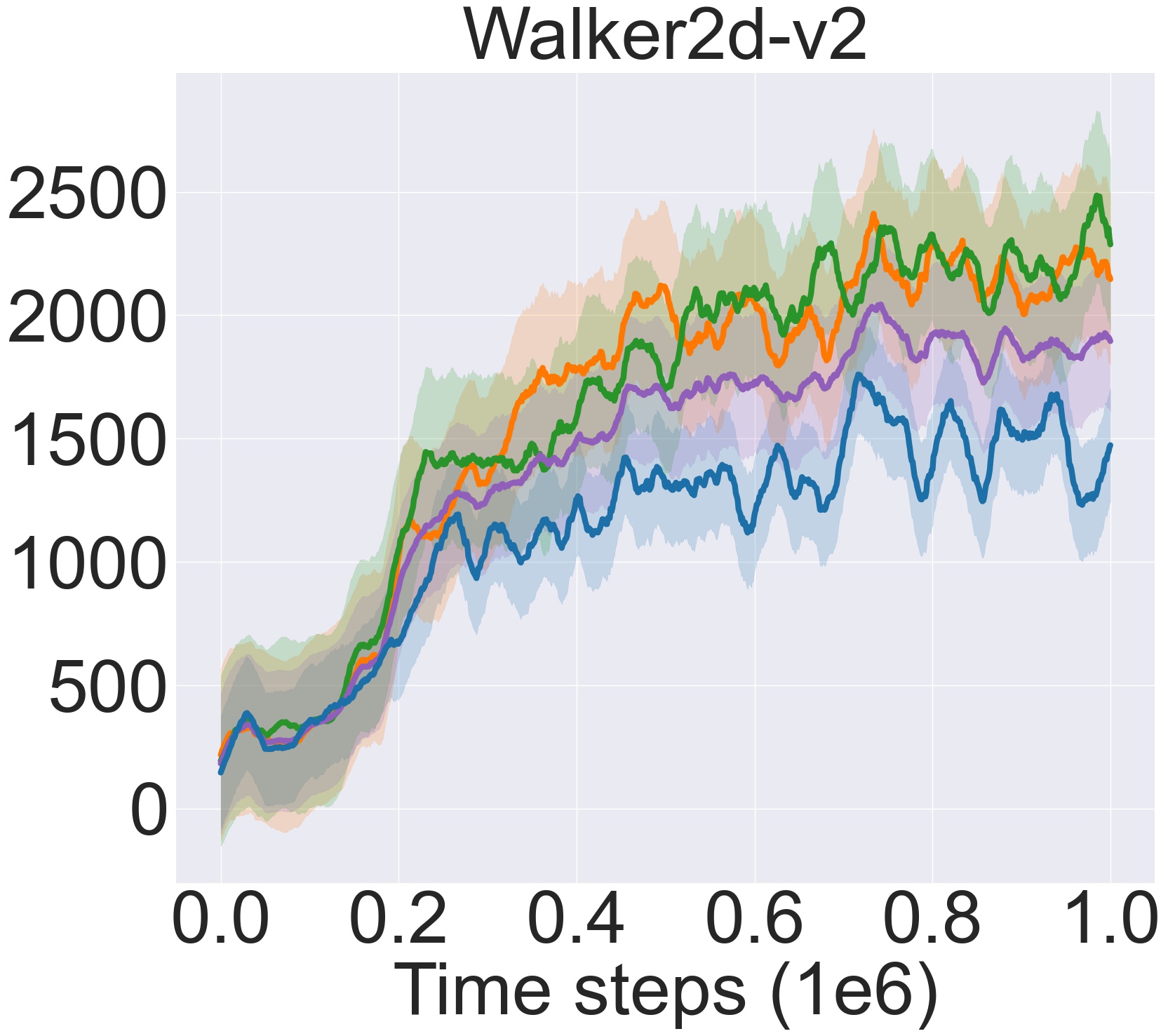}
	} \\
	\subfloat[\textbf{Replay Size:} 1,000,000]{
		\includegraphics[width=1.40in, keepaspectratio]{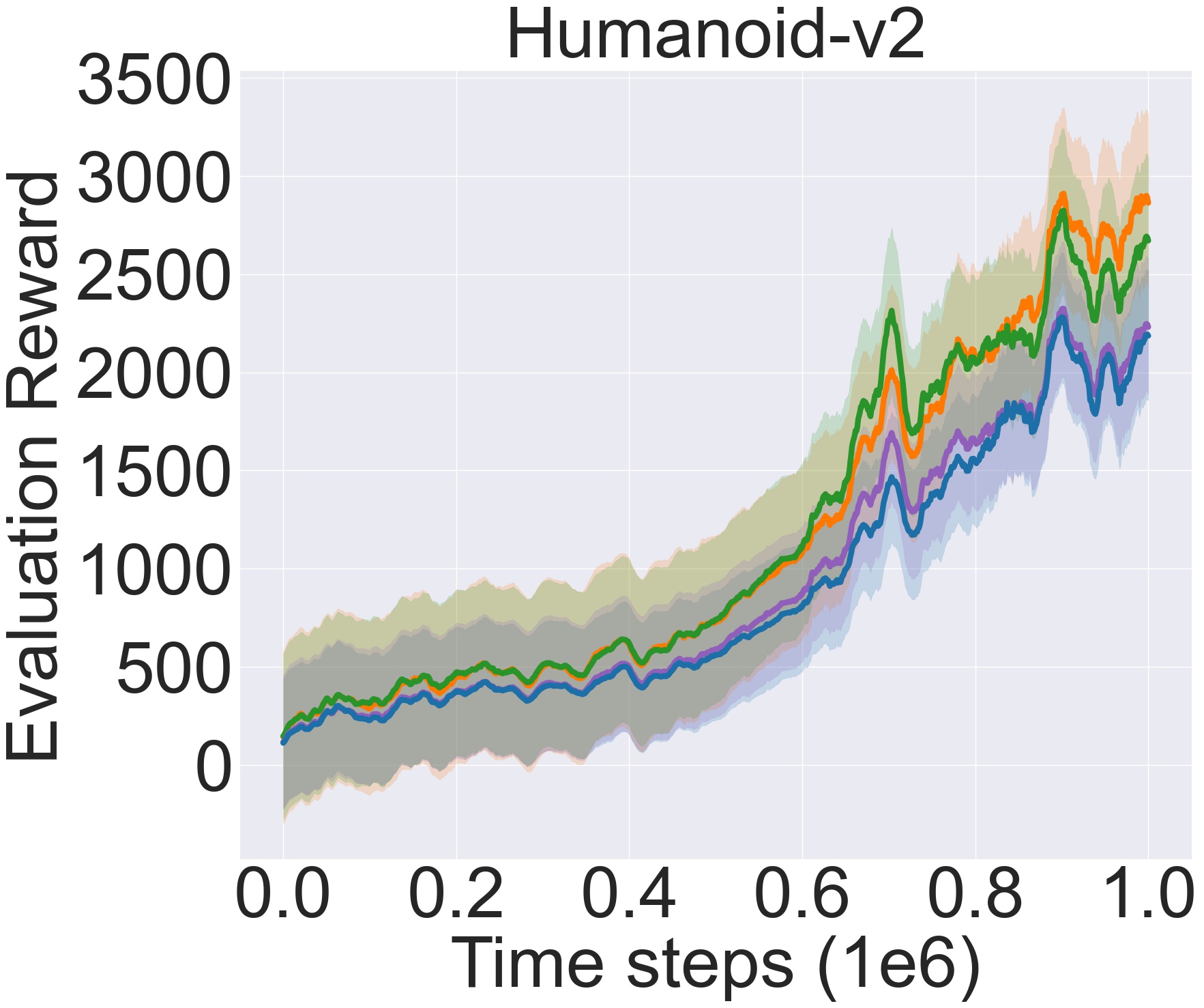}
		\includegraphics[width=1.40in, keepaspectratio]{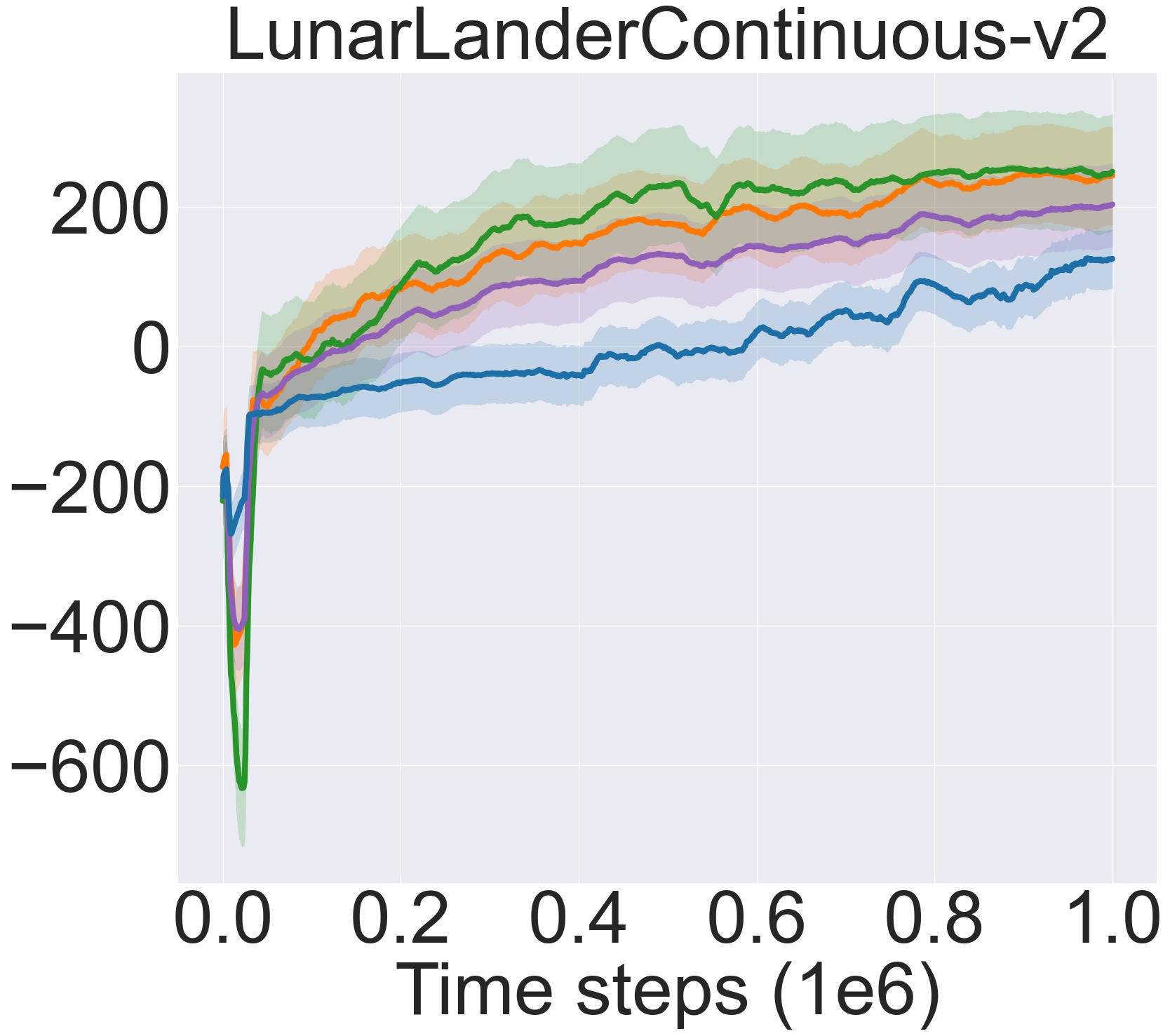}
		\includegraphics[width=1.40in, keepaspectratio]{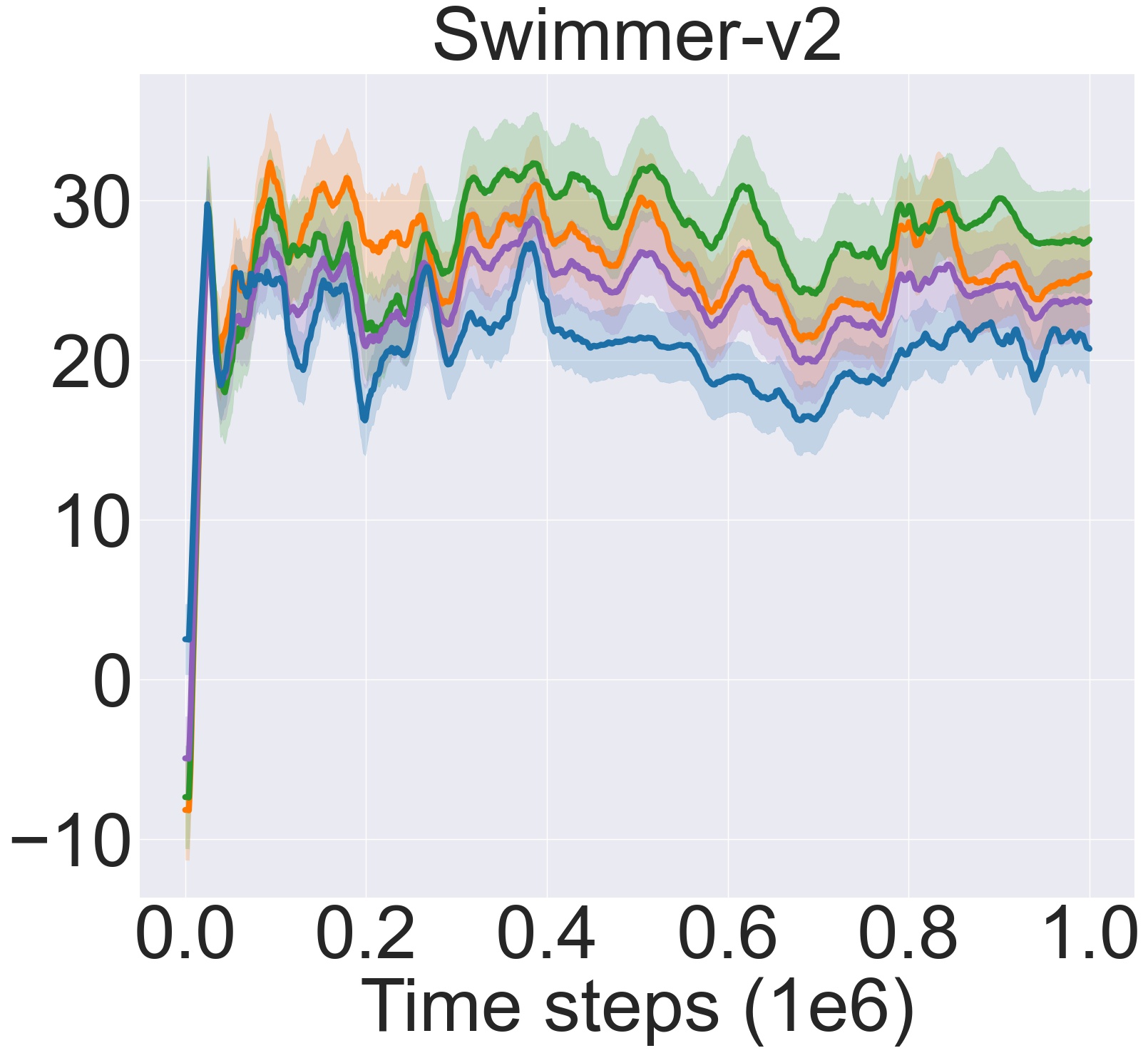}
		\includegraphics[width=1.40in, keepaspectratio]{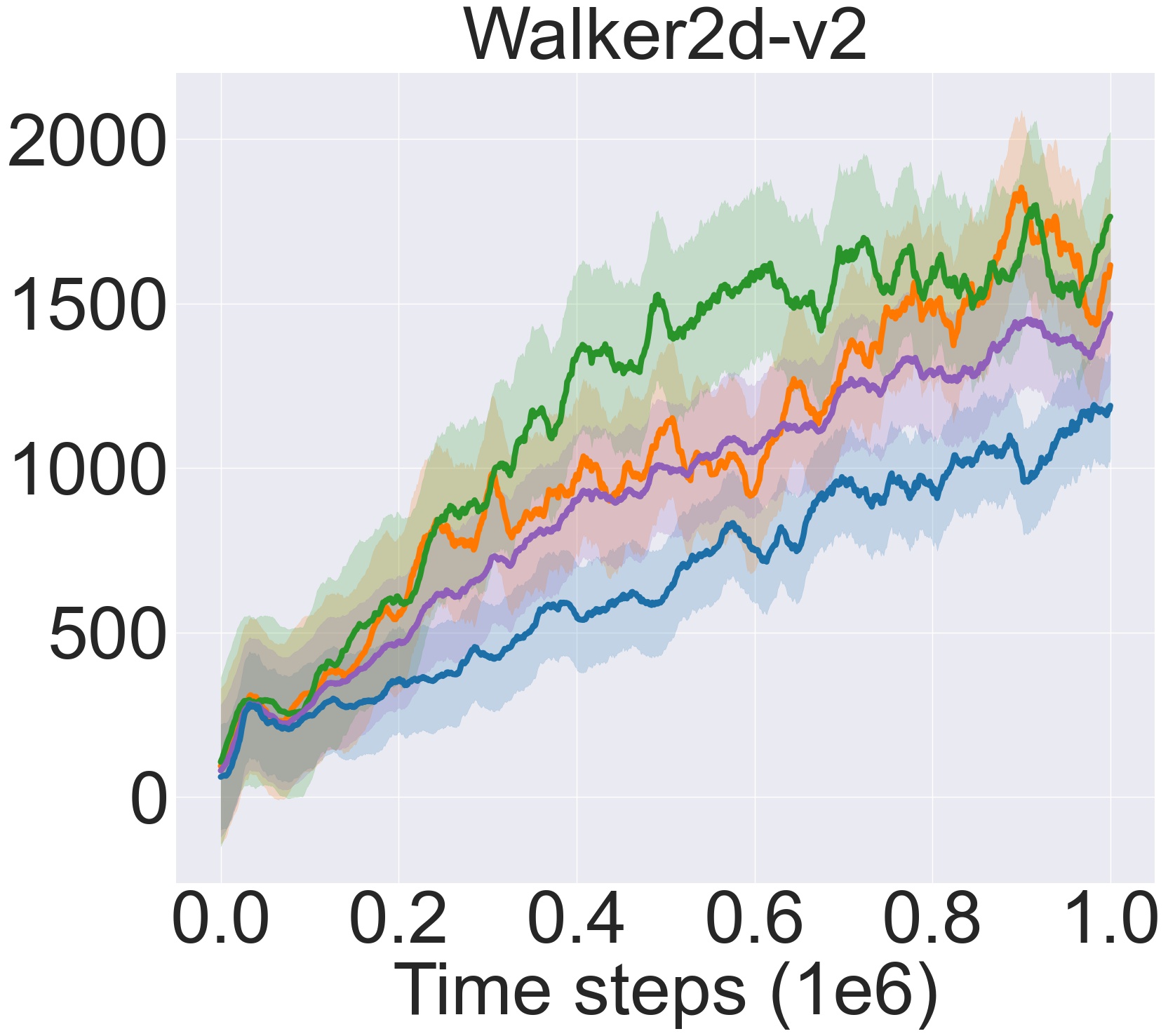}
	}
	\caption{Learning curves for the set of OpenAI Gym continuous control tasks under the DDPG algorithm. The shaded region represents half a standard deviation of the average evaluation return over 10 random seeds. A sliding window smoothes curves for visual clarity.}
\end{figure*}

\begin{figure*}[!]
    \centering
    \begin{align*}
        &\text{{\blue} SAC (single agent)} \quad &&\text{{\orange} SAC + DASE ($1^{\text{st}}$ agent)} \\
        &\text{\text{{\purple} SAC + DPD (average of two agents)}} \quad &&\text{{\green} SAC + DASE ($2^{\text{nd}}$ agent)}
    \end{align*}
	\subfloat[\textbf{Replay Size:} 100,000]{
		\includegraphics[width=1.40in, keepaspectratio]{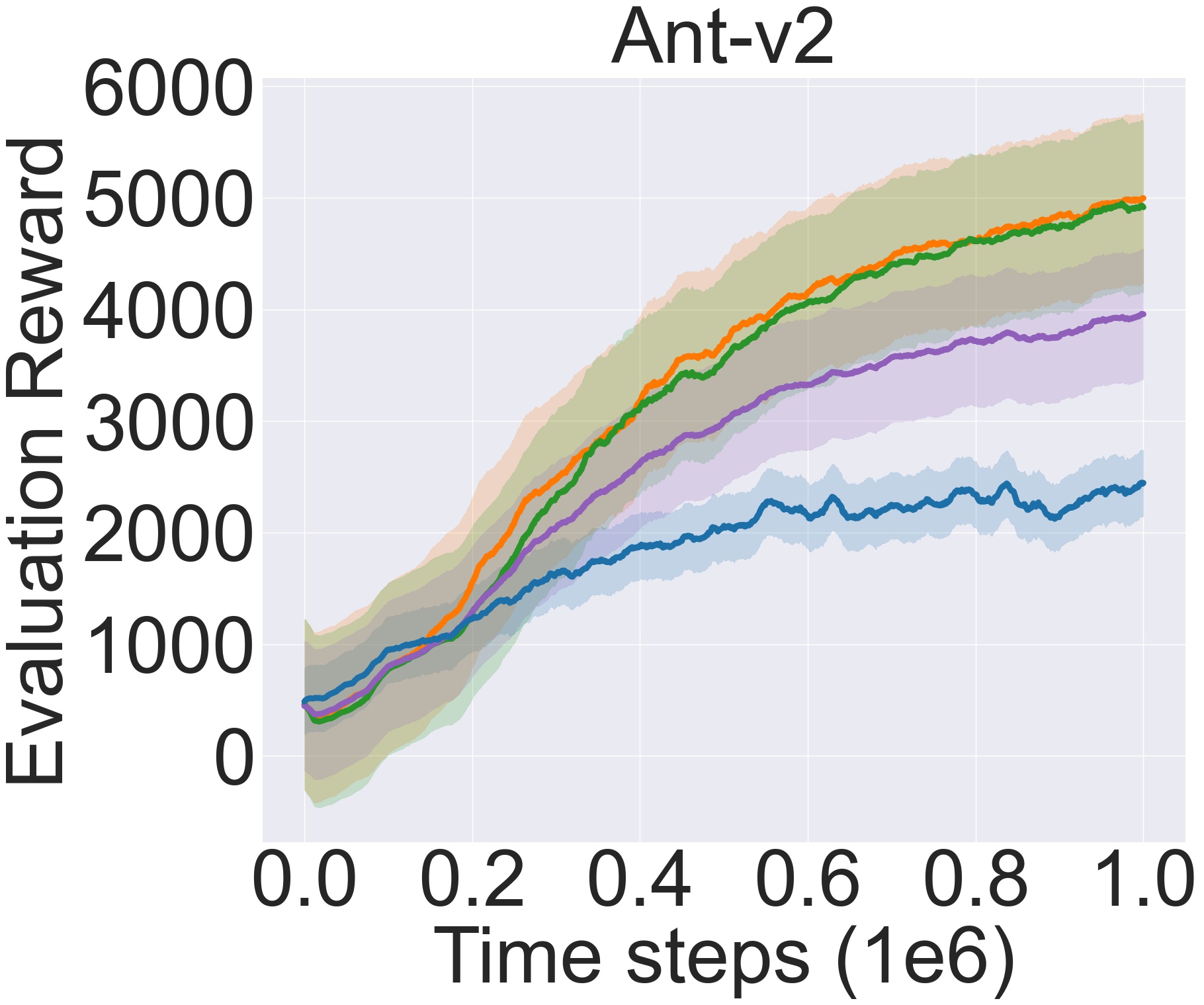}
		\includegraphics[width=1.40in, keepaspectratio]{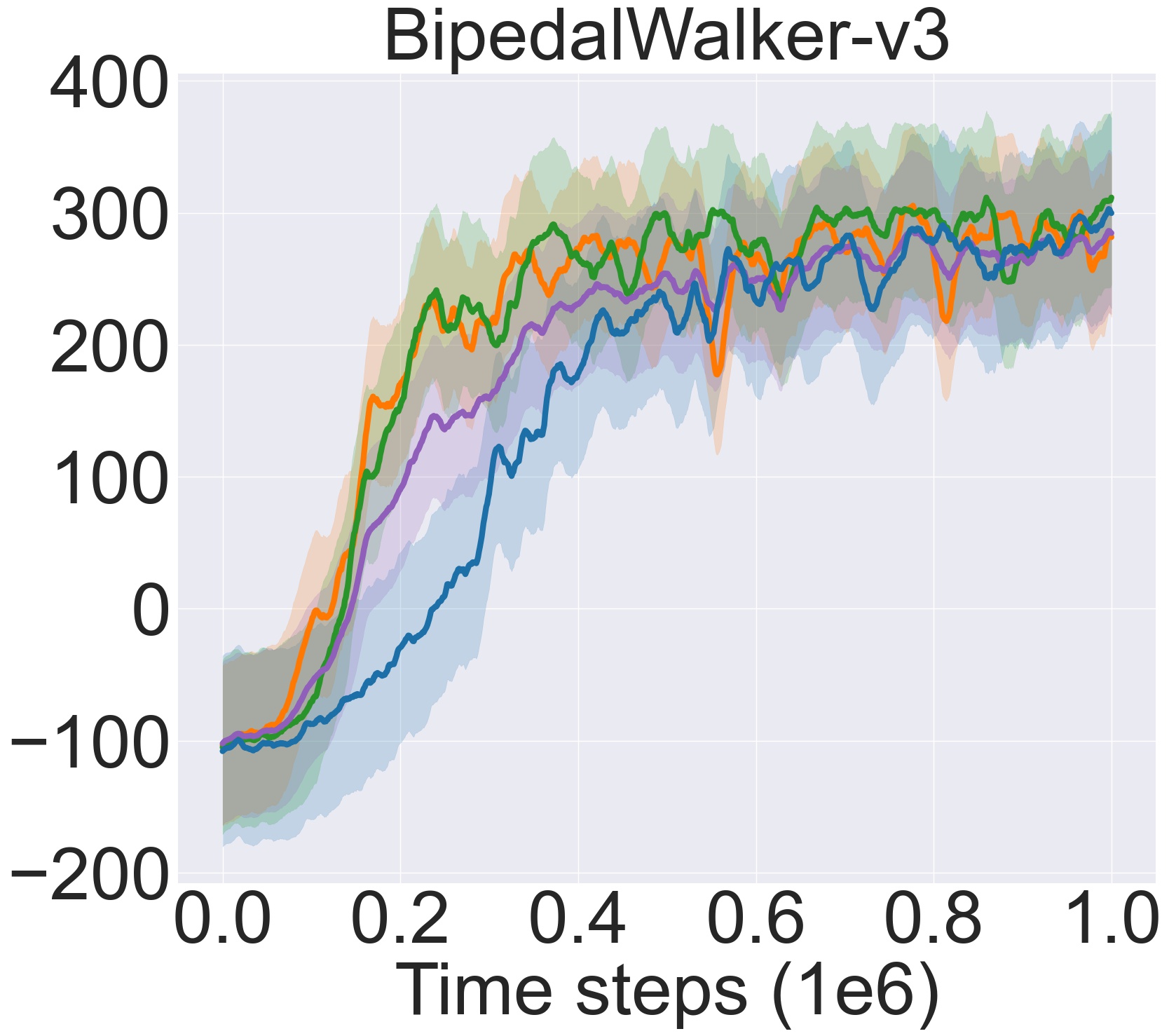}
		\includegraphics[width=1.40in, keepaspectratio]{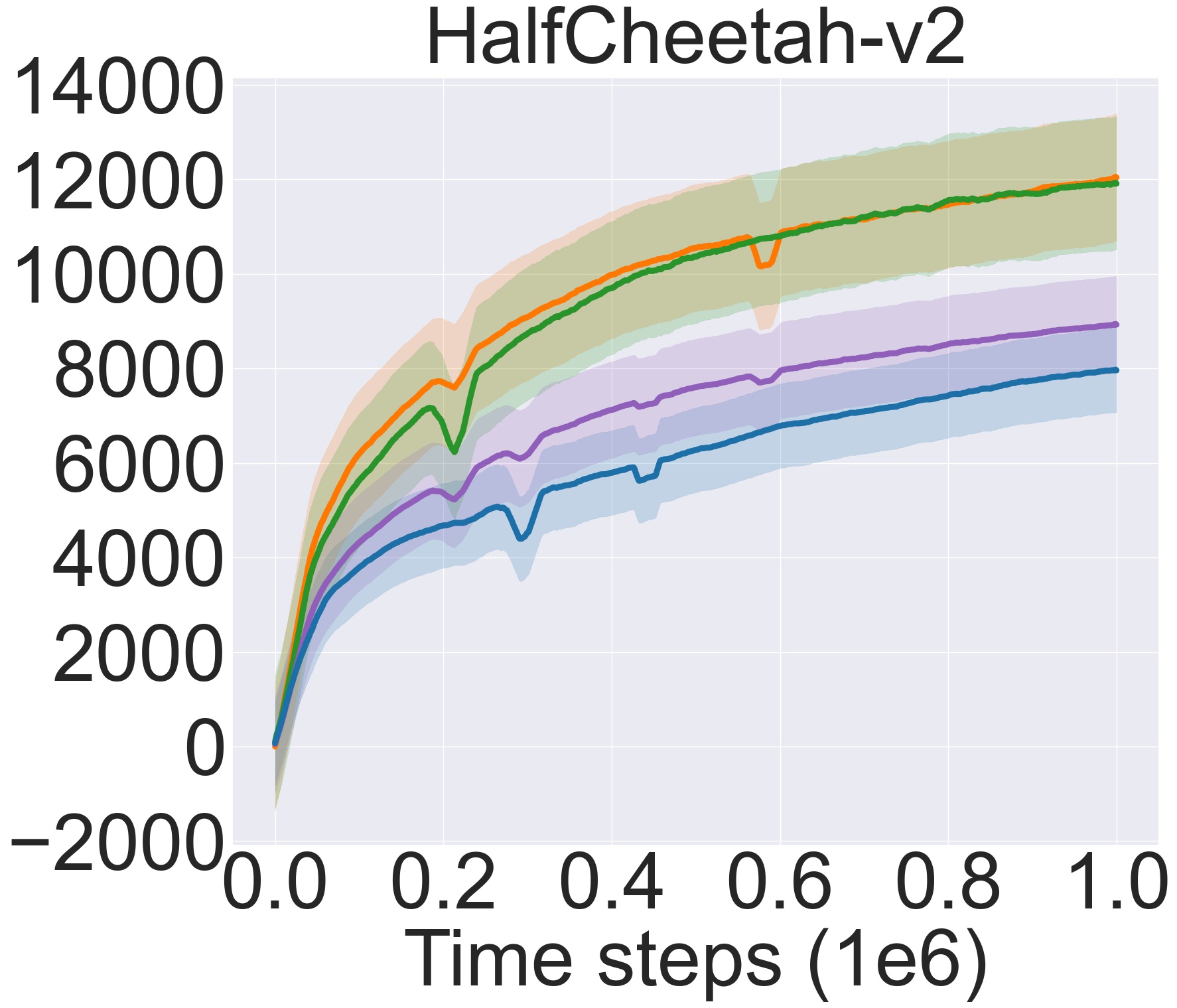}
		\includegraphics[width=1.40in, keepaspectratio]{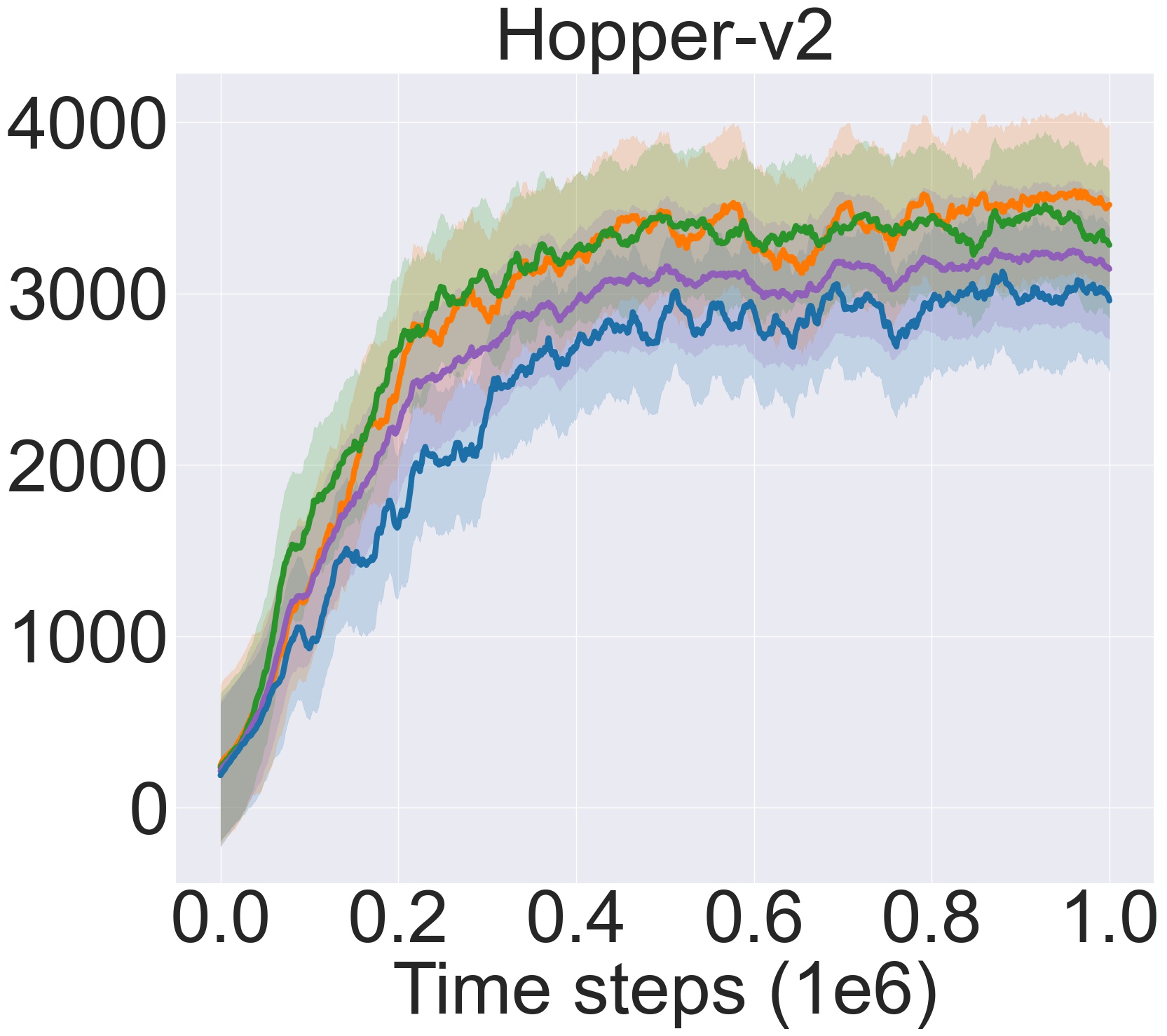}
	} \\
	\subfloat[\textbf{Replay Size:} 1,000,000]{
	    \includegraphics[width=1.40in, keepaspectratio]{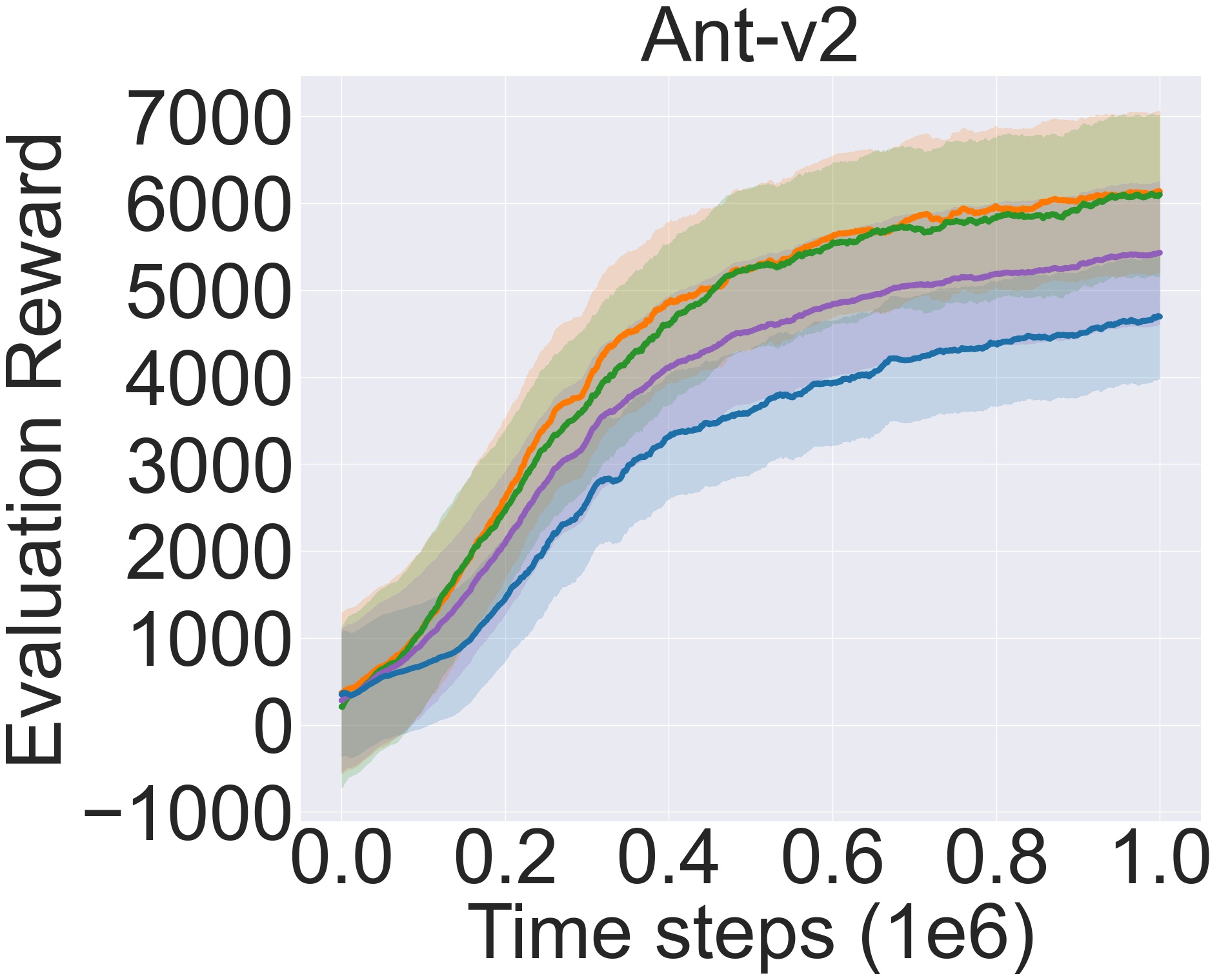}
		\includegraphics[width=1.40in, keepaspectratio]{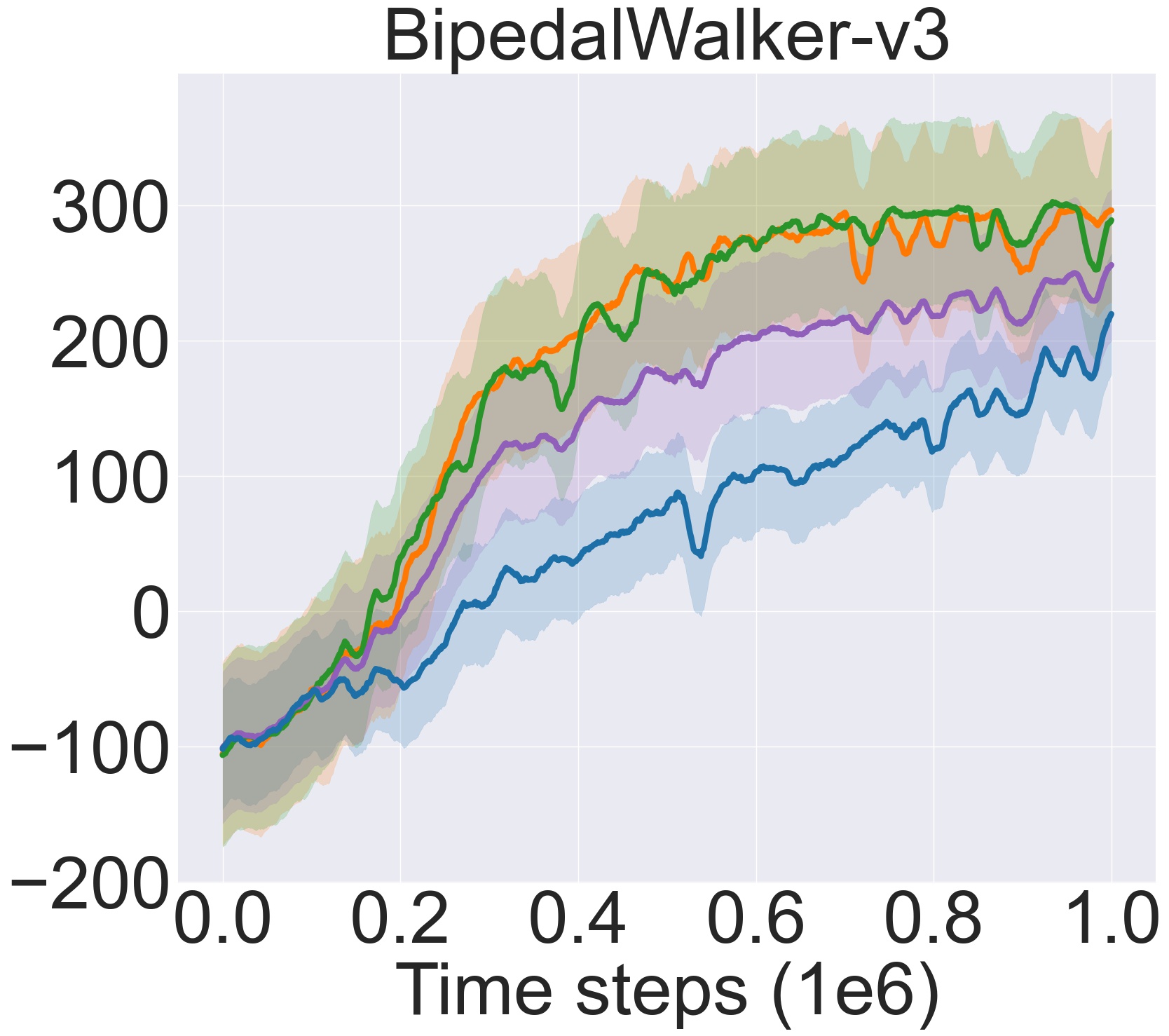} 
		\includegraphics[width=1.40in, keepaspectratio]{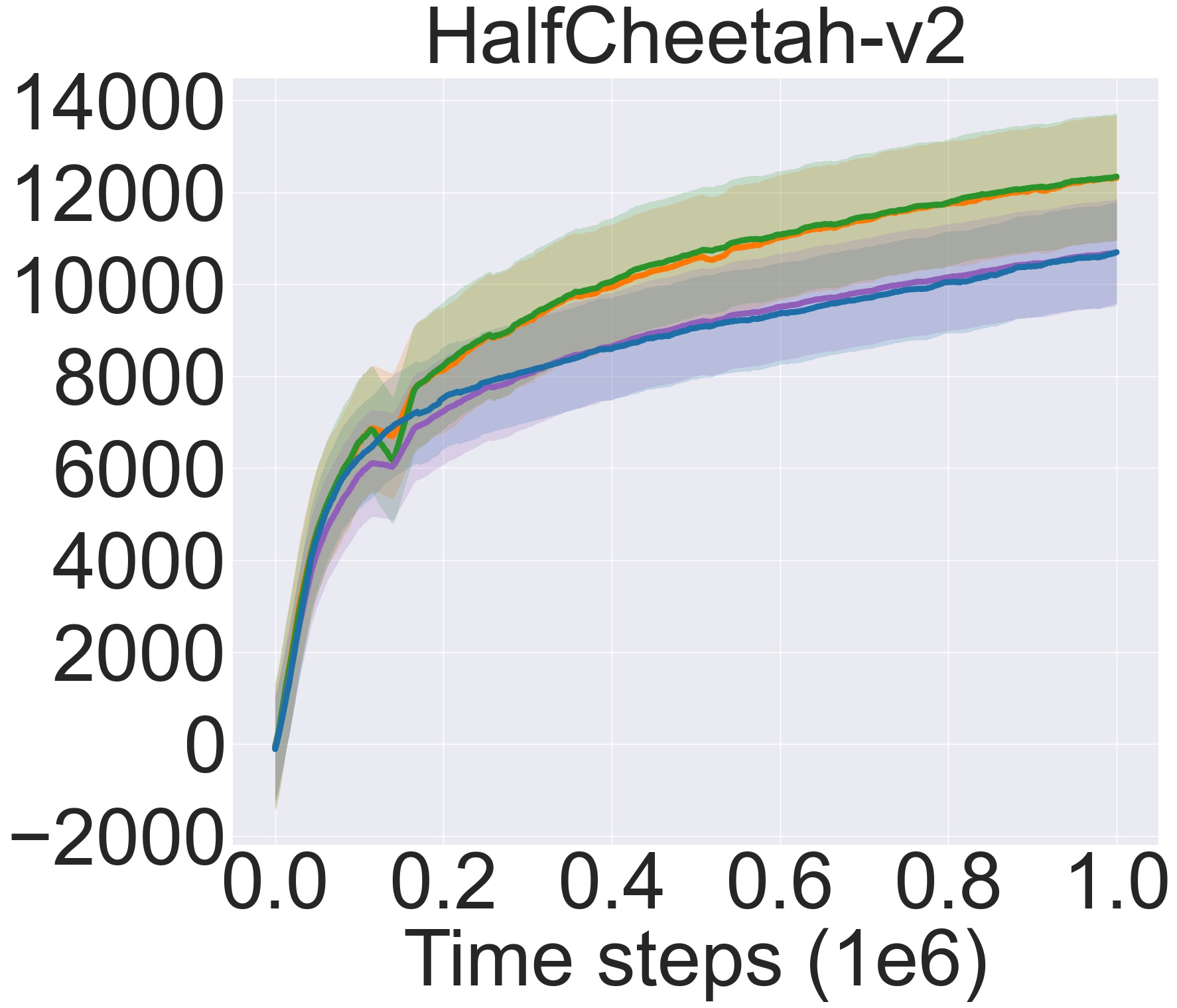}
		\includegraphics[width=1.40in, keepaspectratio]{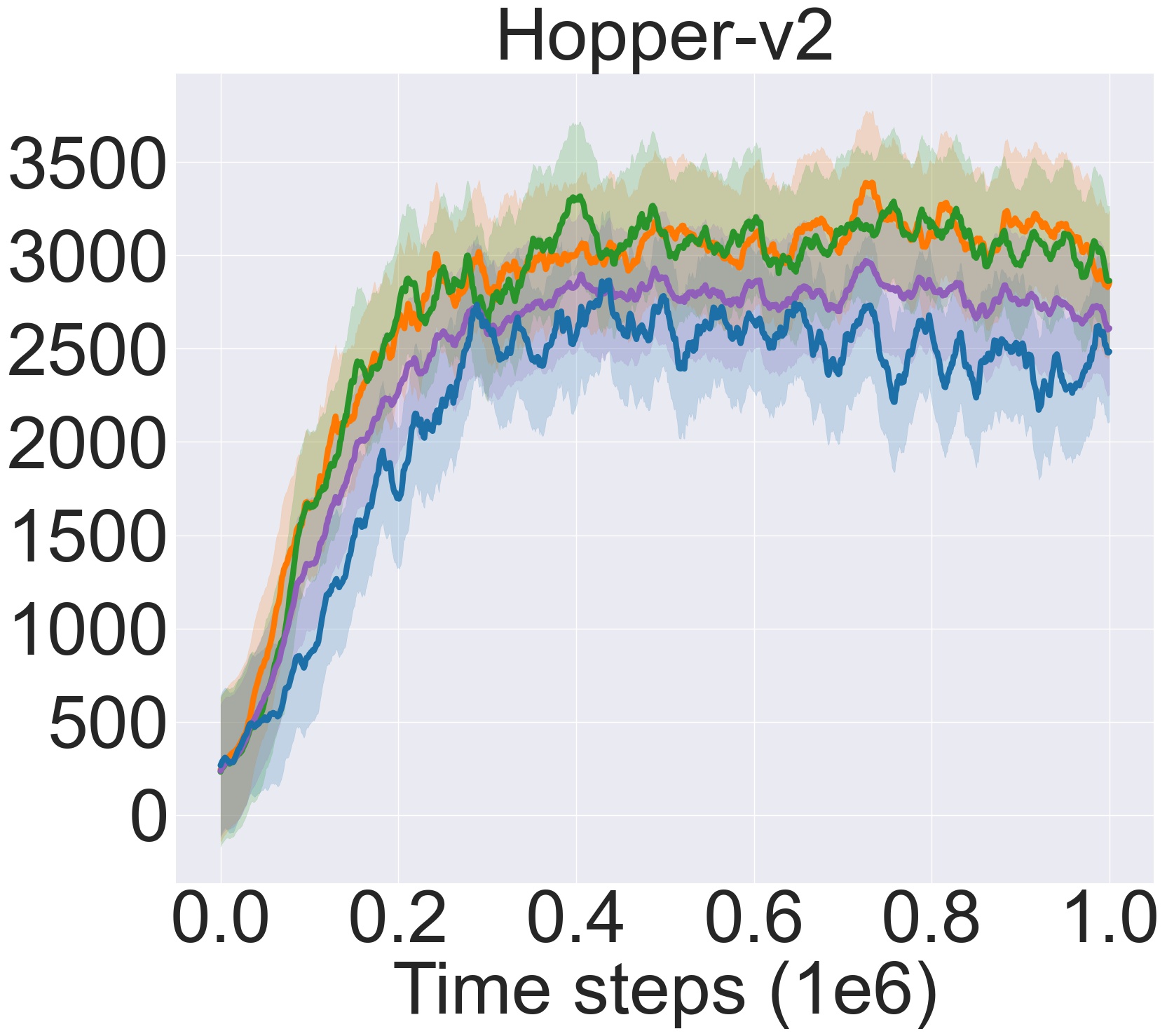}
	} \\
	\subfloat[\textbf{Replay Size:} 100,000]{
	    \includegraphics[width=1.40in, keepaspectratio]{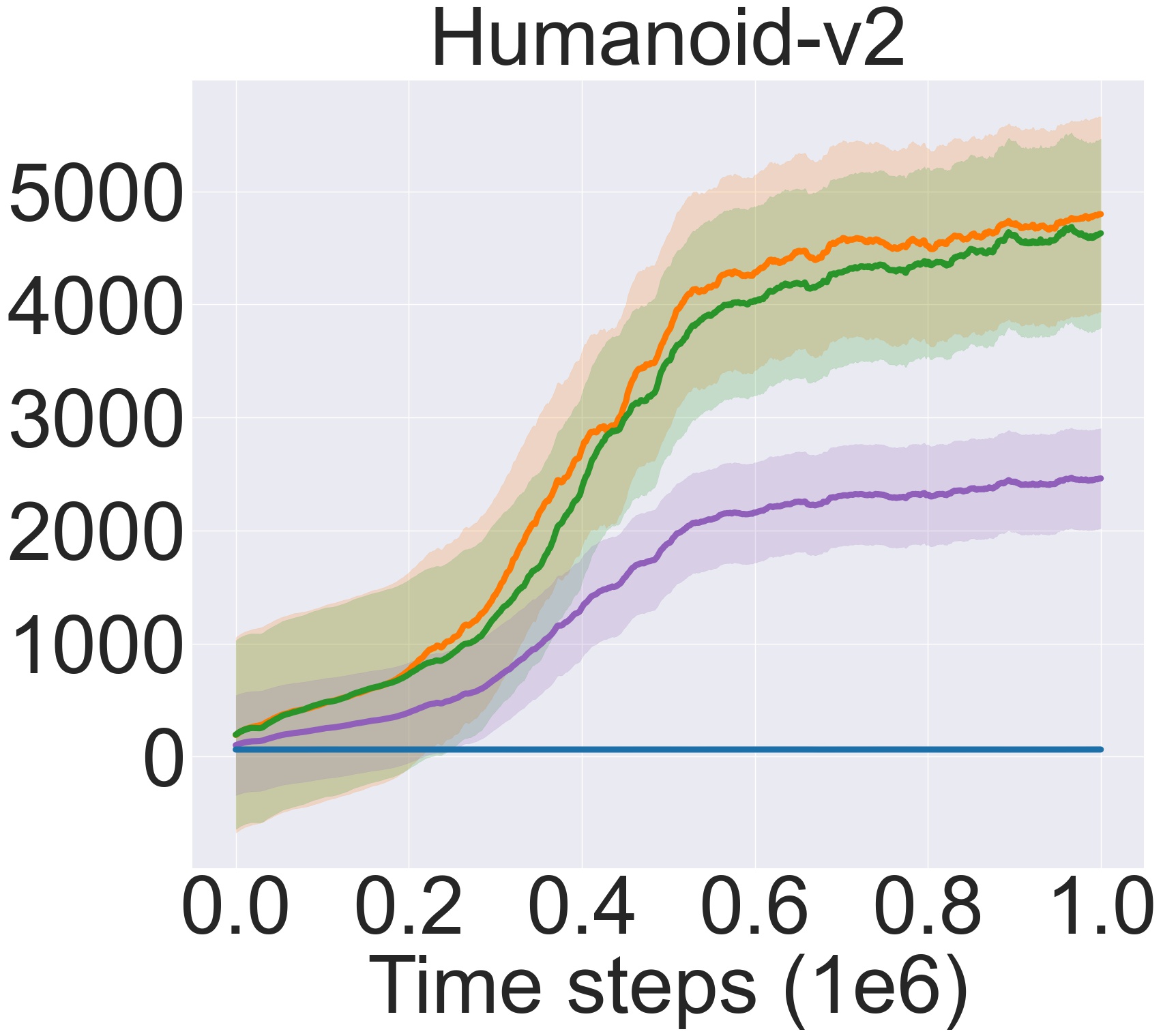}
		\includegraphics[width=1.40in, keepaspectratio]{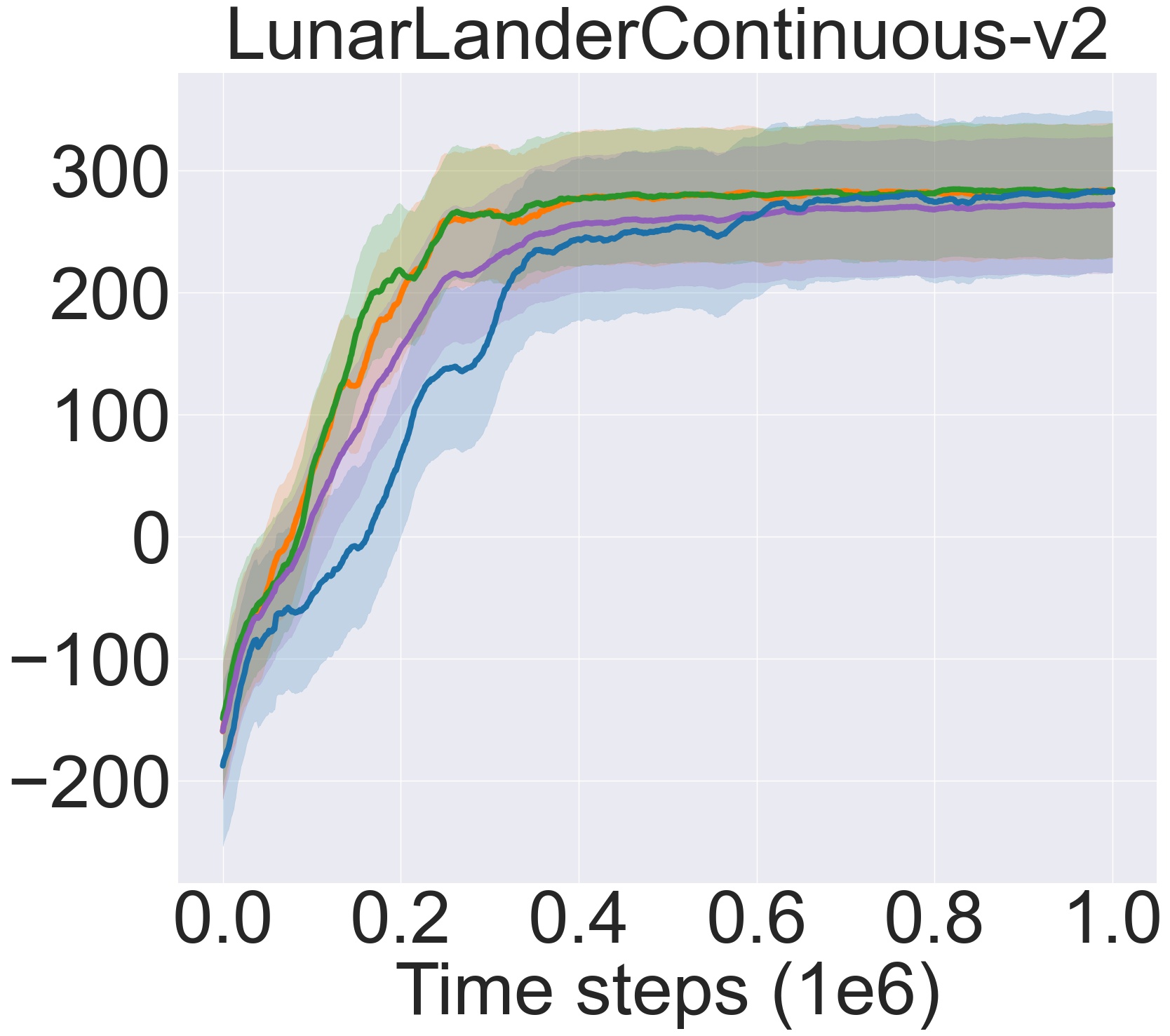}
		\includegraphics[width=1.40in, keepaspectratio]{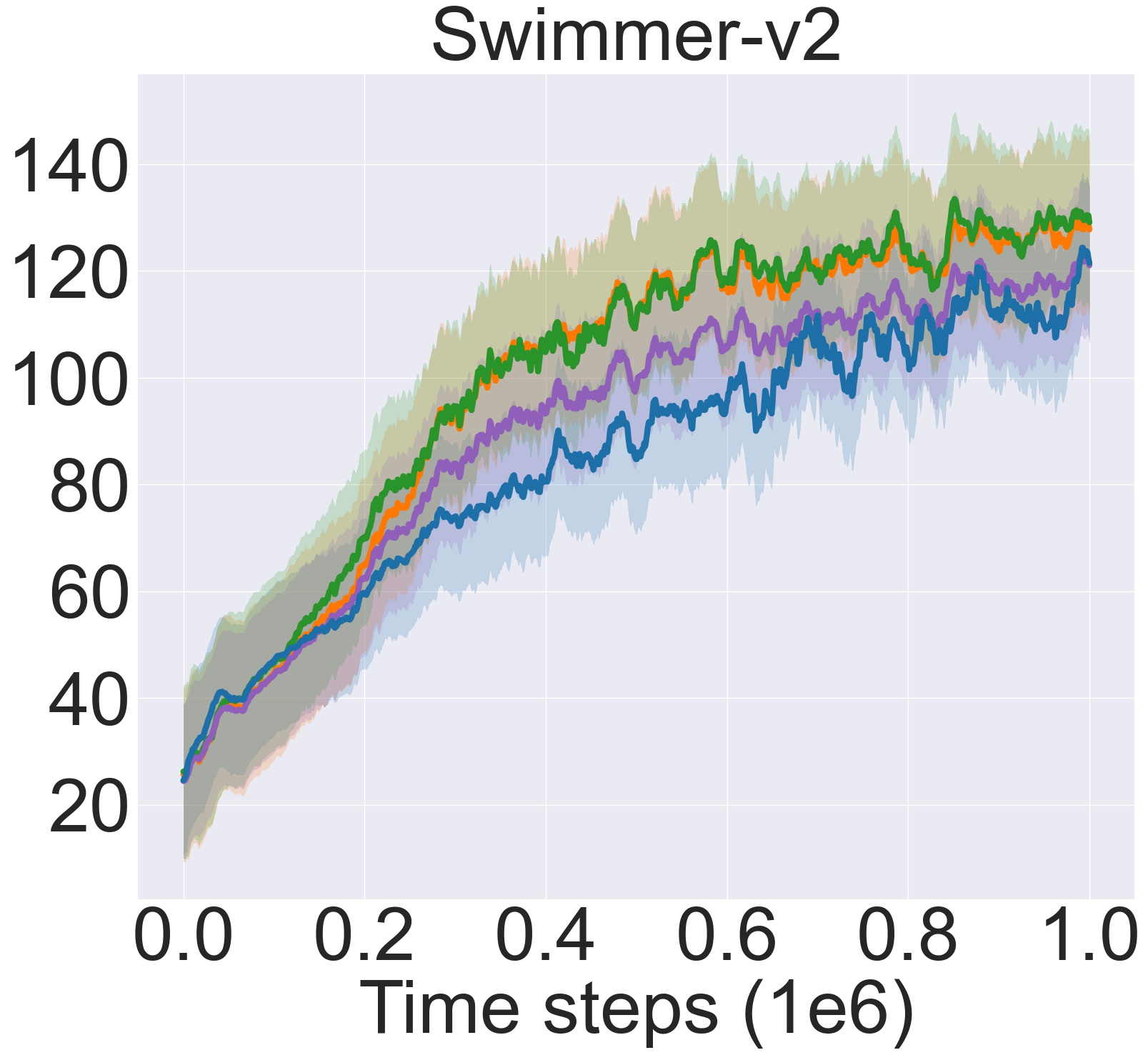}
		\includegraphics[width=1.40in, keepaspectratio]{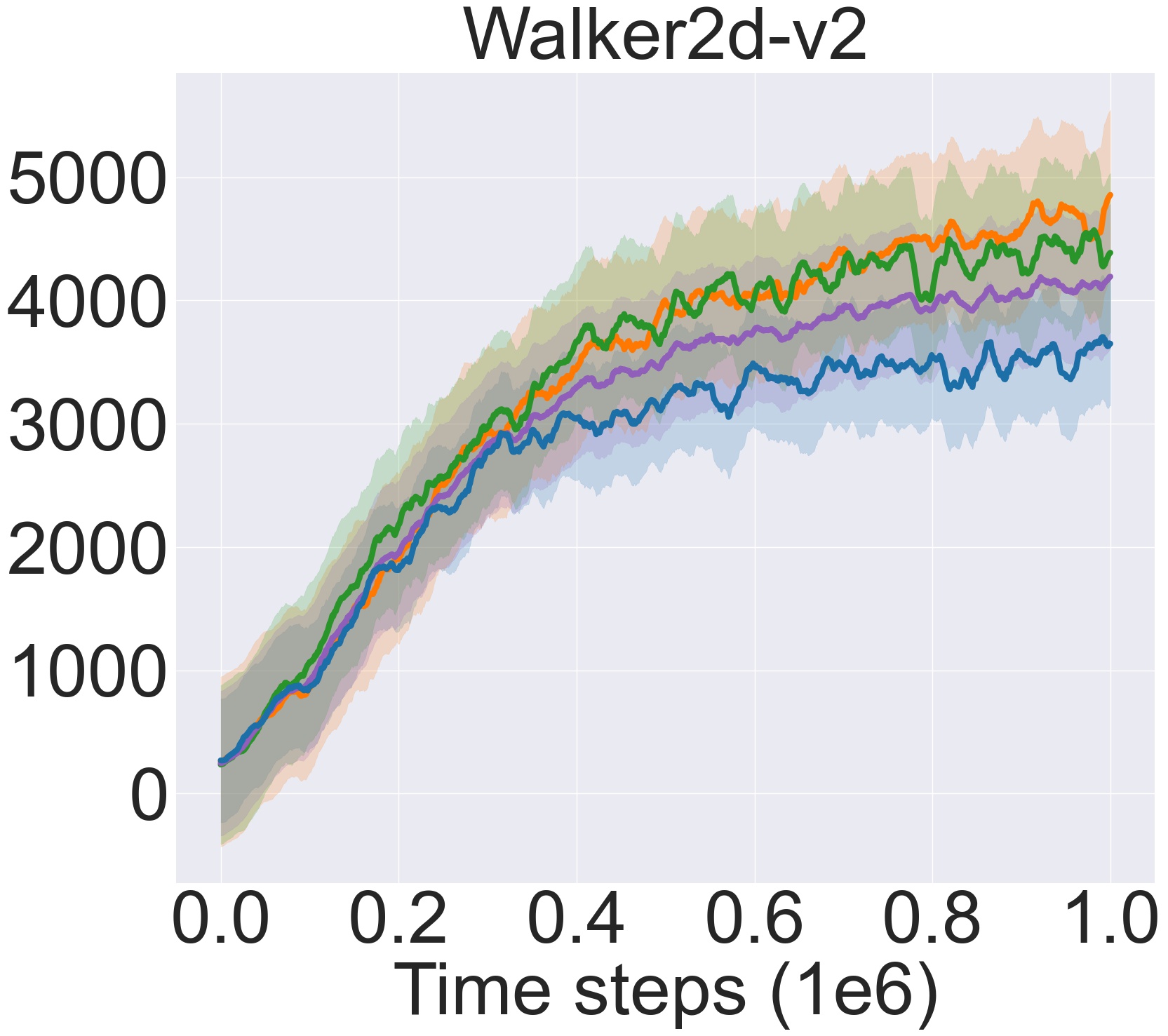}
	} \\
	\subfloat[\textbf{Replay Size:} 1,000,000]{
		\includegraphics[width=1.40in, keepaspectratio]{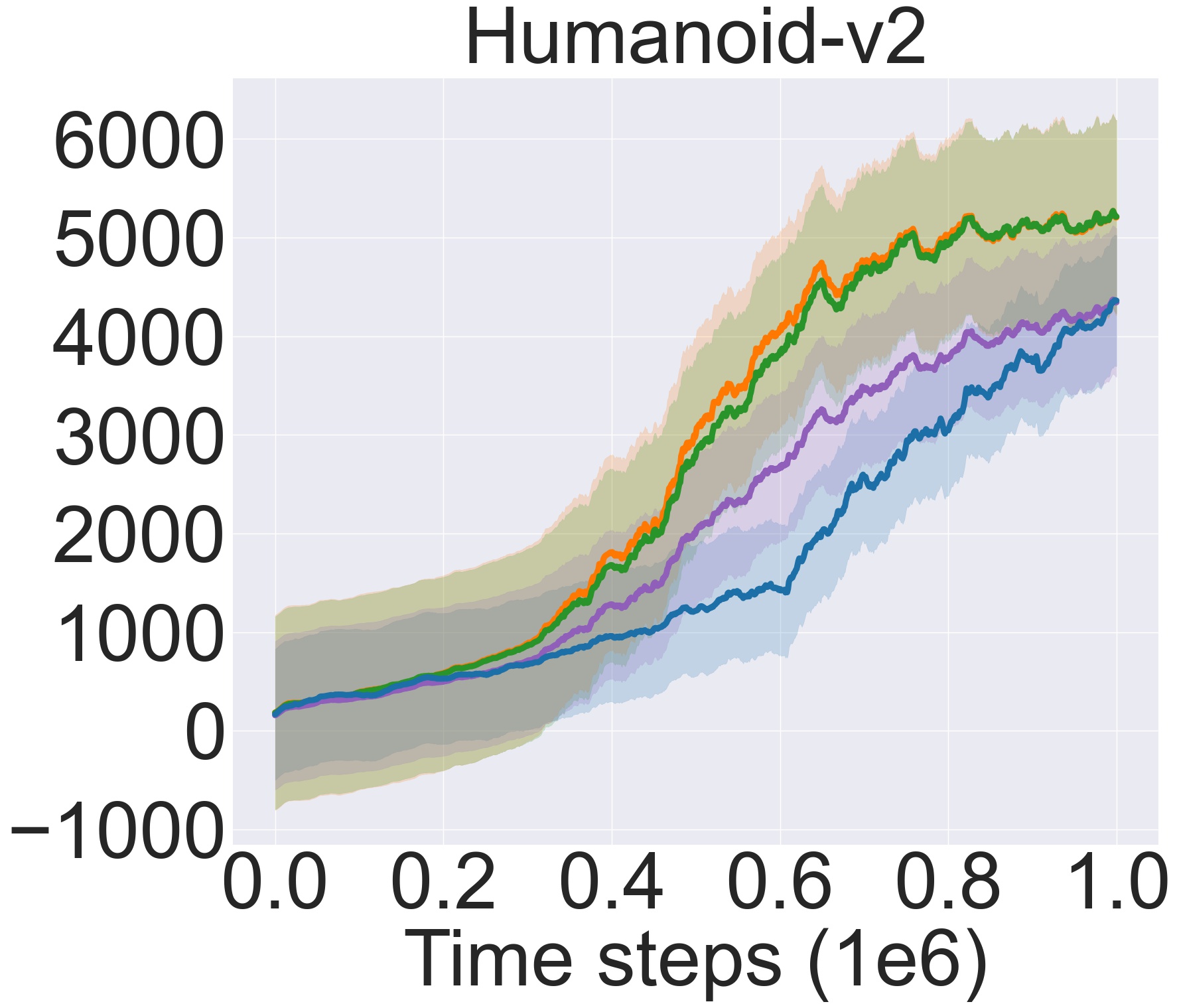}
		\includegraphics[width=1.40in, keepaspectratio]{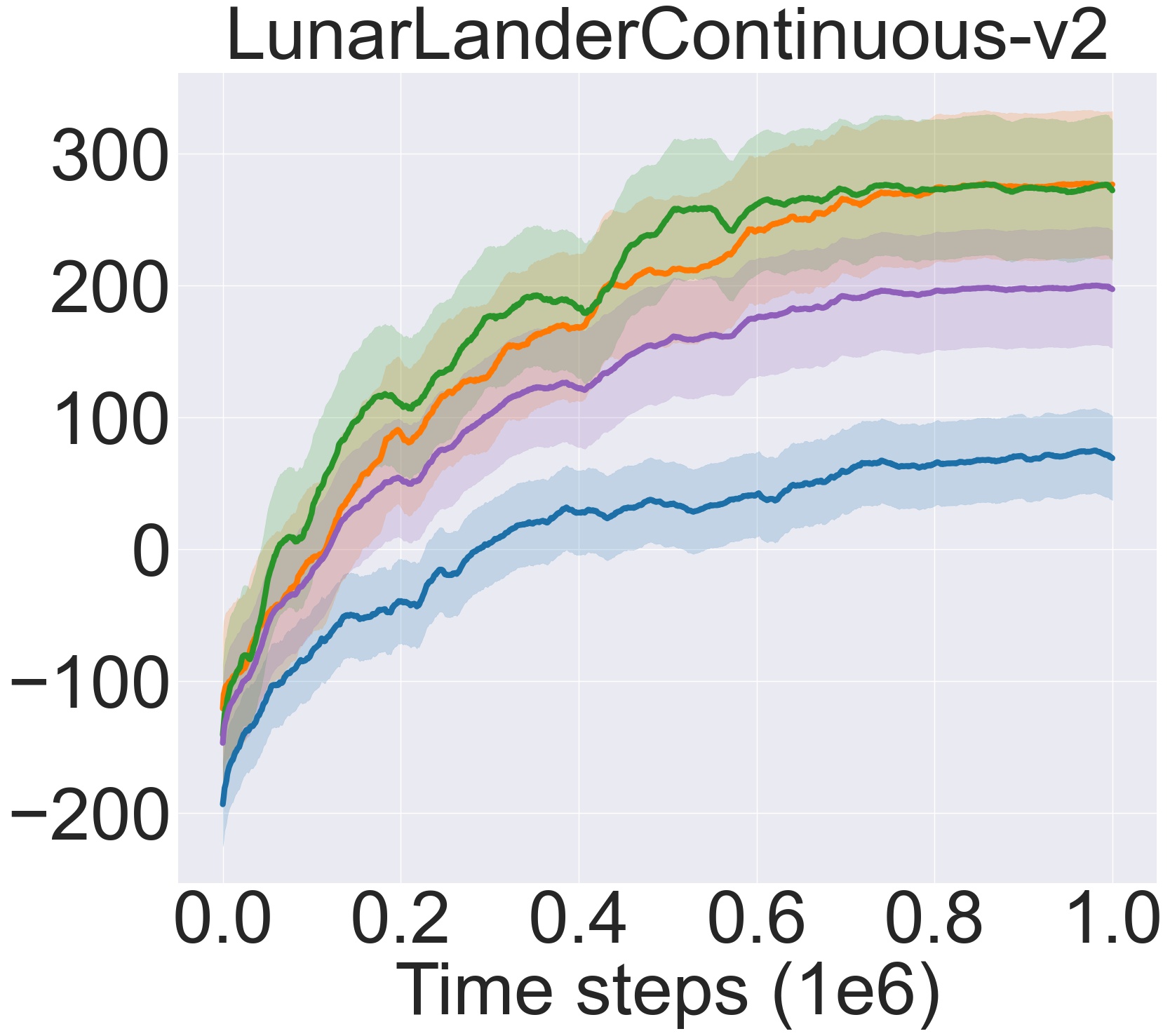}
		\includegraphics[width=1.40in, keepaspectratio]{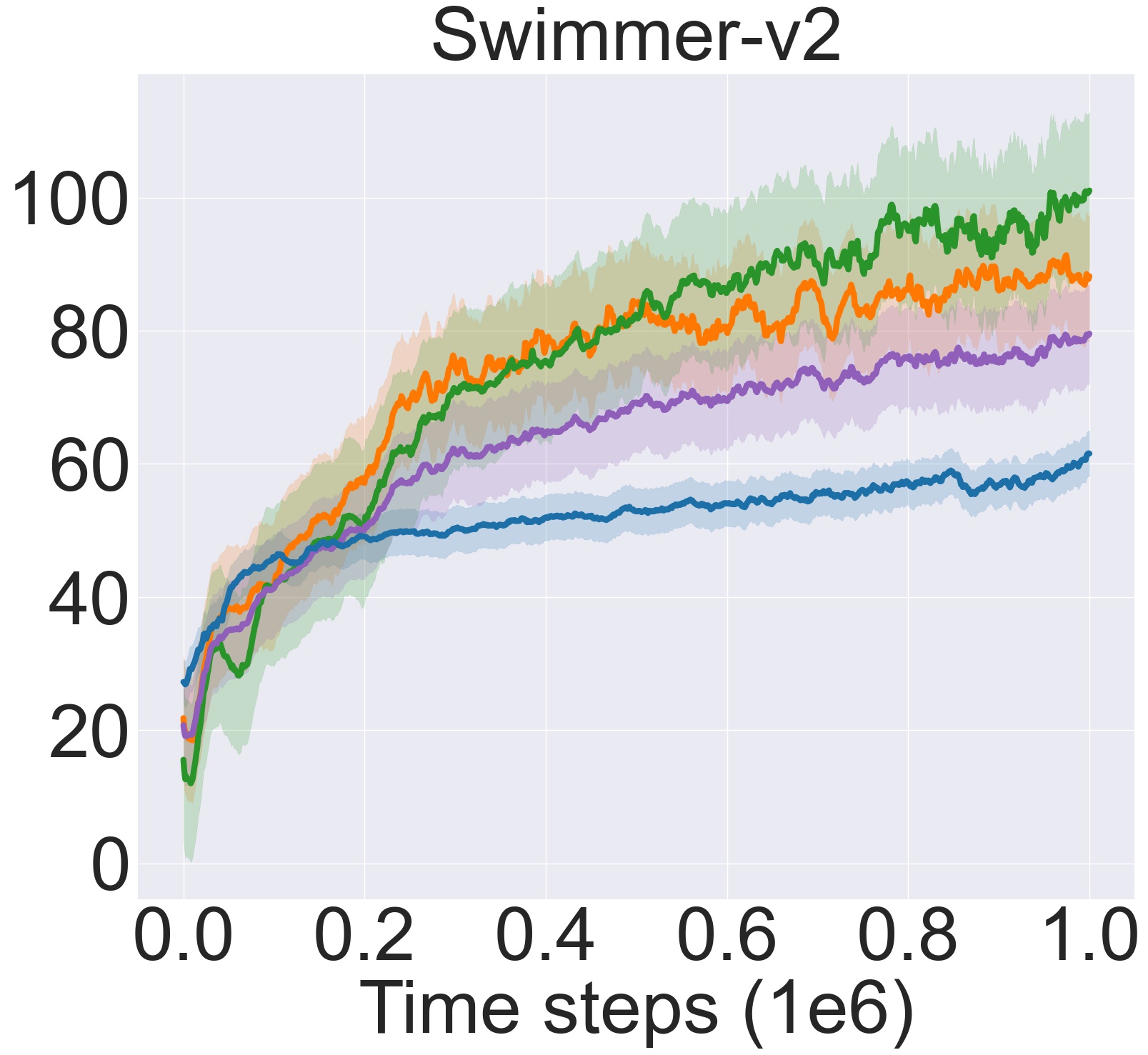}
		\includegraphics[width=1.40in, keepaspectratio]{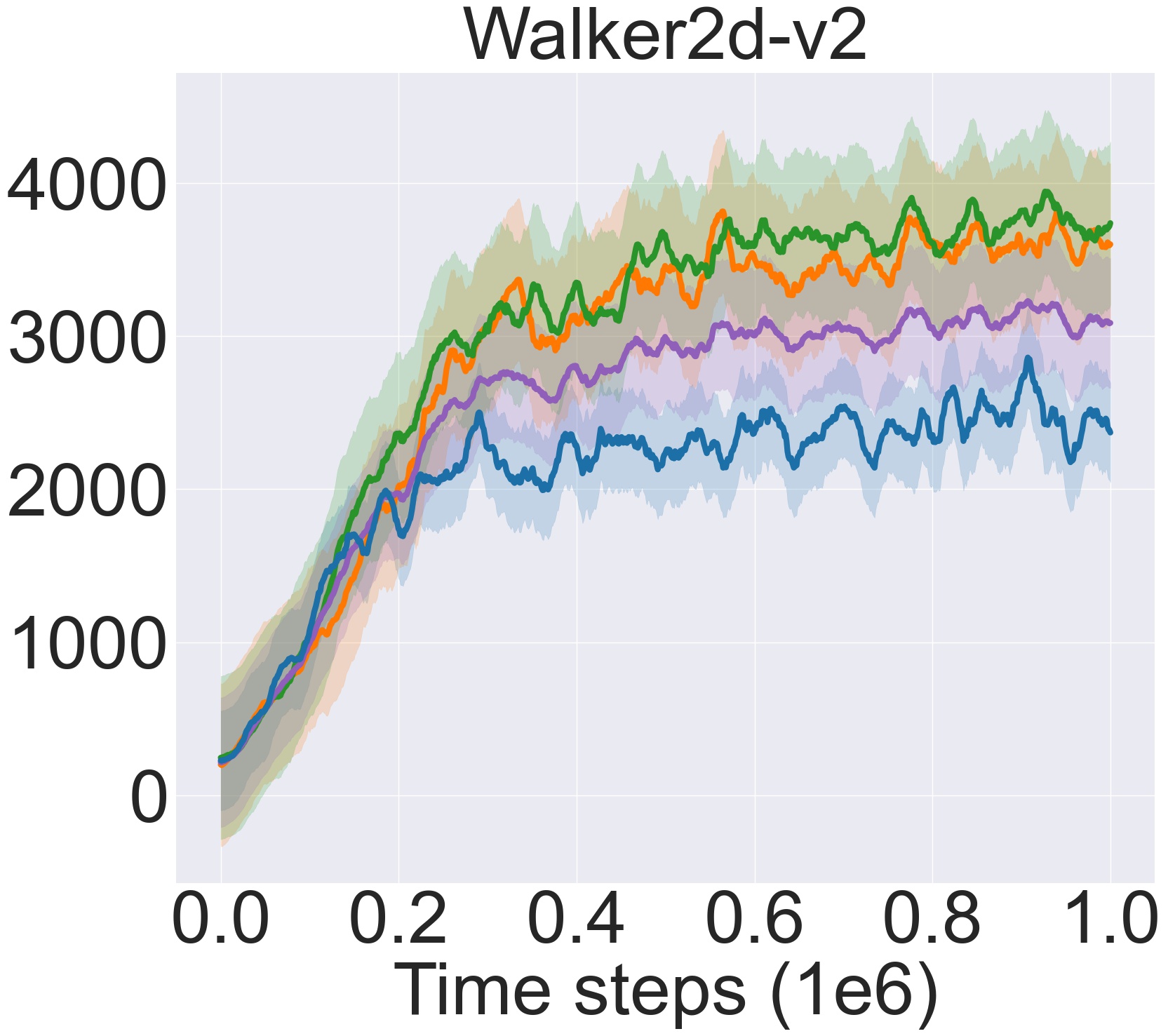}
	}
	\caption{Learning curves for the set of OpenAI Gym continuous control tasks under the SAC algorithm. The shaded region represents half a standard deviation of the average evaluation return over 10 random seeds. A sliding window smoothes curves for visual clarity.}
\end{figure*}

\begin{figure*}[!]
    \centering
   \begin{align*}
        &\text{{\blue} TD3 (single agent)} \quad &&\text{{\orange} TD3 + DASE ($1^{\text{st}}$ agent)} \\
        &\text{\text{{\purple} TD3 + DPD (average of two agents)}} \quad &&\text{{\green} TD3 + DASE ($2^{\text{nd}}$ agent)}
    \end{align*}
	\subfloat[\textbf{Replay Size:} 100,000]{
		\includegraphics[width=1.40in, keepaspectratio]{Figures/CASE_TD3_Ant-v2_Limited_Buffer.jpg}
		\includegraphics[width=1.40in, keepaspectratio]{Figures/CASE_TD3_BipedalWalker-v3_Limited_Buffer.jpg}
		\includegraphics[width=1.40in, keepaspectratio]{Figures/CASE_TD3_HalfCheetah-v2_Limited_Buffer.jpg}
		\includegraphics[width=1.40in, keepaspectratio]{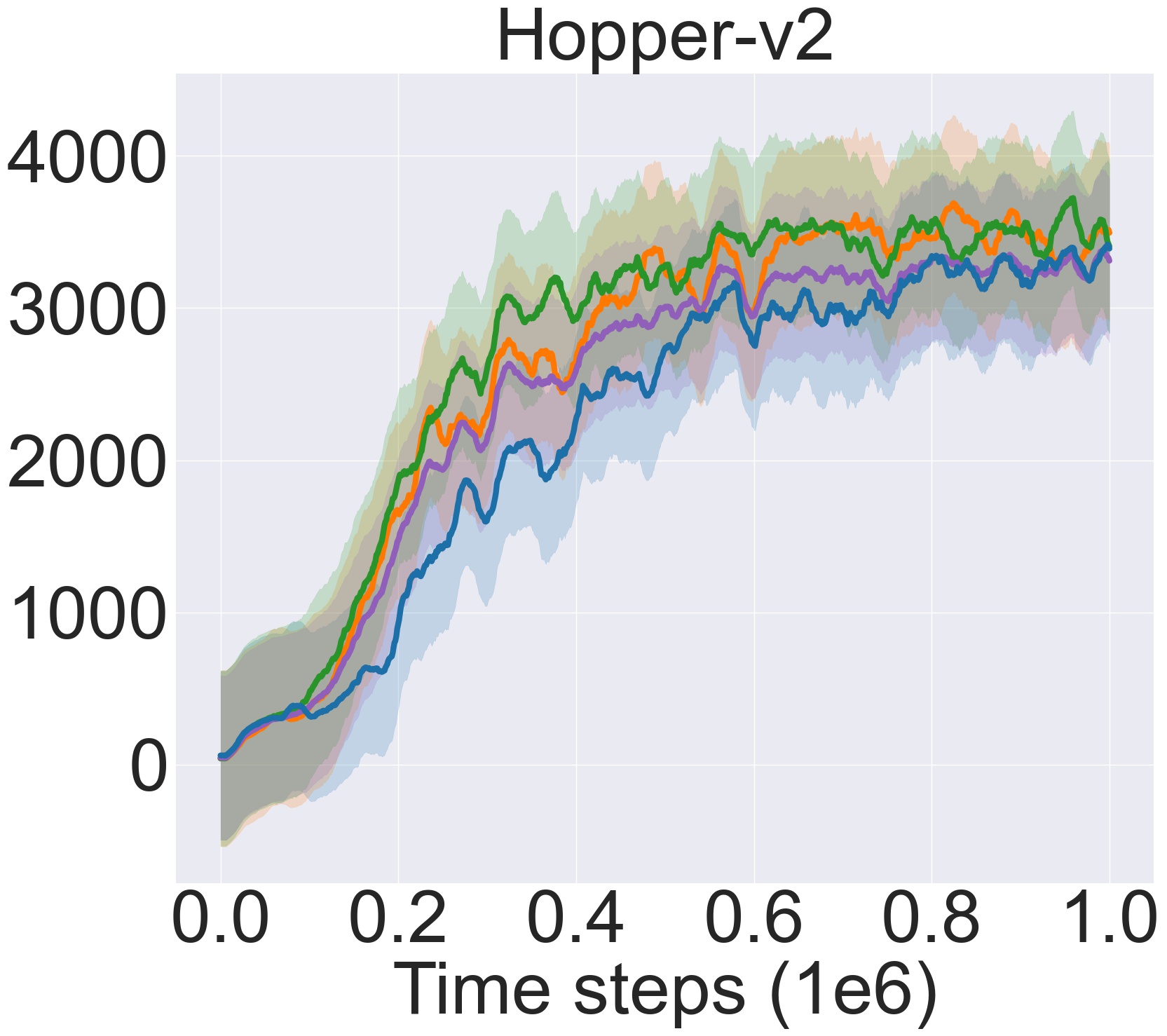}
	} \\
	\subfloat[\textbf{Replay Size:} 1,000,000]{
	    \includegraphics[width=1.40in, keepaspectratio]{Figures/CASE_TD3_Ant-v2_Unlimited_Buffer.jpg}
		\includegraphics[width=1.40in, keepaspectratio]{Figures/CASE_TD3_BipedalWalker-v3_Unlimited_Buffer.jpg} 
		\includegraphics[width=1.40in, keepaspectratio]{Figures/CASE_TD3_HalfCheetah-v2_Unlimited_Buffer.jpg}
		\includegraphics[width=1.40in, keepaspectratio]{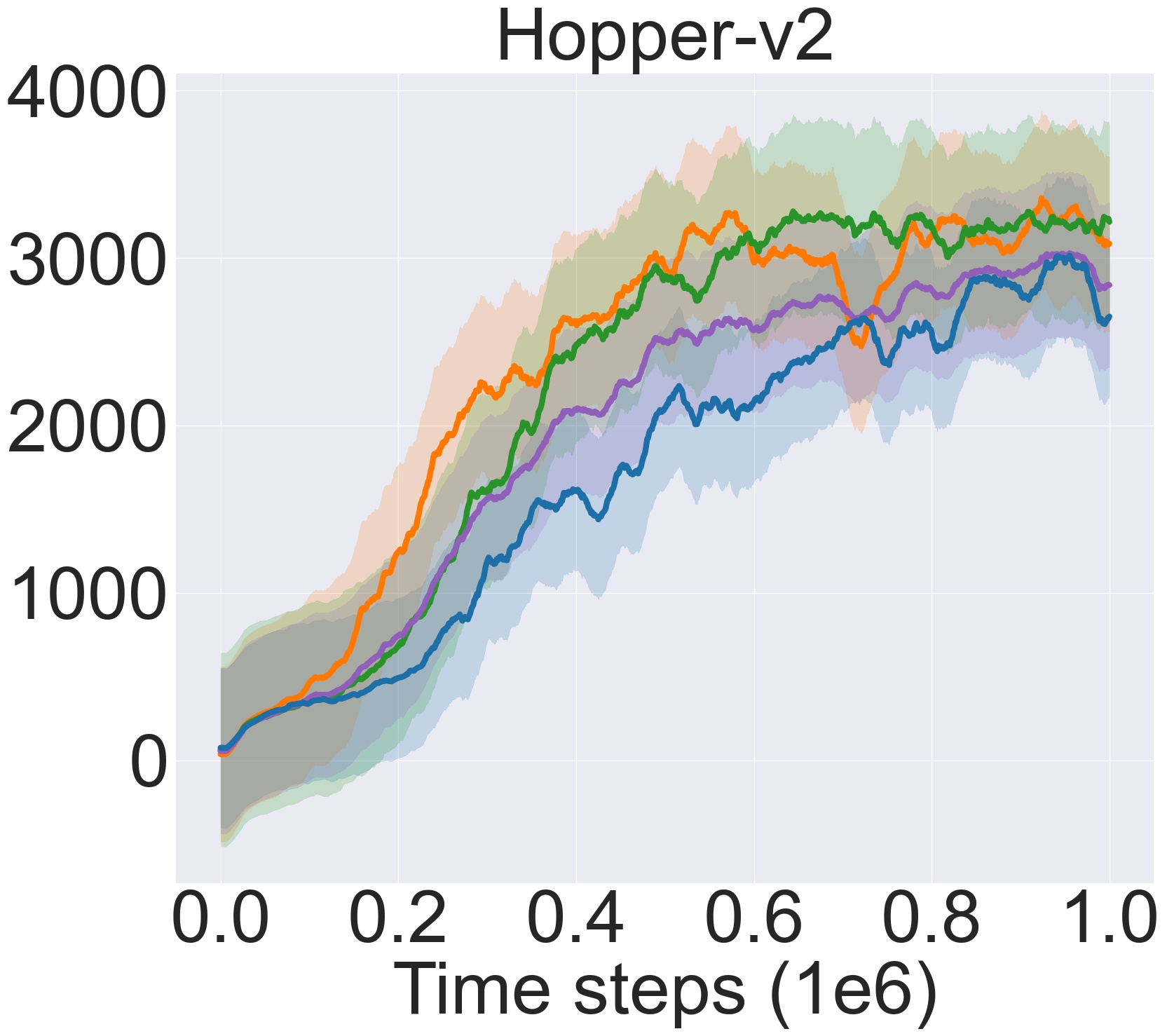}
	} \\
	\subfloat[\textbf{Replay Size:} 100,000]{
	    \includegraphics[width=1.40in, keepaspectratio]{Figures/CASE_TD3_Humanoid-v2_Limited_Buffer.jpg}
		\includegraphics[width=1.40in, keepaspectratio]{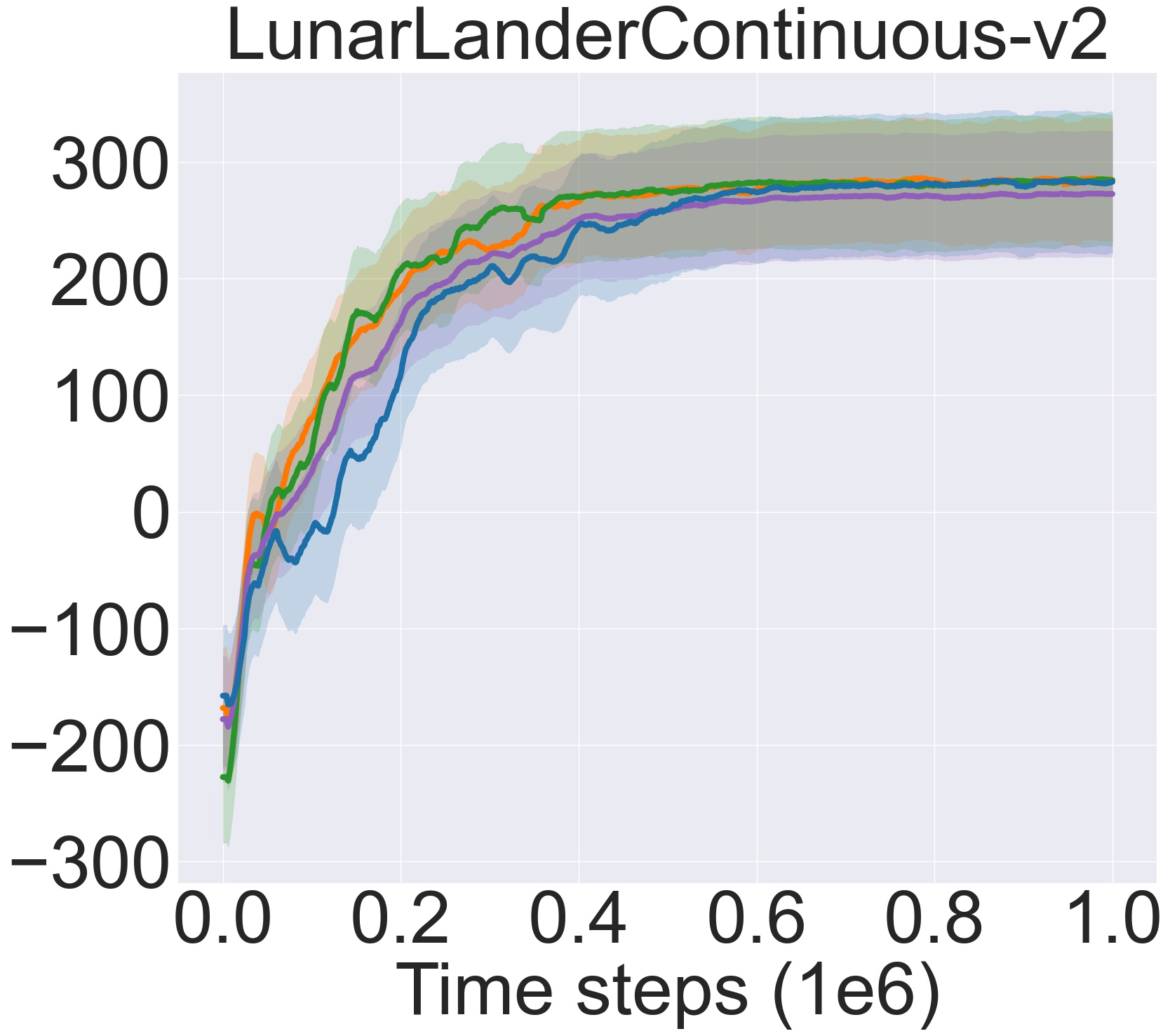}
		\includegraphics[width=1.40in, keepaspectratio]{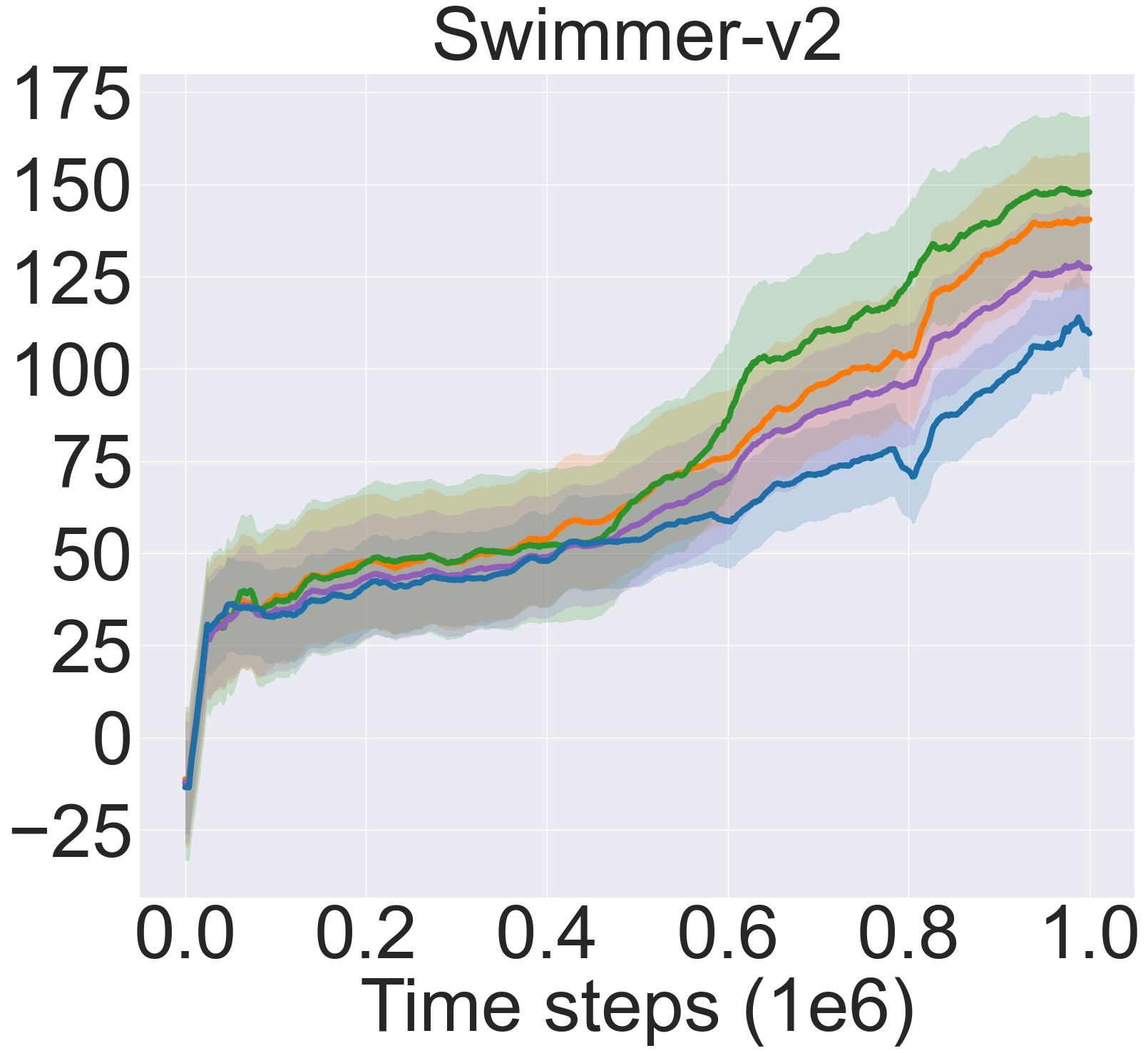}
		\includegraphics[width=1.40in, keepaspectratio]{Figures/CASE_TD3_Walker2d-v2_Limited_Buffer.jpg}
	} \\
	\subfloat[\textbf{Replay Size:} 1,000,000]{
		\includegraphics[width=1.40in, keepaspectratio]{Figures/CASE_TD3_Humanoid-v2_Unlimited_Buffer.jpg}
		\includegraphics[width=1.40in, keepaspectratio]{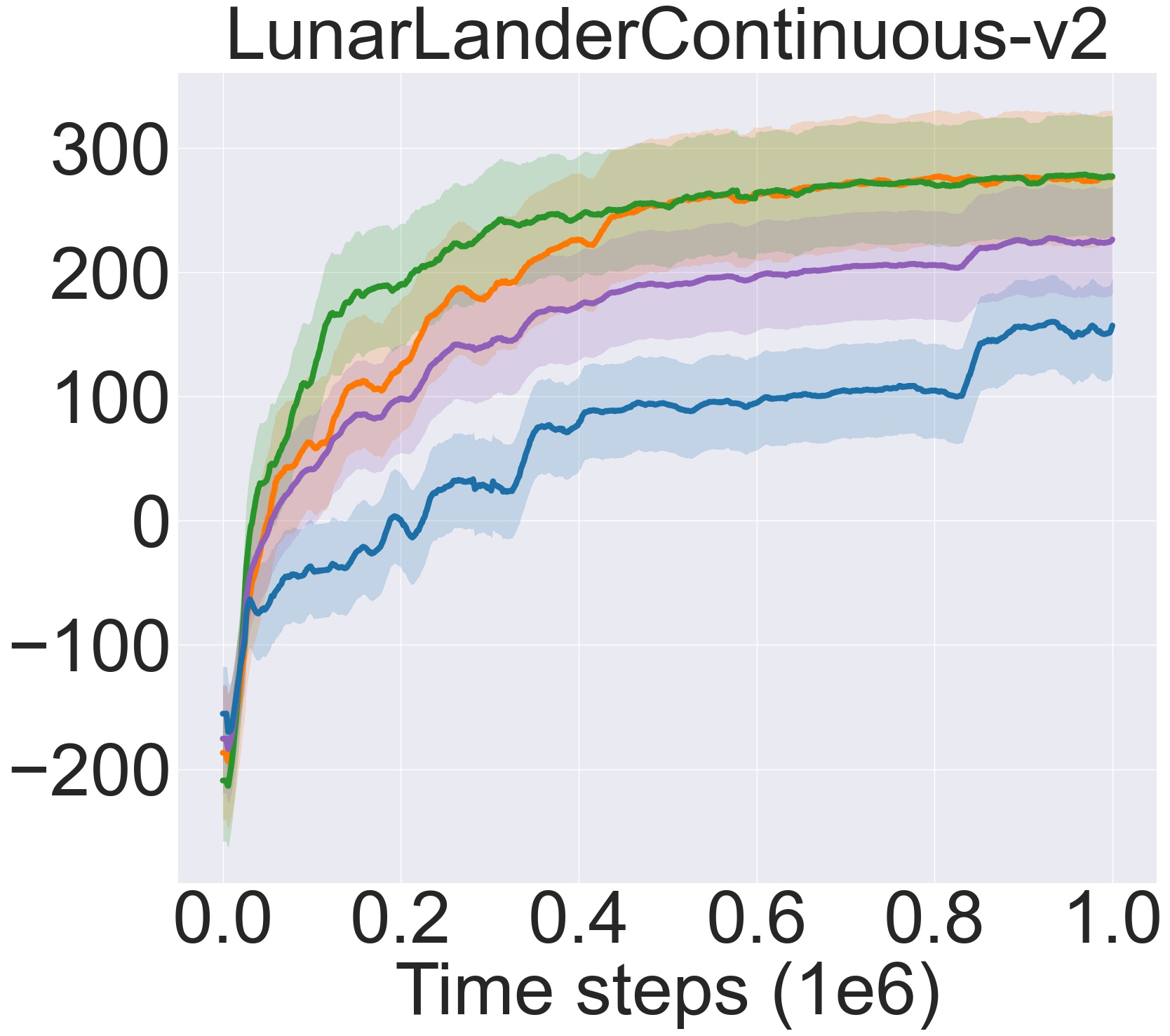}
		\includegraphics[width=1.40in, keepaspectratio]{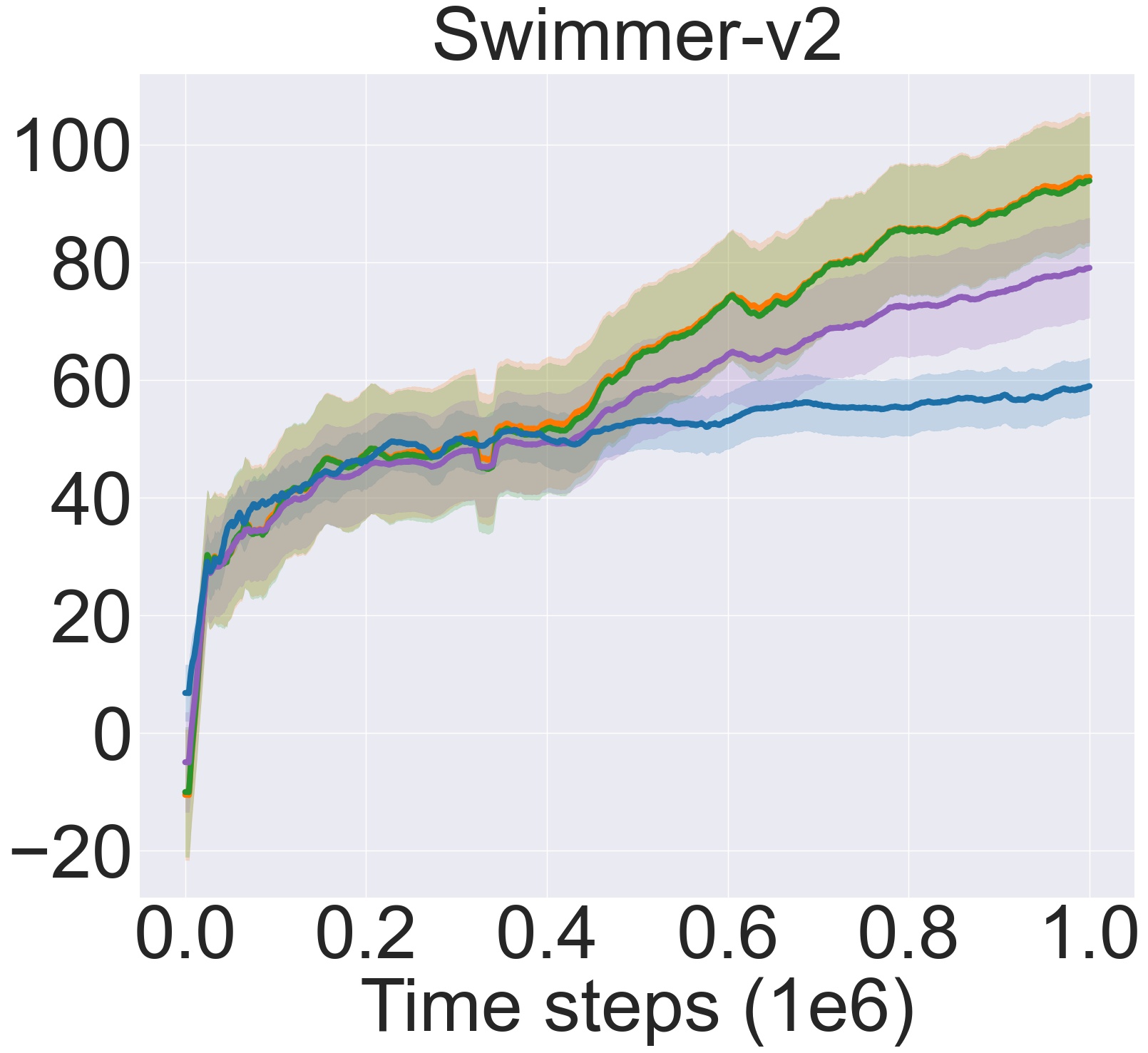}
		\includegraphics[width=1.40in, keepaspectratio]{Figures/CASE_TD3_Walker2d-v2_Unlimited_Buffer.jpg}
	}
	\caption{Learning curves for the set of OpenAI Gym continuous control tasks under the TD3 algorithm. The shaded region represents half a standard deviation of the average evaluation return over 10 random seeds. A sliding window smoothes curves for visual clarity.}
\end{figure*}

\clearpage

\section{Computational Complexity Results}
\label{sec:comp_complexity}

Table \ref{table:comp_complexity_limited} contains the average memory required and time to run baseline algorithms and DASE overall environments, seeds, and the two settings of experience replay buffer~\cite{experience_replay}. We report the computational complexity results when the DASE agents are distributed to multi-threads and multi-processes.

If multi-threads are used, the memory allocation for RAM and GPU does not change as the agents are trained with the same memory allocation for a single agent. For multi-processes, the memory allocation increases for both RAM and GPU. We find that GPU memory allocation is doubled, and RAM allocation is slightly less than the double of the RAM requirement for a single agent due to the shared experience replay~\cite{experience_replay}. We expect the run time of DASE with threads to be approximately the quadruple of the baseline run time and for processes to be double. However, the run time increase for both distribution types is greater than the initial expectations. Such a significant increase is due to the simulation benchmarks we use to conduct our experiments. These simulations run on the CPU. Therefore, latency due to the data marshaling increases the run time and becomes a dominant factor when multiple agents are employed to explore the same environment.

Nonetheless, our algorithm can attain optimal evaluation returns while baseline algorithms may suffer from divergence or be stuck at suboptimal policies. Hence, there is a trade-off between our architecture's computational complexity and performance. Such a trade-off should be carefully handled by considering the environment complexity, such as determining the number of agents by considering how challenging the environment is to be explored and solved or to use threads or processes.

\begin{table}[htb]
\caption{Average memory requirement and run time for the baseline algorithms with and without DASE when agents are distributed to multi-threads and multi-processes. Run time is computed over 1 million time steps. Required memories are in megabytes and running times are in minutes. \label{table:comp_complexity_limited}}
\vskip 0.15in
\begin{center}
\begin{small}
\begin{sc}
\begin{tabular}{lcccc}
\toprule
\textbf{Algorithm} &  \textbf{Memory (RAM)} & \textbf{Memory (GPU)} & \textbf{Run Time} & \textbf{Run Time Increase (\%)} \\
\midrule
    DDPG & 4056 & 1271 & 81.75 & - \\
    SAC & 4087 & 1281 & 99.20  & - \\
    TD3 & 4012 & 1273 & 95.67  & - \\
    DDPG + DASE (thread) & 4102 & 1283 & 401.38 & 491\% \\
    SAC + DASE (thread) & 4118 & 1283 & 497.25 & 501\% \\
    TD3 + DASE (thread) & 4134 & 1287 & 465.41 & 486\% \\
    DDPG + DASE (process) & 7862 & 2542 & 216.98 & 265\% \\
    SAC + DASE (process) & 7924 & 2562 & 269.55 & 271\% \\
    TD3 + DASE (process) & 7774 & 2546 & 254.48 & 266\% \\
\bottomrule
\end{tabular}
\end{sc}
\end{small}
\end{center}
\vskip -0.1in
\end{table}

\newpage

\section{Ablation Studies}
\label{sec:ab_studies}

\begin{table}[htb]
\caption{Ablation results when the replay size is 100,000. Bold values represent the maximum for each environment. Scores represent the average return over all agents of DASE.}
\label{table:ab_limited}
\vskip 0.15in
\begin{center}
\begin{small}
\begin{sc}
\begin{tabular}{lcccc}
\toprule
\textbf{Method}  & \textbf{Ant} & \textbf{HalfCheetah} & \textbf{LunarLanderContinuous} & \textbf{Swimmer} \\
\midrule
    DASE ($K = 2$) & 5123.03 & 12375.37 & 283.95 & 144.28 \\
    DASE ($K = 5$) & 5490.69 & 12764.66 & 306.43 & 155.38 \\
    DASE ($K = 10$) & \textbf{5876.86} & \textbf{13624.52} & \textbf{327.29} & \textbf{164.64} \\
    KL-DASE & 4786.68 & 11036.75 & 236.26 & 119.42 \\
    ES-DASE & 817.75 & 8328.72 & -7.57 & 65.63 \\
\bottomrule
\end{tabular}
\end{sc}
\end{small}
\end{center}
\vskip -0.1in
\end{table}

\begin{table}[htb]
\caption{Ablation results when the replay size is 1,000,000. Bold values represent the maximum for each environment. Scores represent the average return over all agents of DASE.}
\label{table:ab_unlimited}
\vskip 0.15in
\begin{center}
\begin{small}
\begin{sc}
\begin{tabular}{lcccc}
\toprule
\textbf{Method}  & \textbf{Ant} & \textbf{HalfCheetah} & \textbf{LunarLanderContinuous} & \textbf{Swimmer} \\
\midrule
    DASE ($K = 2$) & 5434.32 & 12086.72 & 277.20 & 93.81 \\
    DASE ($K = 5$) & 5921.87 & 12826.30 & 297.88 & 100.43 \\
    DASE ($K = 10$) & \textbf{6217.24} & \textbf{13674.15} & \textbf{318.34} & \textbf{107.22} \\
    KL-DASE & 5094.35 & 11075.49 & 227.78 & 77.25 \\
    ES-DASE & 865.57 & 8354.15 & 0.34 & 64.17 \\
\bottomrule
\end{tabular}
\end{sc}
\end{small}
\end{center}
\vskip -0.1in
\end{table}

We perform ablation studies to analyze the effects of the components: the usage of JS-divergence, number of agents, and off-policy correction (DPS). For this, we compare ablation over DASE under $K = \{2, 5, 10\}$, DASE with KL-divergence (KL-DASE), DASE without off-policy correction (no DPS is applied), and only with experience sharing (ES-DASE). As DASE is orthogonal to any off-policy deterministic policy gradient algorithm, ablation studies are performed under the TD3 algorithm~\cite{td3}. Ablation results are given in Table \ref{table:ab_limited} and \ref{table:ab_unlimited}, where average return over the last 10 evaluations over 10 trials of 1 million time steps is reported. Learning curves for the ablation studies can be found in our repository\footref{our_repo}.

The complete algorithm outperforms every combination except when the number of agents increases. As the off-policy samples by other agents are safely corrected, the increasing number of agents yields more diverse exploration and thus slightly higher returns and faster convergence. However, training more agents linearly increases the training duration. We then replace the JS-divergence with KL-divergence (KL-DASE) in DPS. We obtain higher returns with JSD due to the symmetric measurement of the policies. Directed similarity measurement slightly degrades the algorithm's performance as two policies in the same environment cannot be completely distinct. Finally, we remove the off-policy correction in learning from other agents' experiences. The algorithm cannot converge and exhibits randomness in the action selection. Our theoretical approach is reflected empirically, that is, extrapolation error~\cite{off_policy} prevents agents from converging high evaluation returns due to the mismatch between the distributions under the agent's policy and samples collected by other agents.

\end{document}